\documentclass[a4paper,lineno]{paper}

\usepackage{hyperref}
\usepackage{amsmath,amssymb,amsfonts,amsthm}
\usepackage{mathrsfs}
\usepackage{cleveref}
\usepackage{acro}
\usepackage{float}
\usepackage{graphicx}
\usepackage{subcaption}
\usepackage[squaren,thickqspace]{SIunits}
\usepackage{booktabs}
\usepackage{placeins}
\usepackage{paralist}

\newtheorem{theorem}{Theorem}

\newtheorem{proposition}[theorem]{Proposition}

\newtheorem{claim}[theorem]{Claim}

\newcommand{\Real}{\ensuremath{\mathbb{R}}}
\newcommand{\RecSpace}{\ensuremath{X}}
\newcommand{\DataSpace}{\ensuremath{Y}}
\newcommand{\OtherSpace}{\ensuremath{W}}
\DeclareMathOperator{\Lip}{Lip}
\newcommand{\LipClass}[2]{#1 \in \Lip(#2)}
\DeclareMathOperator{\Prob}{Prob}
\DeclareMathOperator{\Wasserstein}{\mathcal{W}}
\newcommand{\ProbDist}{\ell}
\newcommand{\stochastic}[1]{\mathsf{#1}}
\newcommand{\PClass}{\mathcal{P}}
\DeclareMathOperator{\Expect}{\mathbb{E}}

\DeclareMathOperator{\PointwiseVariance}{pVar}
\newcommand{\jointlaw}{\mu}
\newcommand{\posterior}[1]{\pi(\stsignal \mid #1)}
\newcommand{\dataprob}{\sigma}
\DeclareMathOperator*{\argmin}{arg\,min}
\newcommand{\learnedInv}[1]{#1^{\dagger}}
\DeclareMathOperator{\ForwardOp}{\ensuremath{\mathcal{T}}}
\newcommand{\RecOp}{\learnedInv{\ForwardOp}}
\newcommand{\signal}{\ensuremath{x}}
\newcommand{\signalother}{\ensuremath{v}}
\newcommand{\data}{\ensuremath{y}}
\newcommand{\otherdata}{w}
\newcommand{\stsignal}{\stochastic{\signal}}
\newcommand{\stsignalother}{\stochastic{\signalother}}
\newcommand{\stdata}{\stochastic{\data}}
\newcommand{\stotherdata}{\stochastic{\otherdata}}
\newcommand{\stepsilon}{\stochastic{\epsilon}}
\newcommand{\recparam}{\theta}
\DeclareMathOperator{\GenProbMapSpace}{\mathfrak{G}}

\DeclareMathOperator{\GeneratorSpace}{\mathscr{G}}
\DeclareMathOperator{\Generator}{G}
\newcommand{\GenProb}{\mathcal{G}}
\newcommand{\GenParamSet}{\Theta}
\newcommand{\genparam}{\theta}
\newcommand{\GenSpace}{Z}
\newcommand{\genvar}{z}
\newcommand{\genvarprob}{\eta}
\newcommand{\stgenvar}{\stochastic{\genvar}}
\DeclareMathOperator{\DiscrSpace}{\mathscr{D}}
\DeclareMathOperator{\Discriminator}{D}
\newcommand{\DiscrParamSet}{\Phi}
\newcommand{\discrparam}{\phi}
\DeclareMathOperator{\DeepVariation}{h}
\newcommand{\variationparam}{\phi}
\makeatletter
\newcommand{\subalign}[1]{%
  \vcenter{%
    \Let@ \restore@math@cr \default@tag
    \baselineskip\fontdimen10 \scriptfont\tw@
    \advance\baselineskip\fontdimen12 \scriptfont\tw@
    \lineskip\thr@@\fontdimen8 \scriptfont\thr@@
    \lineskiplimit\lineskip
    \ialign{\hfil$\m@th\scriptstyle##$&$\m@th\scriptstyle{}##$\crcr
      #1\crcr
    }%
  }
}
\makeatother

\usepackage{paperacronyms}
\acuse{MRI,ADMM,GPU,CPU}
\crefname{equation}{}{}
\Crefname{equation}{}{}
\crefname{item}{}{}
\Crefname{item}{}{}
\hypersetup{%
	bookmarks=true,          
	pdfmenubar=true,	        
	pdffitwindow=false,     
	pdfstartview={FitH},    
	colorlinks=true,        
	linkcolor=red,          
	citecolor=red,          
	filecolor=magenta,      
	urlcolor=cyan,          
	pdfborder = {0,0,0}
}
\addunit{\pixel}{pixel}
\addunit{\pixels}{pixels}
\addunit{\voxel}{voxel}
\addunit{\decibel}{dB}
\addunit{\byte}{B}
\addunit{\hounsfield}{HU}

\title{\Huge Deep Bayesian Inversion}
\subtitle{Computational uncertainty quantification for large scale inverse problems}
\author{
	\textbf{Jonas Adler} \\
	\normalfont{\normalsize Department of Mathematics} \\[-0.35em]
	\normalfont{\normalsize KTH - Royal institute of Technology} \\[-0.35em]
	\normalfont{\normalsize \texttt{jonasadl@kth.se}} \\
	\normalfont{\normalsize Research and Physics, Elekta}  
	\and
	\textbf{Ozan \"{O}ktem} \\ 
	\normalfont{\normalsize Department of Mathematics} \\[-0.35em]
	\normalfont{\normalsize KTH - Royal institute of Technology} \\[-0.35em]
	\normalfont{\normalsize \texttt{ozan@kth.se}}
}

\begin{document}
\maketitle

\begin{abstract}
Characterizing statistical properties of solutions of inverse problems is essential for decision making.
%
Bayesian inversion offers a tractable framework for this purpose, but current approaches are computationally unfeasible for most realistic imaging applications in the clinic.
We introduce two novel deep learning based methods for solving large-scale inverse problems using Bayesian inversion: a sampling based method using a \acl{WGAN} with a novel mini-discriminator and a direct approach that trains a neural network using a novel loss function.
The performance of both methods is demonstrated on image reconstruction in ultra low dose 3D helical CT.
We compute the posterior mean and standard deviation of the 3D images followed by a hypothesis test to assess whether a ``dark spot'' in the liver of a cancer stricken patient is present.
Both methods are computationally efficient and our evaluation shows very promising performance that clearly supports the claim that Bayesian inversion is usable for 3D imaging in time critical applications.
\end{abstract}
\acresetall

\section{Introduction}
In several areas of science and industry there is a need to reliably recover a hidden multidimensional model parameter from noisy indirect observations.
A typical example is when imaging/sensing technologies are used in medicine, engineering, astronomy, and geophysics.

These inverse problems are often ill-posed, meaning that small errors in data may lead to large errors in the model parameter and there are several possible model parameter values that are consistent with observations.
Addressing ill-posedness is critical in applications where decision making is based on the recovered model parameter, like in image guided medical diagnostics. 
Furthermore, many highly relevant inverse problems are large-scale; they involve large amounts of data and high-dimensional model parameter spaces.

\paragraph{Bayesian inversion}
Bayesian inversion is a framework for assigning probabilities to a model parameter given data (posterior) by combining a \emph{data model} with a \emph{prior model} (\cref{sec:InvProb}). 
The former describes how measured data is generated from a model parameter whereas the latter accounts for information about the unknown model parameter that is known beforehand.
Exploring the posterior not only allows for recovering the model parameter in a reliable manner by computing suitable estimators, it also opens up for a complete statistical analysis including quantification of the uncertainty.

A key part of Bayesian inversion is to express the posterior using Bayes' theorem, which in turn requires access to the data likelihood, a prior, and a probability measure for data. 
The data likelihood is often given from insight into the physics of how data is generated (simulator). 
The choice of prior (\cref{sec:PriorModel}) is less obvious but important since it accounts for a priori information about the true model parameter. 
It is also very difficult to specify a probability distribution for data, which is required by many estimators.
Finally, the computational burden associated with exploring the posterior (\cref{sec:CompBayes}) prevents usage of Bayesian inversion in most imaging applications. 

To exemplify the above, consider clinical 3D \ac{CT} imaging where the model parameter represents the interior anatomy and data is x-ray radiographs taken from various directions.
A natural prior in this context is that the object (model parameter) being imaged is a human being, but explicitly handcrafting such a prior is yet to be done.
Instead, current priors prescribe roughness or sparsity, which suppresses unwanted oscillatory behavior at the expense of finer details.
Next, the model parameter is typically $512^3$-dimensional and data is of at least same order of magnitude.
Hence, exploring the posterior in a timely manner is challenging, e.g., uncertainty quantification in Bayesian inversion remains intractable for such large-scale inverse problems.



\section{Statistical Approach to Inverse Problems}\label{sec:InvProb}
Uncertainty refers in general to the accuracy by which one can determine a model parameter.
In an inverse problems, this rests upon the ability to explore the statistical distribution of model parameters given measured data.
More precisely, the posterior probability of the model parameter conditioned on observed data describes all possible solutions to the inverse problem along with their probabilities \cite{Evans:2002aa,Dashti:2016aa} and it is essential for uncertainty quantification.

Bayesian inversion uses Bayes' theorem \cite[Theorem~14]{Dashti:2016aa} to characterize the posterior:
\[
  p(\signal \mid \data)
  =
  \frac{p(\signal)	p(\data \mid \signal)}{p(\data)}.
\]
Here, $p(\data \mid \signal)$ is given by the data model that is usually derived from knowledge about how data is generated and $p(\signal)$ is given by the prior model that represents information known beforehand about the true (unknown) model parameter.

A tractable property of Bayesian inversion is that small changes in data lead to small changes in the posterior even when the inverse problem is ill-posed in the classical sense \cite[Theorem~16]{Dashti:2016aa}, so Bayesian inversion is stable.
Different reconstructions can be obtained by computing different estimators from the posterior and there is also a natural framework for uncertainty quantification, e.g., by computing Bayesian credible sets.

The posterior is however quite complicated with no closed form expression, so much of the contemporary research focuses on realizing the aforementioned advantages with Bayesian inversion without requiring access to the full posterior, see \cite{Dashti:2016aa} for a nice survey. 
Some related key challenges were mentioned earlier in the introduction; choosing a ``good'' \emph{prior}, specifying the \emph{probability distribution of data}, and to explore the posterior in a  \emph{computationally feasible} manner.

\subsection{Choosing a prior model}\label{sec:PriorModel}
The difficulty in selecting a prior model lies in capturing the relevant a priori information. 
Bayesian non-parametric theory \cite{Ghosal:2017ab} provides a large class of handcrafted priors, but these only capture a fraction of the a priori information that is available.
\Cref{fig:analytic_priors} illustrates this by showing random samples generated from priors commonly used by state-of-the-art approaches in image recovery \cite{Kaipio:2005aa,Calvetti:2017aa} as well as samples from typical clinical \ac{CT} images. The handcrafted priors primarily encode regularity properties, like roughness or sparsity, and it would clearly be stretching our imagination to claim that corresponding samples represent natural images.

\begin{figure*}[t]
	\centering	
	\newlength{\tmpfigwidth}
	\setlength{\tmpfigwidth}{0.155\linewidth}
	
	\begin{subfigure}[t]{\tmpfigwidth}
		\centering
		\includegraphics[width=\linewidth]{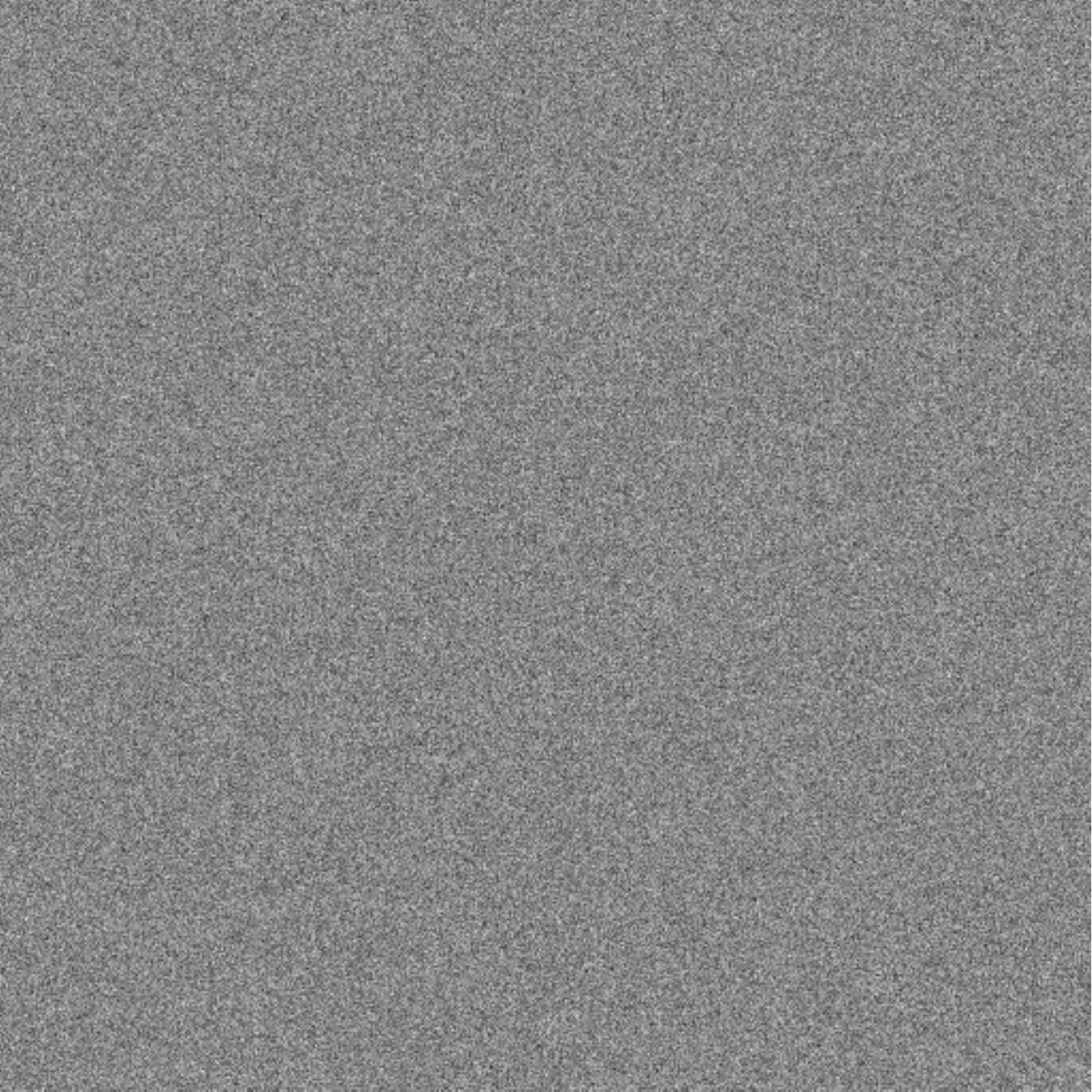}
		\caption*{$\| \signal \|_2^2$}
	\end{subfigure} %
	\hfill %
	\begin{subfigure}[t]{\tmpfigwidth}
		\centering
		\includegraphics[width=\linewidth]{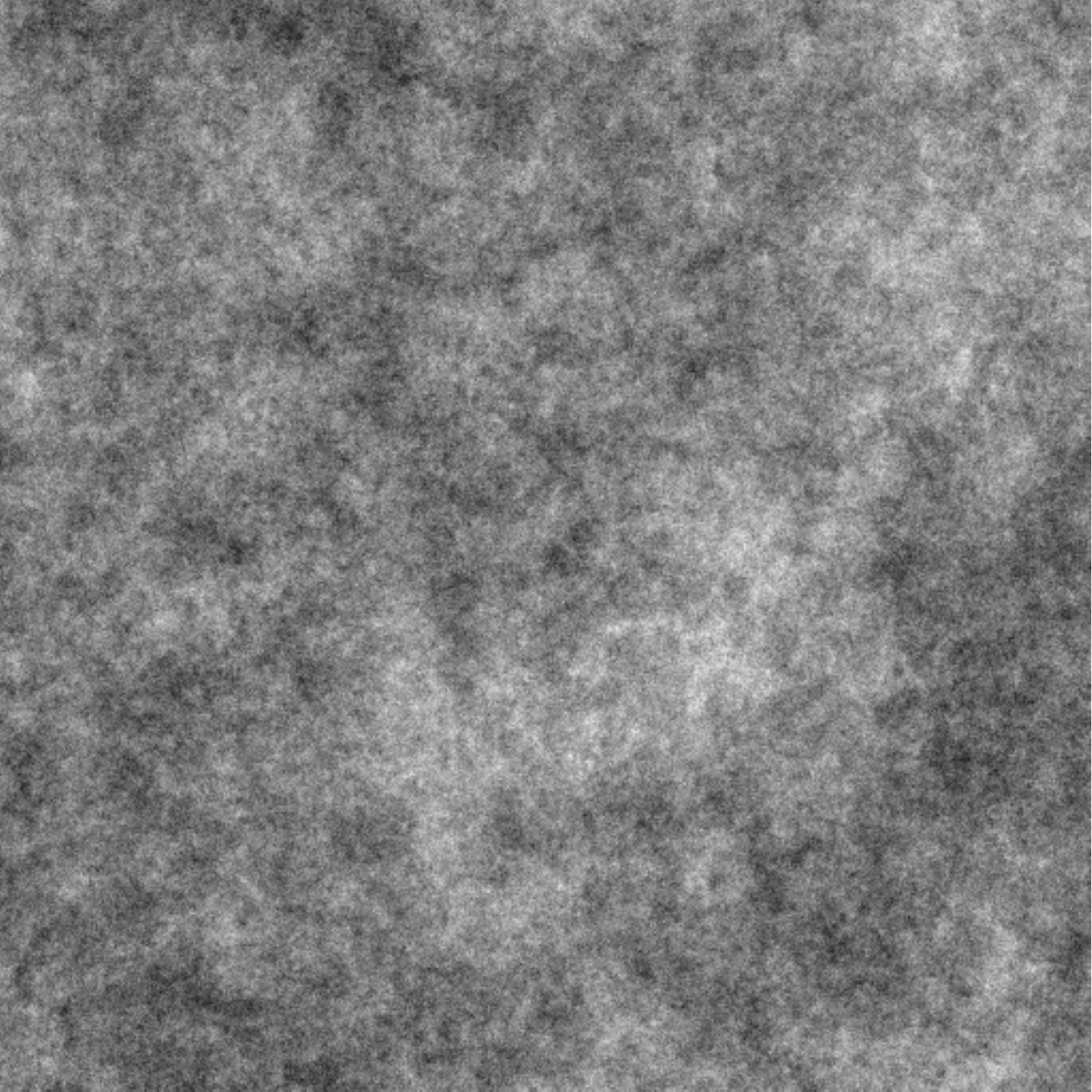}
		\caption*{$\|\nabla \signal \|_2^2$}
	\end{subfigure} %
	\hfill %
	\begin{subfigure}[t]{\tmpfigwidth}
		\centering
		\includegraphics[width=\linewidth]{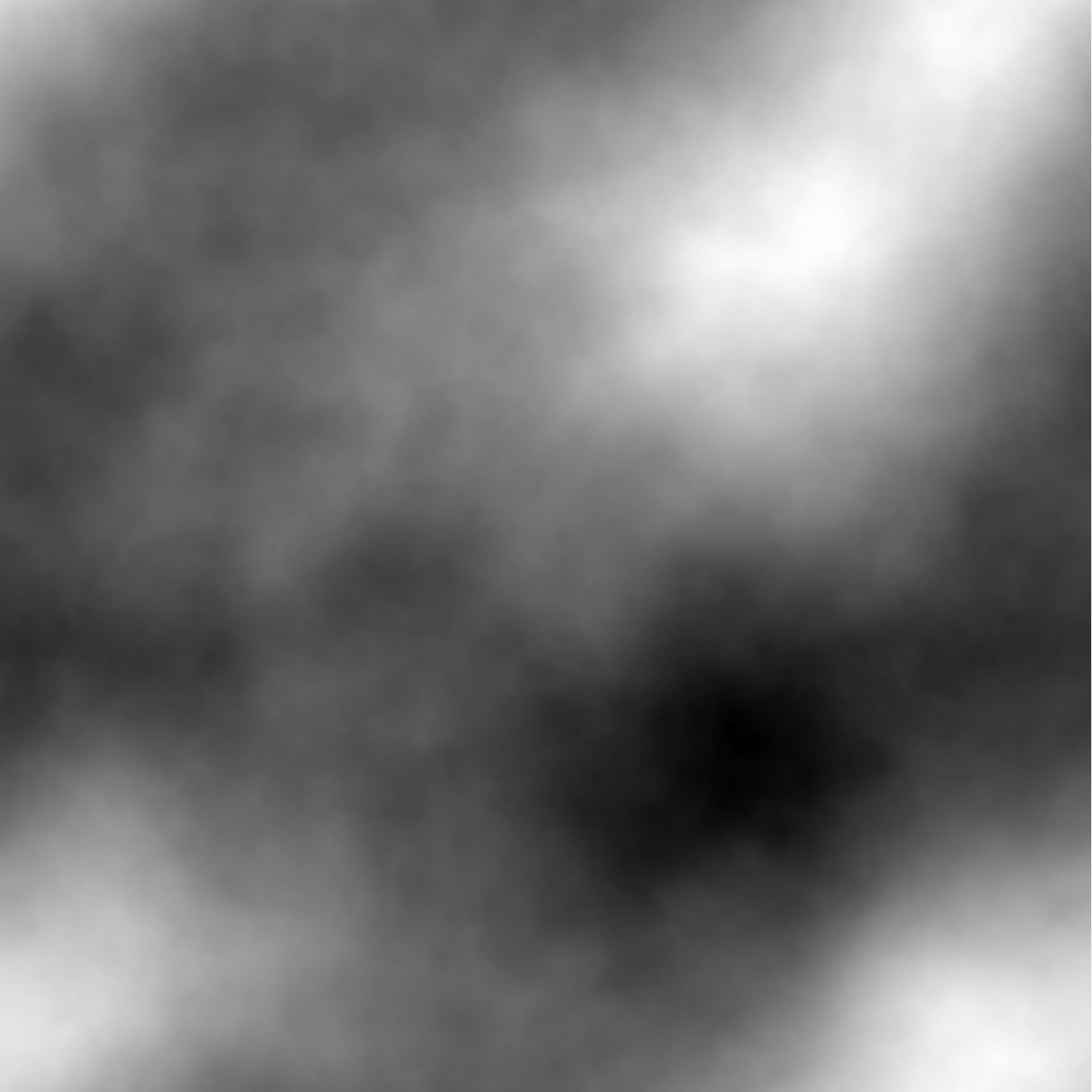}
		\caption*{$\|\Delta \signal \|_2^2$}
	\end{subfigure} %
	\hfill %
	\begin{subfigure}[t]{\tmpfigwidth}
		\centering
		\includegraphics[width=\linewidth]{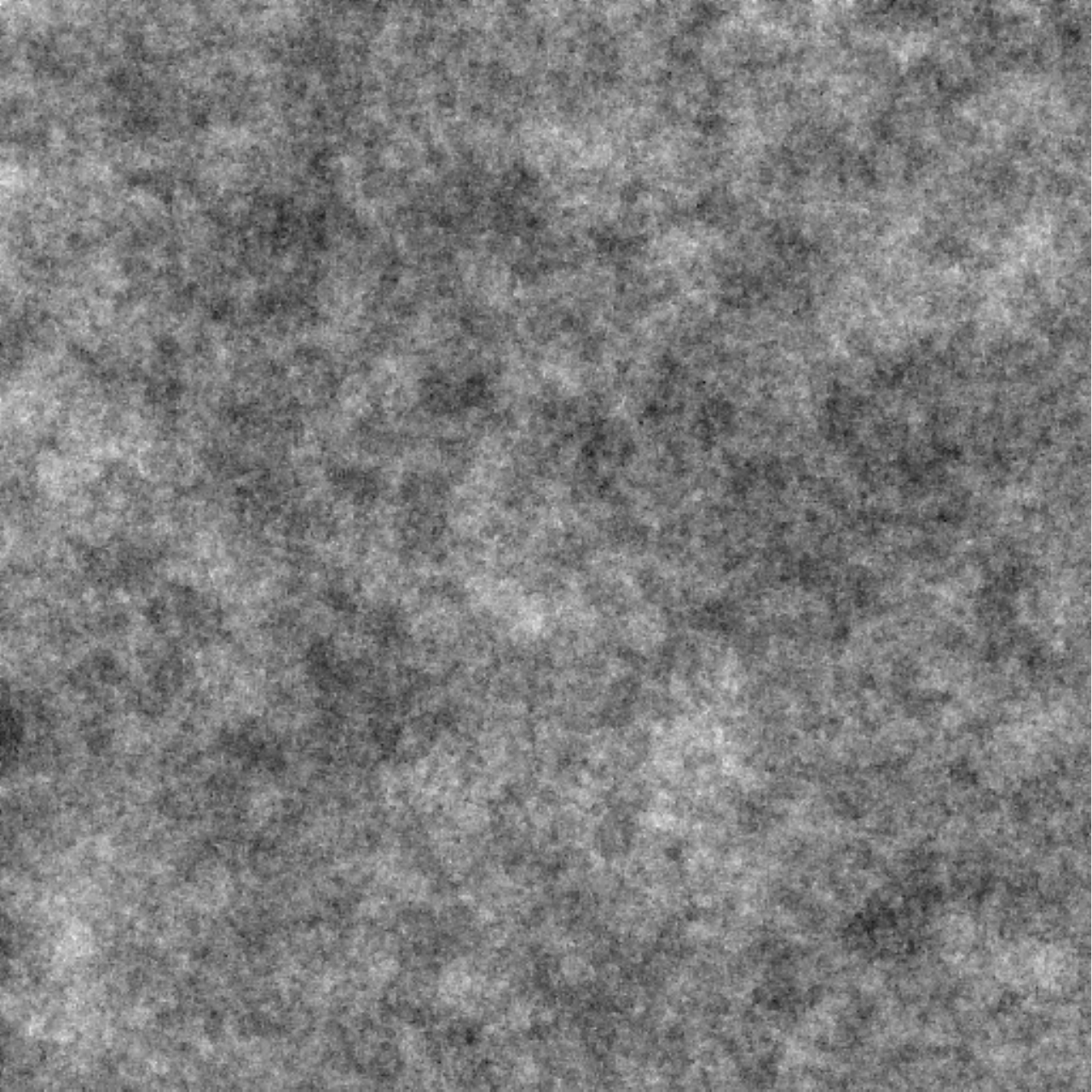}
		\caption*{$\|\nabla \signal \|_1$}
	\end{subfigure} %
	\hfill %
	\begin{subfigure}[t]{\tmpfigwidth}
		\centering
		\includegraphics[width=\linewidth]{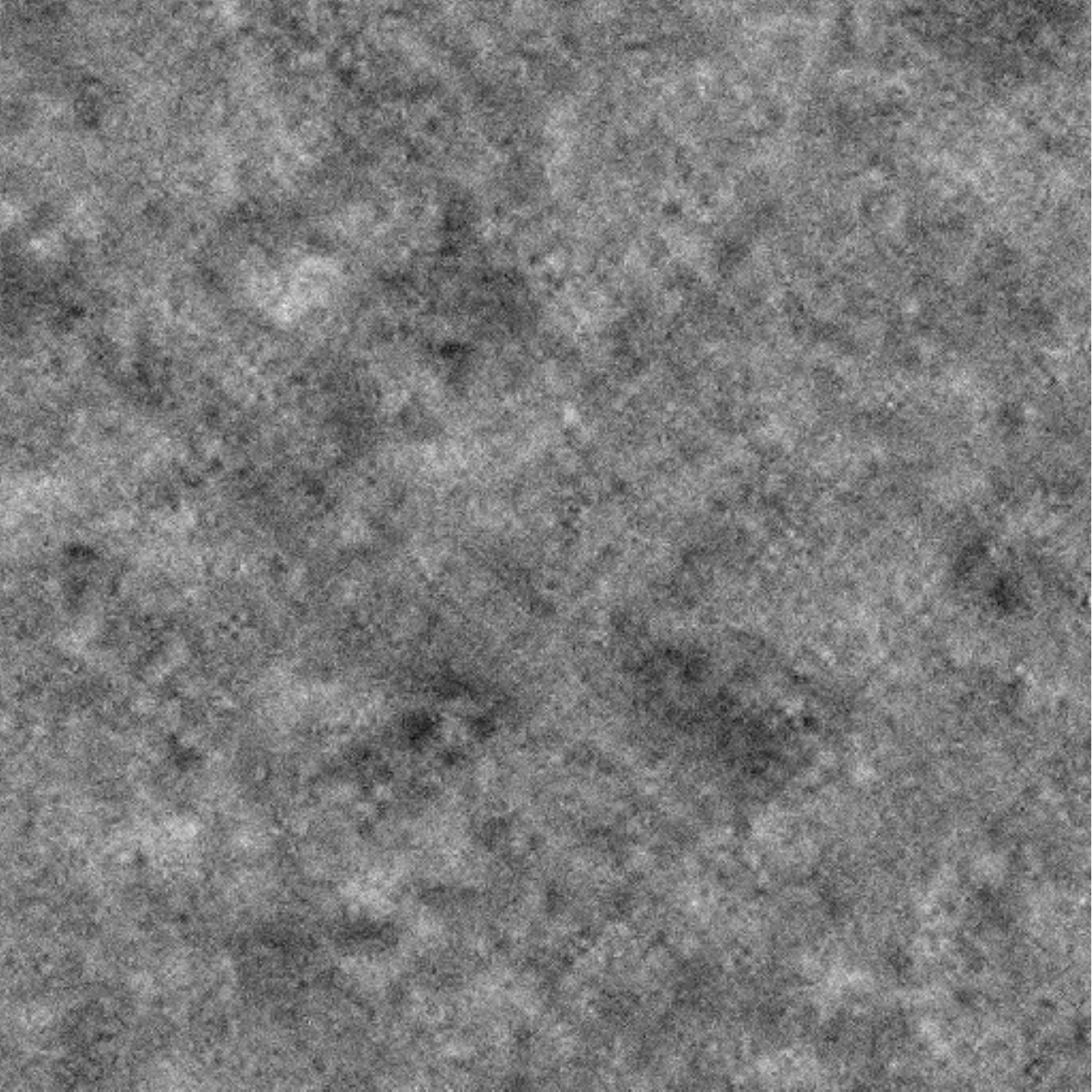}
		\caption*{$\| \signal \|_{B_{1,1}^1}$}
	\end{subfigure} %
	\hfill %
	\begin{subfigure}[t]{\tmpfigwidth}
		\centering
		\includegraphics[width=\linewidth]{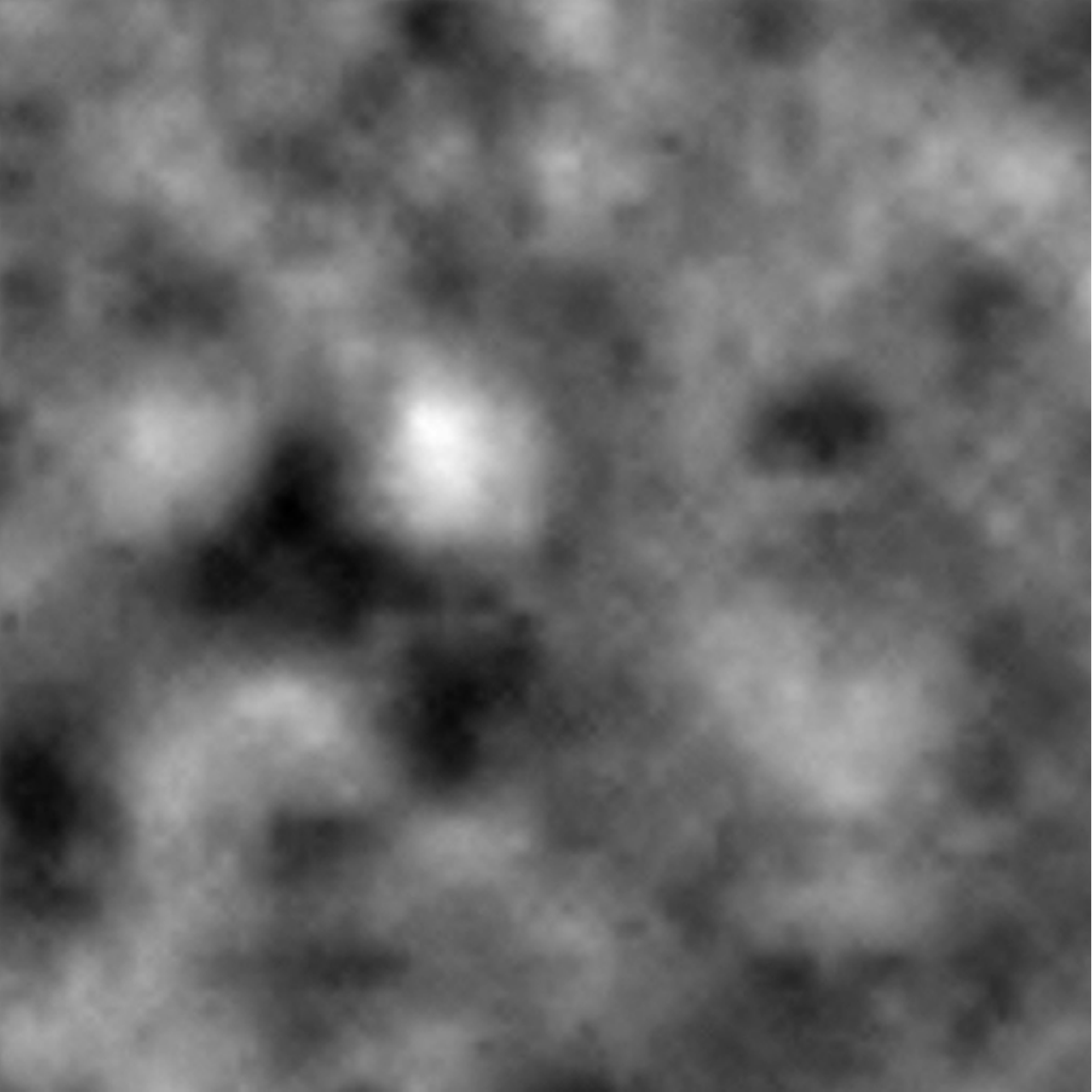}
		\caption*{$\| \signal \|_{B_{1,1}^2}$}
	\end{subfigure}%
	\\[1em]
	\begin{subfigure}[t]{0.155\linewidth}
		\centering	
		\includegraphics[width=\linewidth]{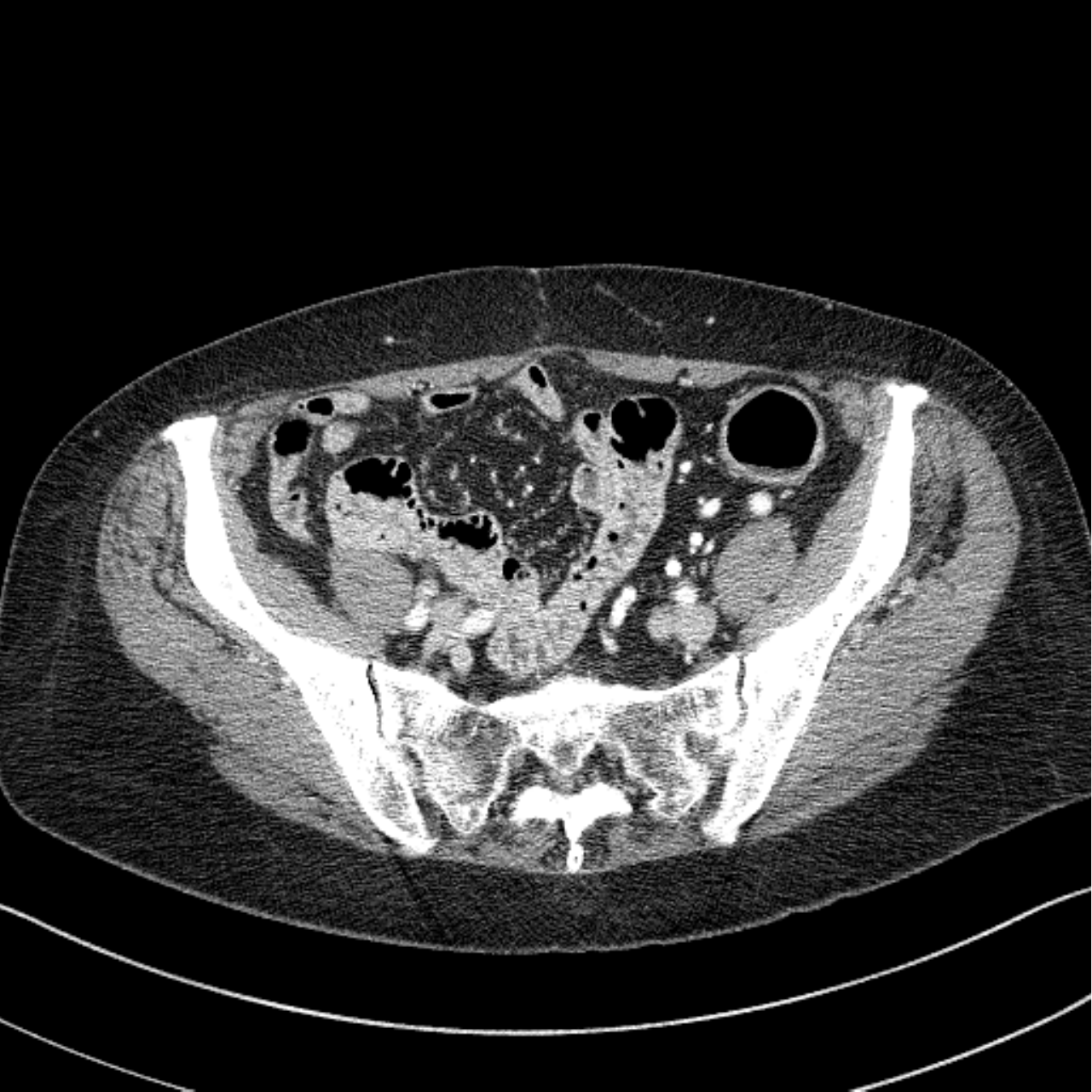}
	\end{subfigure}%
	\hfill
	\begin{subfigure}[t]{0.155\linewidth}
		\centering	
		\includegraphics[width=\linewidth]{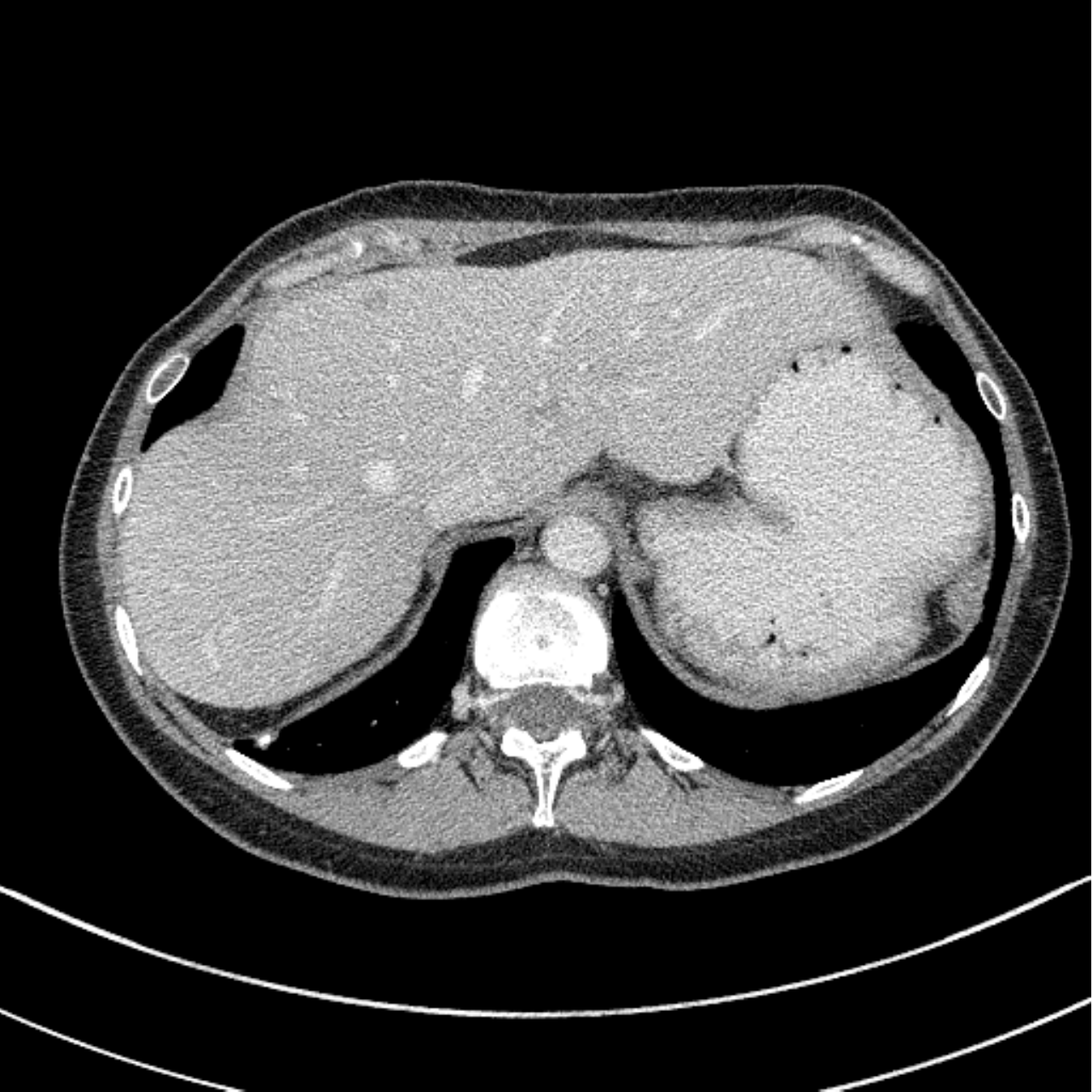}
	\end{subfigure}%
	\hfill
	\begin{subfigure}[t]{0.155\linewidth}
		\centering	
		\includegraphics[width=\linewidth]{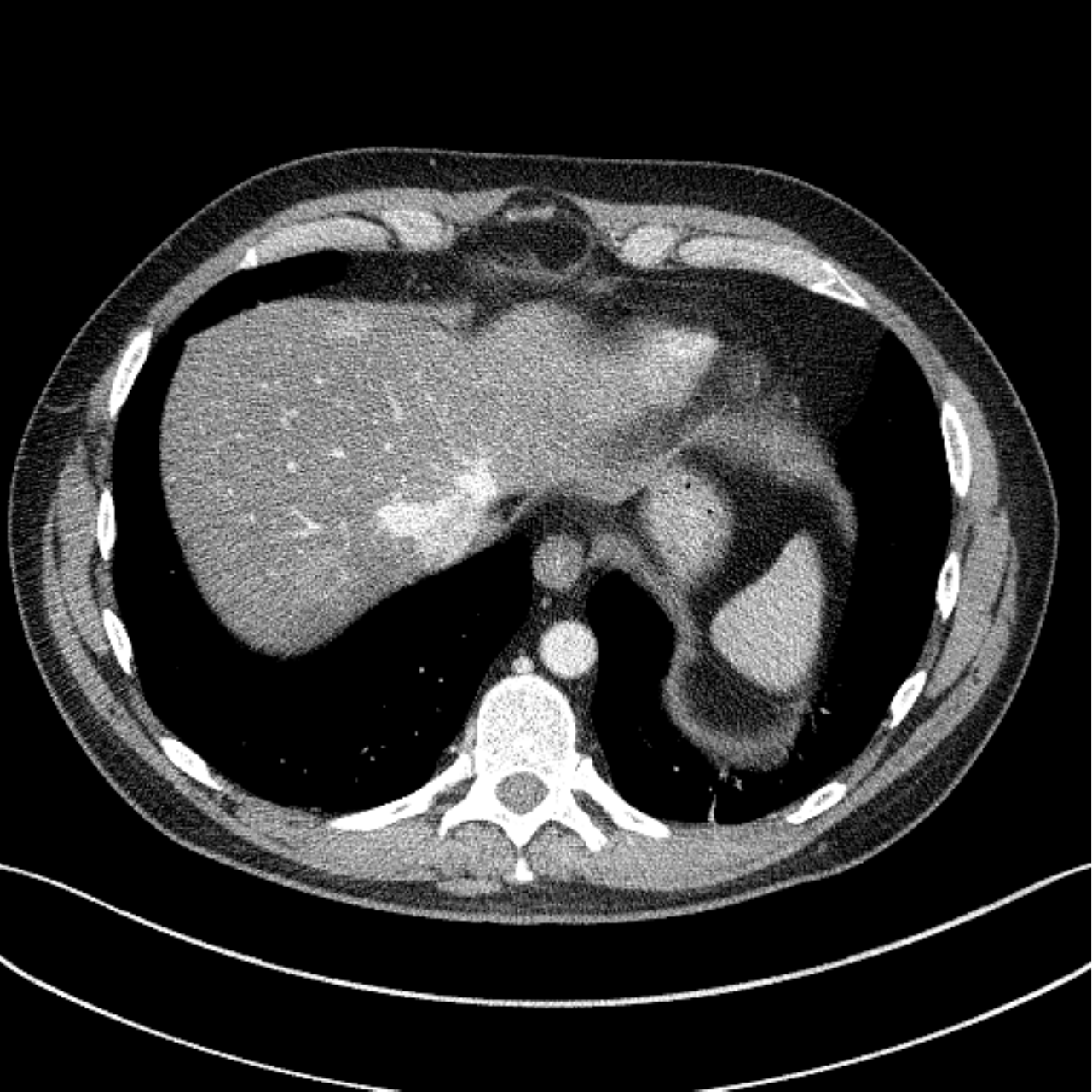}
	\end{subfigure}%
	\hfill
	\begin{subfigure}[t]{0.155\linewidth}
		\centering	
		\includegraphics[width=\linewidth]{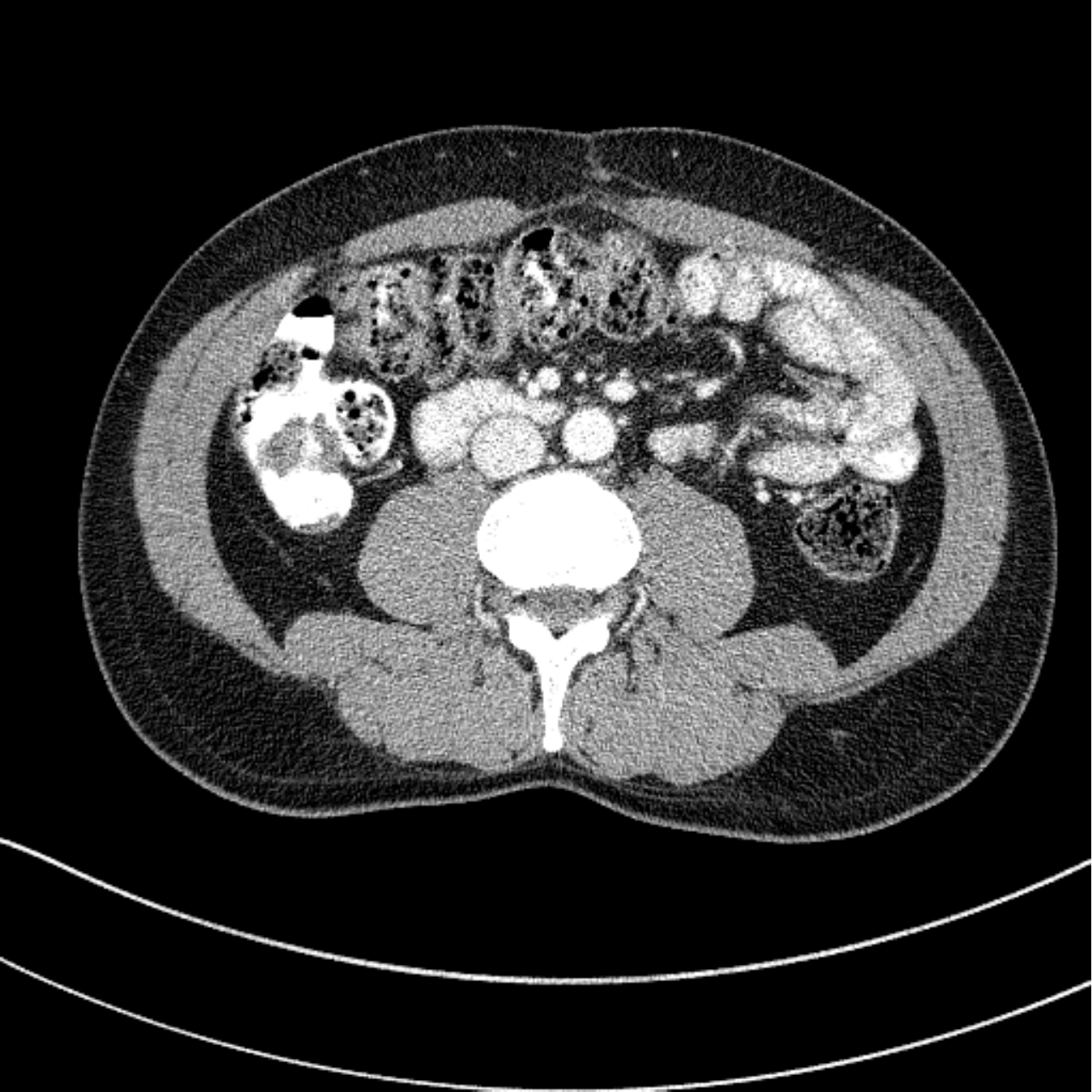}
	\end{subfigure}%
	\hfill
	\begin{subfigure}[t]{0.155\linewidth}
		\centering	
		\includegraphics[width=\linewidth]{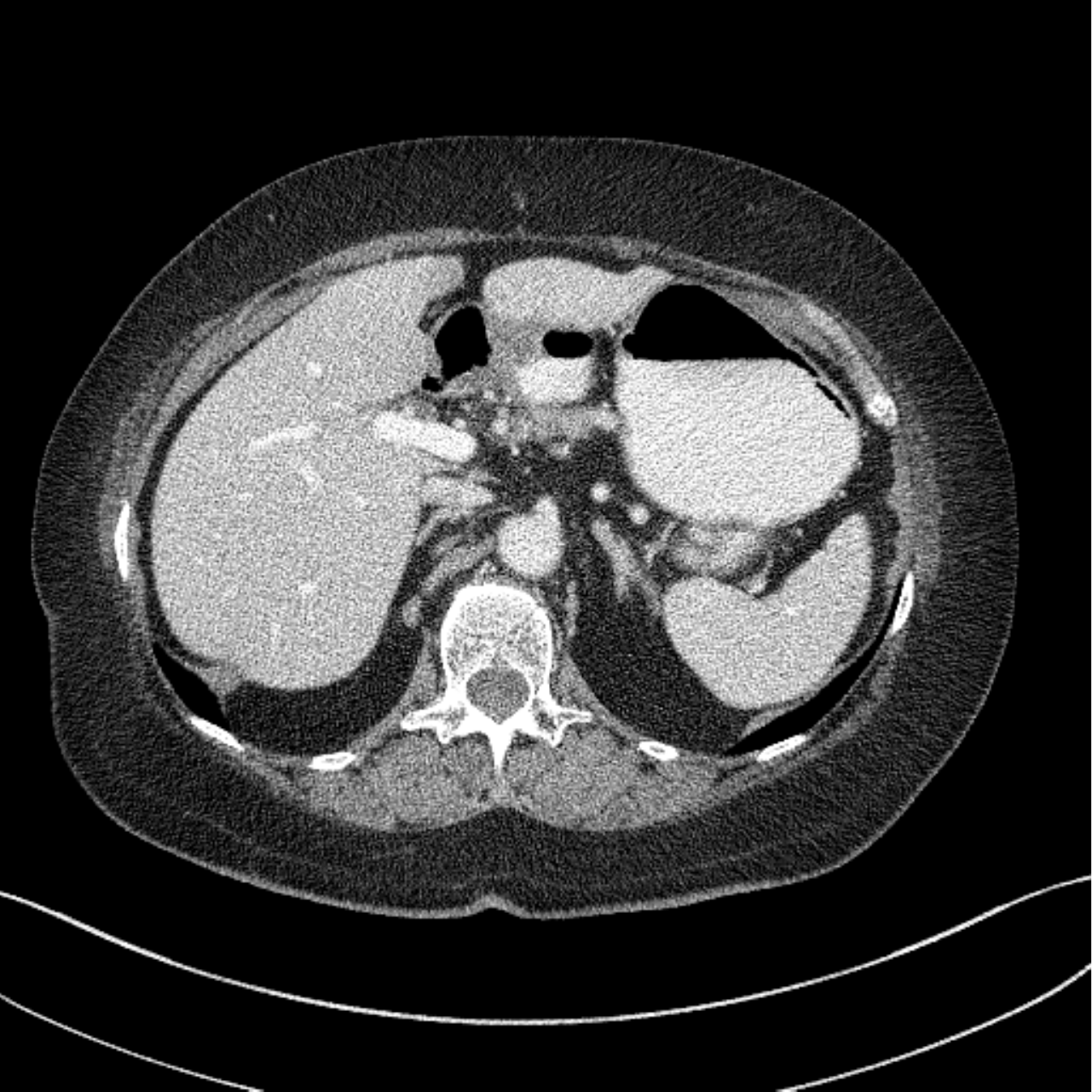}
	\end{subfigure}%
	\hfill
	\begin{subfigure}[t]{0.155\linewidth}
		\centering	
		\includegraphics[width=\linewidth]{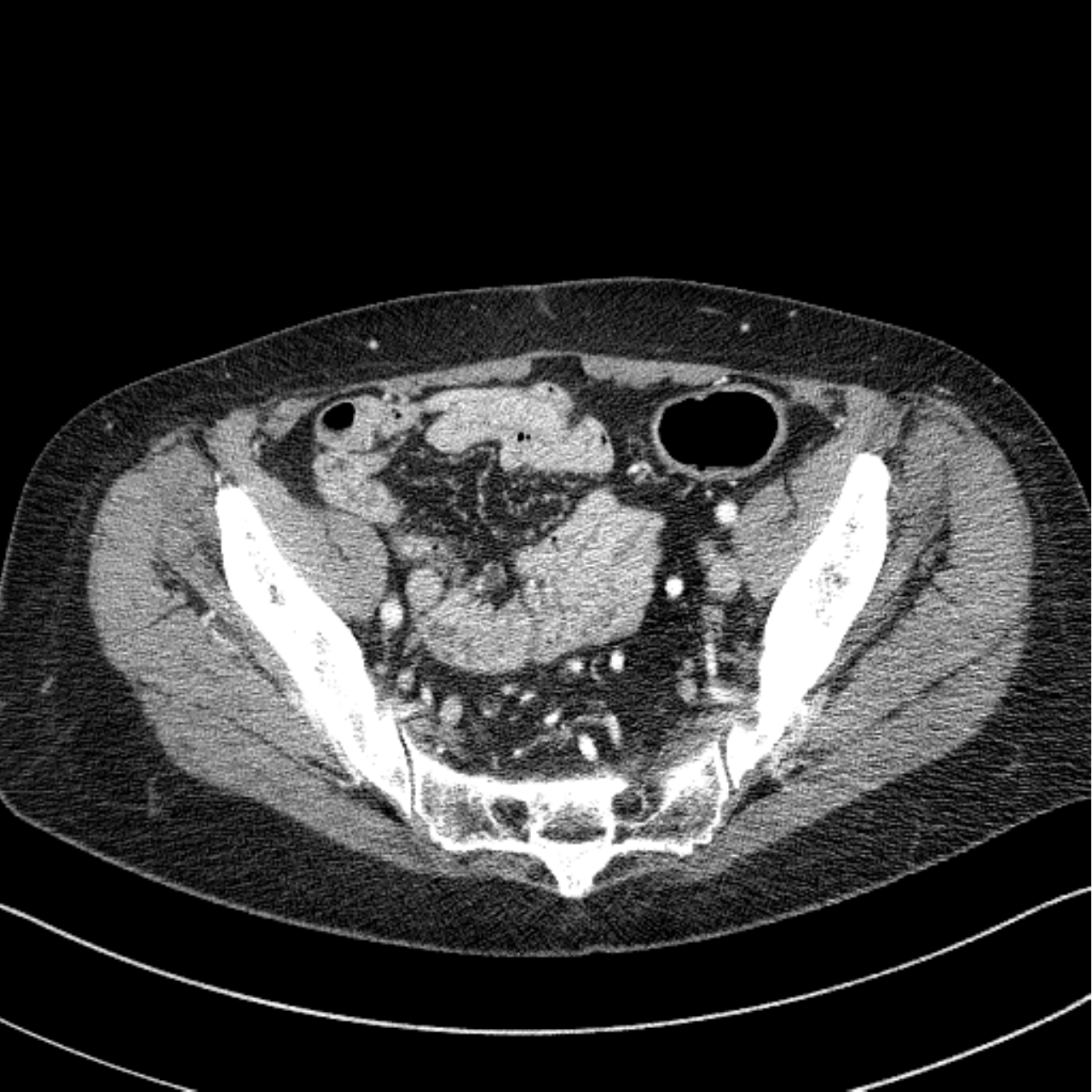}
	\end{subfigure}%
	\caption{Top row shows a single random sample generated by a Gibbs type of roughness priors that are common in inverse problems in imaging (\cref{sec:appendix_analytical_priors}).
	Such a prior is proportional to $e^{-S(\signal)}$ and images show samples for different choices of $S$.
 	Bottom row shows typical samples of normal dose \ac{CT} images 
	  of humans. Ideally, a prior generates samples similar to those in the bottom row.}
	\label{fig:analytic_priors}
\end{figure*}

\subsection{Computational feasibility}\label{sec:CompBayes}
Exploring the posterior for inverse problems in imaging often leads to large scale numerics since this mounts to sampling from a high dimensional probability distribution. 
Most approaches, see \cref{sec:RelWork}, are either not fast enough or rely on simplifying assumptions that does not hold in many applications
For the above reasons, in large scale inverse problems one tends to reconstruct a \emph{single point estimate} of the posterior distribution, the most common being the \ac{MAP} estimator that corresponds to the most likely reconstruction given the data.
A drawback that comes with working with single estimators is that these cannot include all the information present in the posterior distribution. It is clear that knowledge about the full posterior would have dramatic impact upon how solutions to inverse problems are intertwined into decision making. As an example, in medical imaging, practitioners would be able to compute the probability of a tumor being an image artifact, which in turn is necessary for image guided hypothesis testing.


\section{Contribution}\label{sec:Contrib}
Our overall contribution is to suggest two generic, yet adaptable, frameworks for uncertainty quantification in inverse problems that are computationally feasible and where both the prior and probability distribution of data are given implicitly through supervised examples instead of being handcrafted. 
The approach is based on recent advances in \acp{GAN} from deep learning and we demonstrate its performance on ultra low dose 3D helical \ac{CT}.


Our main contribution is \emph{Deep Posterior Sampling} (\cref{sec:WGAN}) where generative models from machine learning are used to sample from a high-dimensional unknown posterior distribution in the context of Bayesian inversion. This is made possible by a novel \emph{conditional \ac{WGAN} discriminator} (\cref{sec:minibatchDiscriminator}). 
The approach is generic and applies in principle to any inverse problem assuming there is relevant training data. 
It can be used for performing statistical analysis of the posterior on $\RecSpace$, e.g., by computing various estimators. 

Independently, we also introduce \emph{Deep Direct Estimation} (\cref{sec:DirectUC}) where one directly computes an estimator using an deep neural network trained using a cleverly chosen loss (\cref{app:sampling_free}). Deep direct estimation is faster than posterior sampling, but it mainly applies to statistical analysis that is based on evaluating a pre-determined estimator. Both approaches should give similar quantitative results when used for evaluating the same estimator.

We demonstrate the performance and computational feasibility for ultra low dose \ac{CT} imaging in a clinical setting by computing some estimators and performing a hypothesis test (\cref{sec:NumericalExp}).

\section{Deep Bayesian Inversion}\label{sec:DeepBayesInverse}
As already stated, in Bayesian inversion both the model parameter $\signal$ and measured data $\data$ are assumed to be generated by random variables $\stsignal$ and $\stdata$, respectively. 
The ultimate goal is to recover the posterior $\posterior{\data}$, which describes all possible solutions $\stsignal=\signal$ along with their probabilities given data $\stdata=\data$.

We here outline two approaches that can be used to perform various statistical analysis on the posterior. 
Deep Posterior Sampling is a technique for learning how to sample from the posterior whereas Deep Direct Estimation learns various estimators directly.

\subsection{Deep Posterior Sampling}\label{sec:WGAN}
The idea is to explore the posterior by sampling from a generator that is defined by a \ac{WGAN}, which has been trained using a conditional \ac{WGAN} discriminator. 

To describe how a \ac{WGAN} can be used for this purpose, let data $\data \in \DataSpace$ be fixed and assume that $\posterior{\data}$, the posterior of $\stsignal$ at $\stdata=\data$, can be approximated by elements in a parametrized family $\{ \GenProb_{\genparam}(\data) \}_{\genparam \in \GenParamSet}$ of probability measures on $\RecSpace$. 
The best such approximation is defined as $\GenProb_{\genparam^*}(\data)$ where $\genparam^* \in \GenParamSet$ solves
\begin{equation}\label{eq:GenericFormulationPointWise}
  \genparam^* \in \argmin_{\genparam \in \GenParamSet} \,
      \ProbDist \bigl(\GenProb_{\genparam}(\data), \posterior{\data} \bigr).
\end{equation}
Here, $\ProbDist$ quantifies the ``distance'' between two probability measures on $\RecSpace$. 
We are however interested in the best approximation for ``all data'', so we extend \cref{eq:GenericFormulationPointWise} by including an averaging over all possible data. 
The next step is to choose a distance notion $\ProbDist$ that desirable from both a theoretical and a computational point of view.
As an example, the distance should be finite and computational feasibility requires using it to be differentiable almost everywhere, since this opens up for using \ac{SGD} type of schemes. 
The Wasserstein 1-distance $\Wasserstein$ (\cref{sec:Wasserstein}) has these properties \cite{Arjovsky:2017aa} and sampling from the posterior $\posterior{\data}$ can then be replaced by sampling from the probability distribution $\GenProb_{\genparam^*}(\data)$ where $\genparam^*$ solves
\begin{equation}\label{eq:WGANFormulation}
  \genparam^* \in \argmin_{\genparam \in \GenParamSet} \,
    \Expect_{\stdata \sim \dataprob} \Bigl[
          \Wasserstein \bigl( \GenProb_{\genparam}(\stdata), \posterior{\stdata} \bigr)
    \Bigr].
\end{equation}
In the above, $\dataprob$ is the probability distribution for data and $\stdata \sim \dataprob$ generates data.

Observe now that evaluating the objective in \cref{eq:WGANFormulation} requires access to the very posterior that we seek to approximate.
Furthermore, the distribution $\dataprob$ of data is often unknown, so an approach based on \cref{eq:WGANFormulation} is essentially useless if the purpose is to sample from an unknown posterior. 
Finally, evaluating the Wasserstein 1-distance directly from its definition is not computationally feasible. 

On the other hand, as shown in \cref{app:WGANMath}, \emph{all} of these drawbacks can be circumvented by rewriting \cref{eq:WGANFormulation} as an expectation over the joint law $(\stsignal, \stdata)\sim \jointlaw$.
This makes use of specific properties of the Wasserstein 1-distance (Kantorovich-Rubenstein duality) and one obtains the following approximate version of \cref{eq:WGANFormulation}:
\begin{equation}\label{eq:FormulationWithZ}
  \genparam^* \in \argmin_{\genparam \in \GenParamSet}
  \Biggl\{
  \sup_{\discrparam \in \DiscrParamSet}
  \Expect_{(\stsignal, \stdata)\sim \jointlaw} 
  \biggl[
      \Discriminator_{\discrparam}(\stsignal, \stdata) - \Expect_{\stgenvar \sim \genvarprob}\bigl[ \Discriminator_{\discrparam}(\Generator_{\genparam}(\stgenvar, \stdata), \stdata) \bigr]
  \biggr]
  \Biggr\}.  
\end{equation}
In the above, $\Generator_{\genparam} \colon \GenSpace \times \DataSpace \to \RecSpace$ (generator) is a deterministic mapping such that $\Generator_{\genparam}(\stgenvar, \data) \sim \GenProb_{\genparam}(\data)$, where $\stgenvar \sim \genvarprob$ is a `simple' $\GenSpace$-valued random variable in the sense that it can be sampled in a computationally feasible manner.
Next, the mapping $\Discriminator_{\discrparam} \colon \RecSpace \times \DataSpace \to \Real$ (discriminator) is a measurable mapping that is $1$-Lipschitz in the $\RecSpace$-variable.

On a first sight, it might be unclear why \cref{eq:FormulationWithZ} is better than \cref{eq:WGANFormulation} if the aim is to sample from the posterior, especially since the joint law $\jointlaw$ in \cref{eq:FormulationWithZ} is unknown. 
The advantage becomes clear when one has access to supervised training data for the inverse problem, i.e., i.i.d. samples $(\signal_1,\data_1), \ldots, (\signal_m,\data_m)$ generated by the random variable $(\stsignal,\stdata) \sim \jointlaw$. 
The $\jointlaw$-expectation in \cref{eq:FormulationWithZ} can then be replaced by an averaging over training data.

To summarize, solving \cref{eq:FormulationWithZ} given supervised training data in $\RecSpace \times \DataSpace$ amounts to learning a generator $\Generator_{\genparam^*}(\stgenvar,\cdot) \colon \DataSpace \to \RecSpace$ such that $\Generator_{\genparam^*}(\stgenvar,\data)$ with $\stgenvar \sim \genvarprob$ is approximately distributed as the posterior $\posterior{\data}$.
In particular, for given $\data \in \DataSpace$ we can sample from $\posterior{\data}$ by generating values of $\genvar \mapsto \Generator_{\genparam^*}(\genvar,\data) \in \RecSpace$ in which $\genvar \in \GenSpace$ is generated by sampling from $\stgenvar \sim \genvarprob$.

An important part of the implementation is the concrete parameterizations of the generator and discriminator:
\[ 
   \Generator_{\genparam} \colon  \GenSpace \times \DataSpace \to \RecSpace 
   \quad\text{and}\quad
   \Discriminator_{\discrparam} \colon \RecSpace \times \DataSpace \to \Real.
\]   
We here use deep neural networks for this purpose and following \cite{WGAN-GP}, we softly enforce the 1-Lipschitz condition on the discriminator by including a gradient penalty term to the training objective in \cref{eq:FormulationWithZ}. Furthermore, if \cref{eq:FormulationWithZ} is implemented as is, then in practice $\stgenvar$ is not used by the generator (so called mode-collapse). To solve this problem, we introduce a novel conditional mini-batch discriminator that can be used with conditional WGAN without impairing upon its analytical properties (\cref{prop:minibatch_discriminator}), see \cref{sec:minibatchDiscriminator} for more details.

\subsection{Deep Direct Estimation}\label{sec:DirectUC}
The idea here is to train a deep neural network to directly approximate an estimator of interest without resorting to generating samples from the posterior as in posterior sampling (\cref{sec:WGAN}).

Deep direct estimation relies on the well known result:
\begin{equation}\label{eq:condexp}
	\Expect_{\stotherdata} \bigl[ \stotherdata \mid \stdata = \cdot \bigr]
	=
	\!\!
	\min_{\DeepVariation \colon \DataSpace \to \OtherSpace}  
	\Expect_{(\stdata, \stotherdata)} 
	\Bigl[ 
		\bigl\|\DeepVariation(\stdata)  - \stotherdata \bigr\|_{\OtherSpace}^2 
	\Bigr].
\end{equation}
In the above, $\stotherdata$ is \emph{any} random variable taking values in some measurable Banach space $\OtherSpace$ and the minimization is over all $\OtherSpace$-valued measurable maps on $\DataSpace$. 
See \cref{prop:condexp} in \cref{app:sampling_free} for a precise statement.
This is useful since many estimators relevant for uncertainty quantification are expressible using terms of this form for appropriate choices of $\stotherdata$.

Specifically, \cref{app:sampling_free} considers two (deep) neural networks $\RecOp_{\recparam^*} \colon \DataSpace \to \RecSpace$ and $\DeepVariation_{\variationparam^*} \colon \DataSpace \to \RecSpace$ with appropriate architectures that are trained according to
\begin{align*}
\recparam^* \in&\
\argmin_{\recparam}
\biggl\{ 
	\Expect_{(\stsignal, \stdata) \sim \jointlaw} 
	\Bigl[\bigl\|
		\stsignal - \RecOp_{\recparam}(\stdata)
	\bigr\|_\RecSpace^2 \Bigr]
\biggr\}
\\
\variationparam^* \in&\
\argmin_{\variationparam}
\biggl\{ 
	\Expect_{(\stsignal, \stdata)\sim \jointlaw} 
	\Bigl[\bigl\|
		\DeepVariation_{\variationparam}(\stdata) - 
		\bigl(
			\stsignal - \RecOp_{\recparam^*}(\stdata)
		\bigr)^2
	\bigr\|_\RecSpace^2 \Bigr]
\biggr\}.
\end{align*}
The resulting networks will then approximate the conditional mean and the conditional point-wise variance, respectively. 
Finally, if one has supervised training data $(\signal_i, \data_i)$, then the joint law $\jointlaw$ above can be replaced by its empirical counterpart and the $\jointlaw$-expectation is replaced by an averaging over training data.

As already indicated, by using \cref{eq:condexp} it is possible to re-write many estimators as minimizers of an expectation.
Such estimators can then be approximated using the direct estimation approach outlined here.
This should coincide with computing the same estimator by posterior sampling (\cref{sec:WGAN}).
Direct estimation is however significantly faster, but not as flexible as posterior sampling since each estimator requires a new neural network that specifically trained for that estimator. 
\Cref{sec:NumericalExp} compares outcome from both approaches.

\section{Numerical Experiments}\label{sec:NumericalExp}
We evaluate the feasibility of posterior sampling (\cref{sec:WGAN}) to sample from the posterior and Direct Estimation (\cref{sec:DirectUC}) to compute mean and point-wise variances for clinical 3D \ac{CT} imaging.

\subsection{Set-up}\label{subsec:setup}
Our supervised data consists of pairs of 3D \ac{CT} images $(\signal_i,\data_i)$ generated by $(\stsignal, \stdata)$ where $\signal_i$ is a normal dose 3D image that serves as the `ground truth' and $\data_i$ is the \ac{FBP} 3D reconstruction computed from ultra low dose \ac{CT} data associated with $\signal_i$.

One could here let $\data_i$ be the ultra low dose \ac{CT} data itself, which results in more complex architectures of the neural networks. 
On the other hand, using \ac{FBP} as a pre-processing step (i.e., $\data_i$ is \ac{FBP} reconstruction from ultra low dose data) simplifies the choice of architectures and poses no limitation in the theoretical setting with infinite data (see \cite[section~8]{Adler:2018ab}).

\begin{figure}[t]
	\centering	
	\includegraphics[height=0.187\textheight]{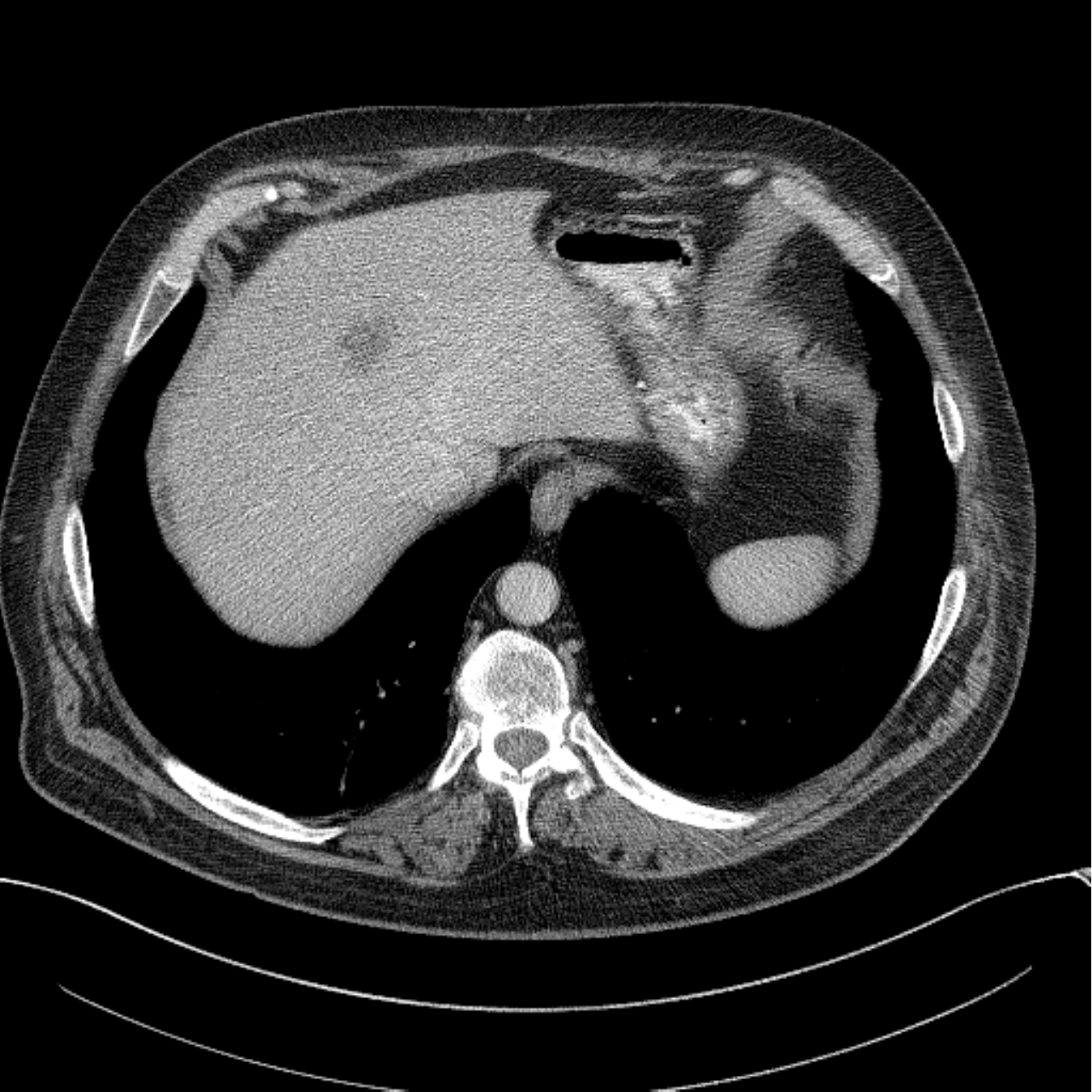}
	\includegraphics[height=0.187\textheight]{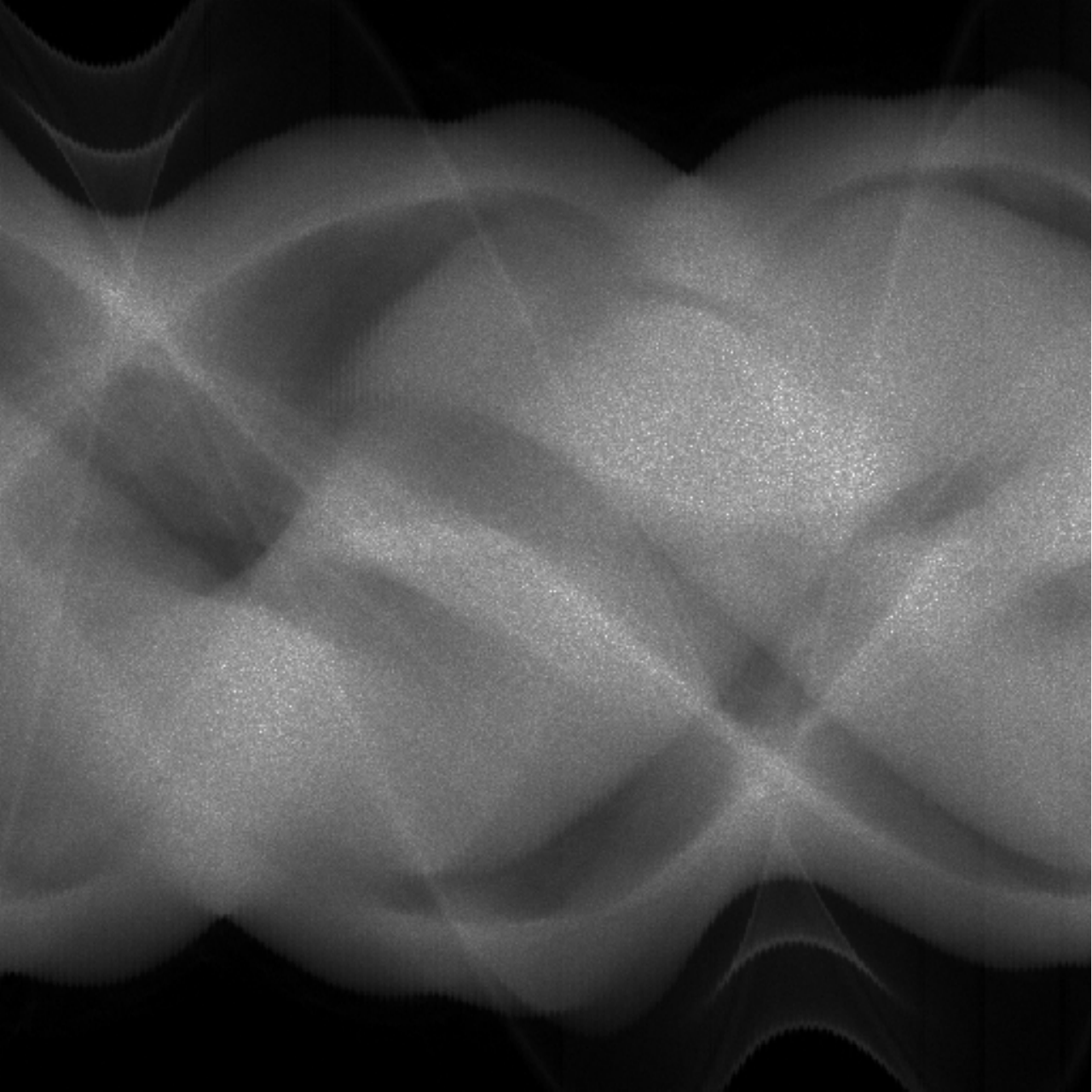}
	\includegraphics[height=0.187\textheight]{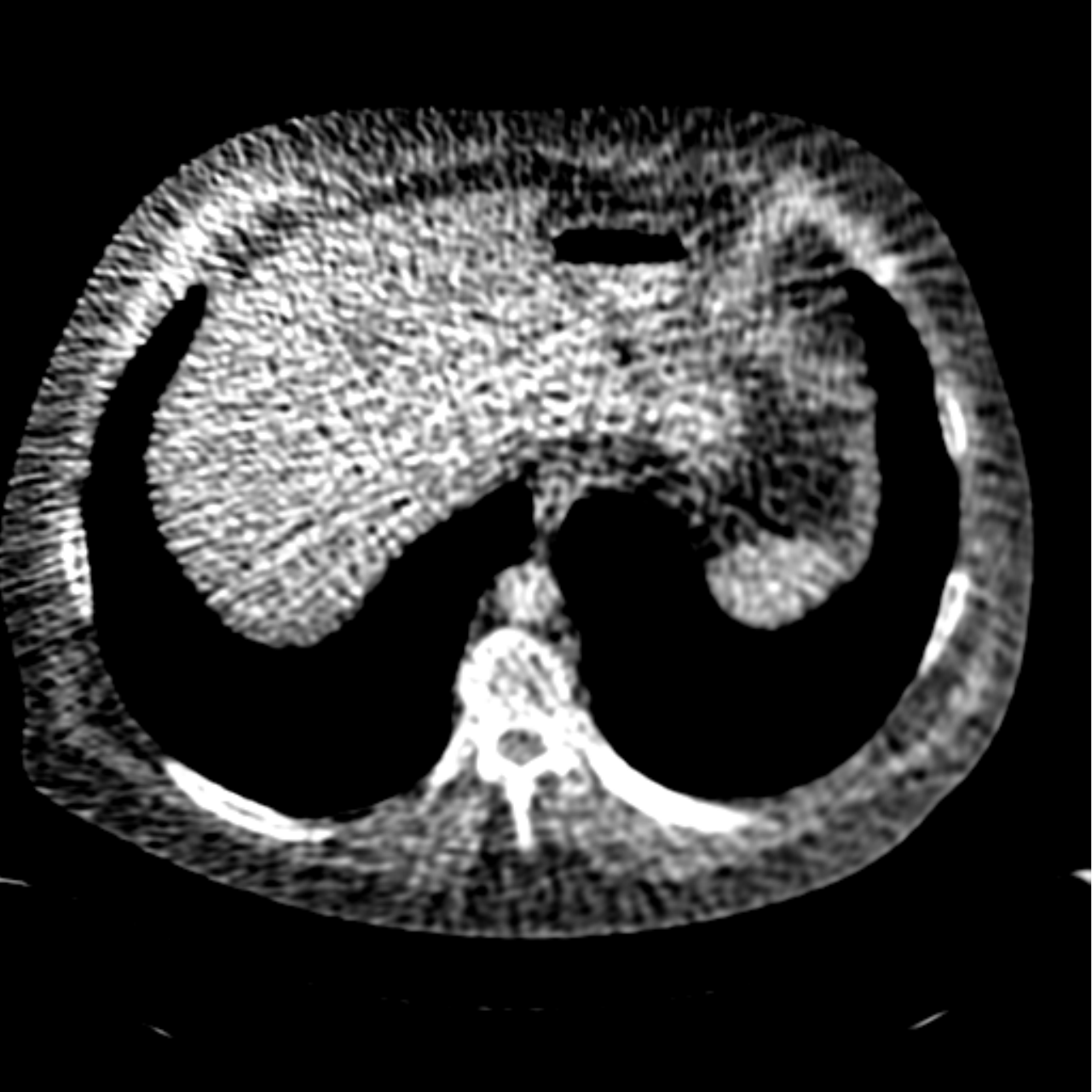}
	\caption{Test data: Normal dose image (left), subset of \ac{CT} data from a ultra-low dose 3D helical scan (middle), and corresponding \ac{FBP} reconstruction (right). Images are shown using a display window set to $[-150, 200]$ \hounsfield.}
	\label{fig:input_data}
\end{figure}

\paragraph{Training data}
We used training data from the Mayo Clinic Low Dose \ac{CT} challenge \cite{AAPMLowDose}. This data consists of ten \ac{CT} scans, of which we use nine for training and one for evaluation. Each 3D image $\signal_i$ has a corresponding ultra low dose data that is generated by using only $\approx 10 \%$ of the full data and adding additional Poisson noise so that the dose corresponds to $2\%$ of normal dose.
Applying \ac{FBP} on this data yields the ultra low dose \ac{CT} images, see \cref{sec:training_data_details} for a detailed description.

An example of normal dose \ac{CT} reconstruction, tomographic data, and the ultra low dose \ac{FBP} reconstruction is shown in \cref{fig:input_data}.

\paragraph{Network architecture and training}
The operators 
\begin{alignat*}{2} 
   \Generator_{\genparam} &\colon  \GenSpace \times \DataSpace \to \RecSpace &\qquad\qquad
   \RecOp_{\recparam^*} &\colon \DataSpace \to \RecSpace 
\\   
   \Discriminator_{\discrparam} &\colon \RecSpace \times \DataSpace \to \Real &\qquad\qquad
   \DeepVariation_{\variationparam^*} &\colon \DataSpace \to \RecSpace
\end{alignat*}
are represented by multi-scale residual neural networks. For computational reasons, we applied the method slice-wise, see \cref{sec:architecture} for details regarding the exact choice of architecture and training procedure.

The parts related to the inverse problem (tomography) were implemented using the ODL framework \cite{Adler:2017ab} with ASTRA \cite{Aarle:2016aa} as back-end for computing the ray-transform and its adjoint. The learning components were implemented in TensorFlow \cite{Abadi:2016aa}.

\subsection{Results}

\paragraph{Estimators}
A typical use-case of Bayesian inversion is to compute estimators from the posterior. In our case, we are interested in the conditional mean and point-wise standard deviation (square root of variance).

When using posterior sampling, we compute the conditional mean and point-wise standard deviations based on 1\,000 images sampled from the posterior, see \cref{sec:individual_samples} for some examples of such images. For direct estimation we simply evaluated the associated trained networks.
Both approaches are computationally feasible, the time needed per slice to compute these estimators is $\unit{40}{\second}$ using posterior sampling based on 1\,000 samples and $\unit{80}{\milli\second}$ for direct estimation.

The mean and standard-deviations that were computed using both methods are shown in \cref{fig:bayesian_results}.
We note that results from the methods agree very well with each other, indicating that the posterior samples follow the posterior quite well, or at least that the methods have similar bias. 
The posterior mean looks as one would expect, with highly smoothed features due to the high noise level. 
Likewise, the standard deviation is also as one would expect, with high uncertainties around the boundaries of the high contrast objects.
We also note that the standard deviation at the white ``blobs'' that appear in some samples (see \cref{sec:individual_samples}) is quite high, indicating that the model is uncertain about their presence.
There is also a background uncertainty at $\approx \unit{20}{\hounsfield}$ due to point-wise noise in the reference normal-dose scans that we take as ground truth.

\begin{figure}[t]
	\centering	
	\begin{tabular}{r c c l l }
		& Posterior sampling & Direct estimation &
		\\
		\raisebox{5em}[0pt][0pt]{\rotatebox[origin=c]{90}{Mean}} \hspace{-4mm} &
		\includegraphics[height=0.187\textheight]{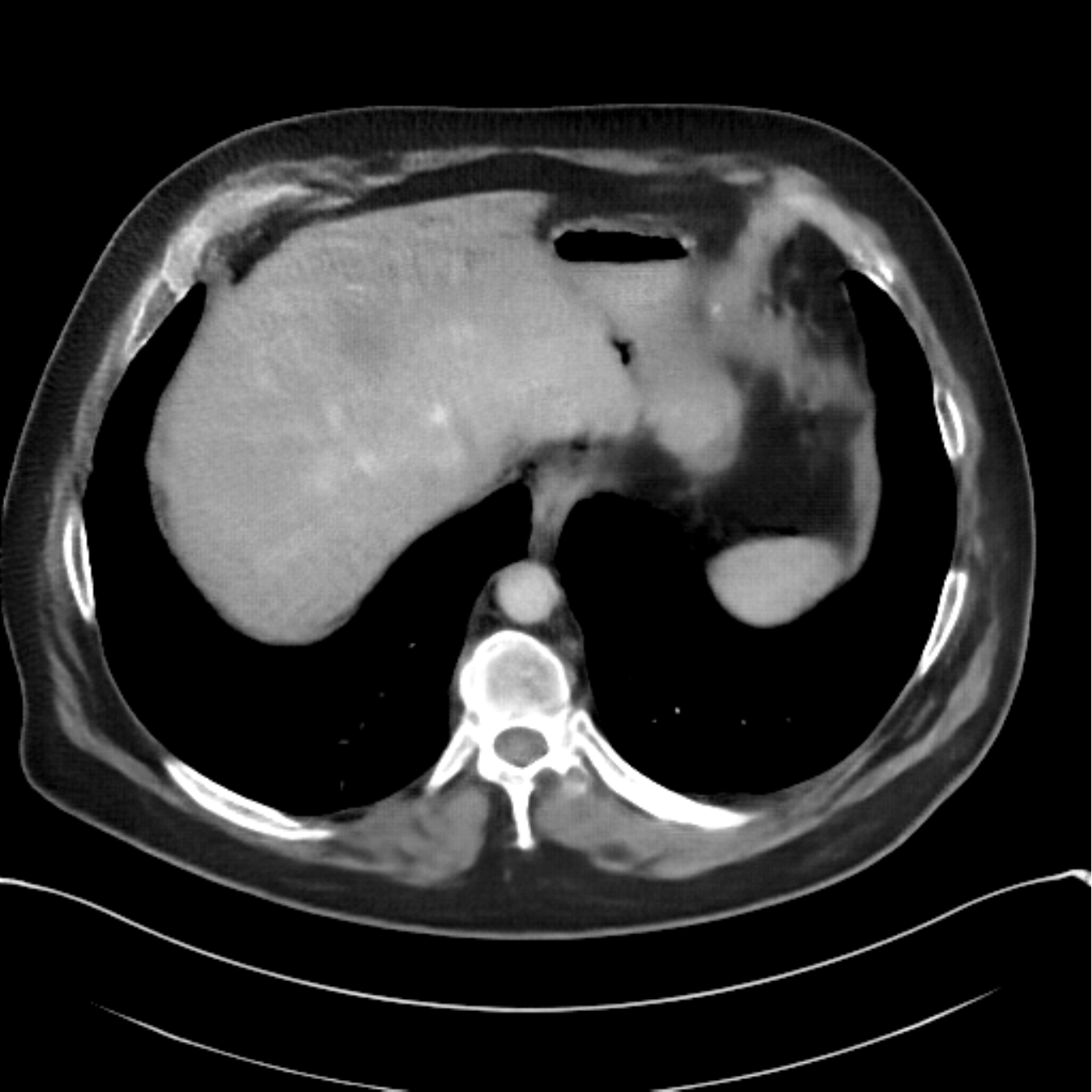}
		\hspace{-4mm} &
		\includegraphics[height=0.187\textheight]{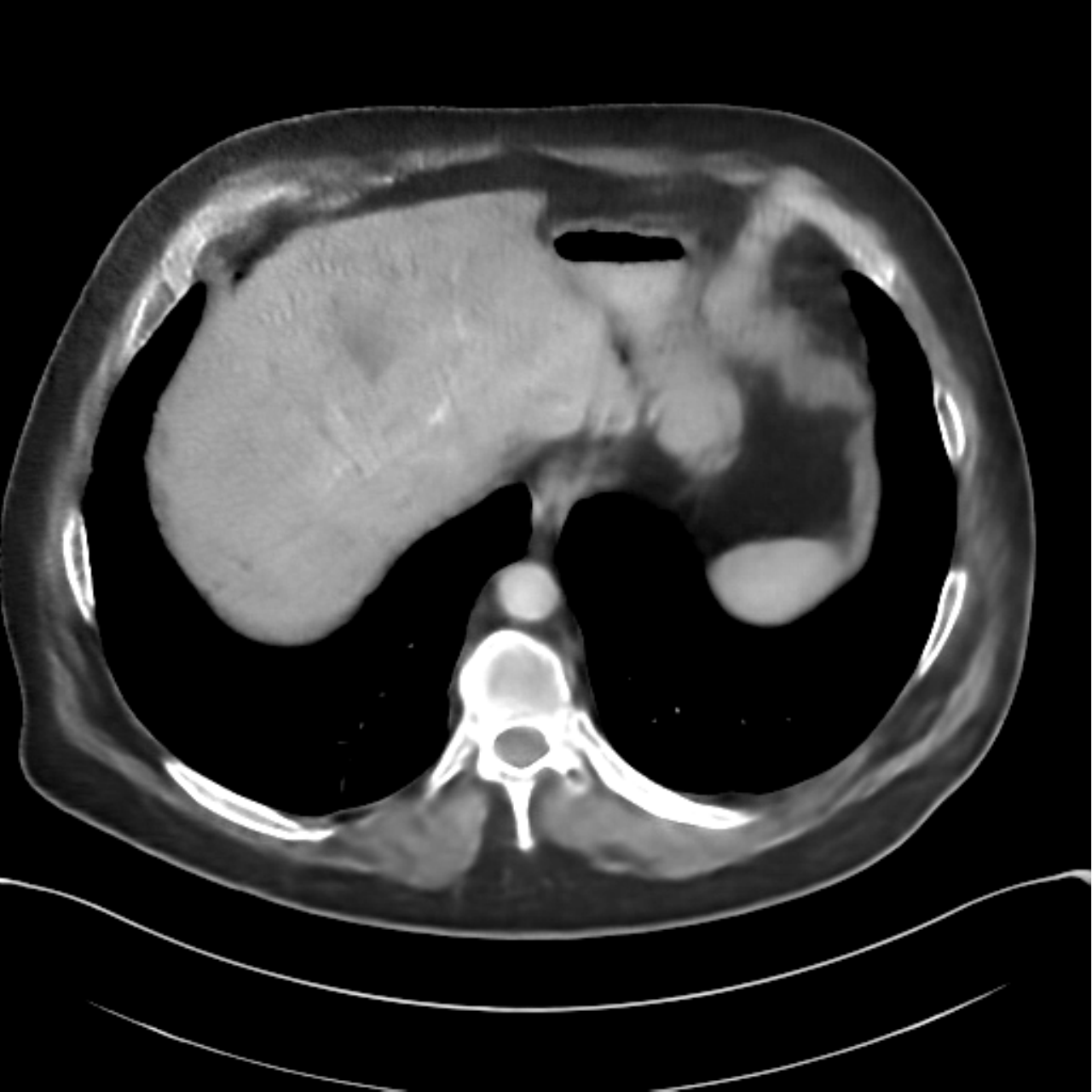}
		\hspace{-4mm} &
		\includegraphics[width=0.02\textwidth, height=0.187\textheight]{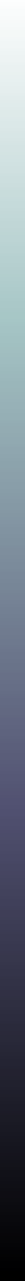} &
		\hspace{-4mm}
		$\overset{\raise0.16\textheight\hbox{\footnotesize\text{200 HU}}}{\footnotesize \text{-150 HU}}$
		\\
		\raisebox{5em}[0pt][0pt]{\rotatebox[origin=c]{90}{pStd}} \hspace{-4mm} &
		\includegraphics[height=0.187\textheight]{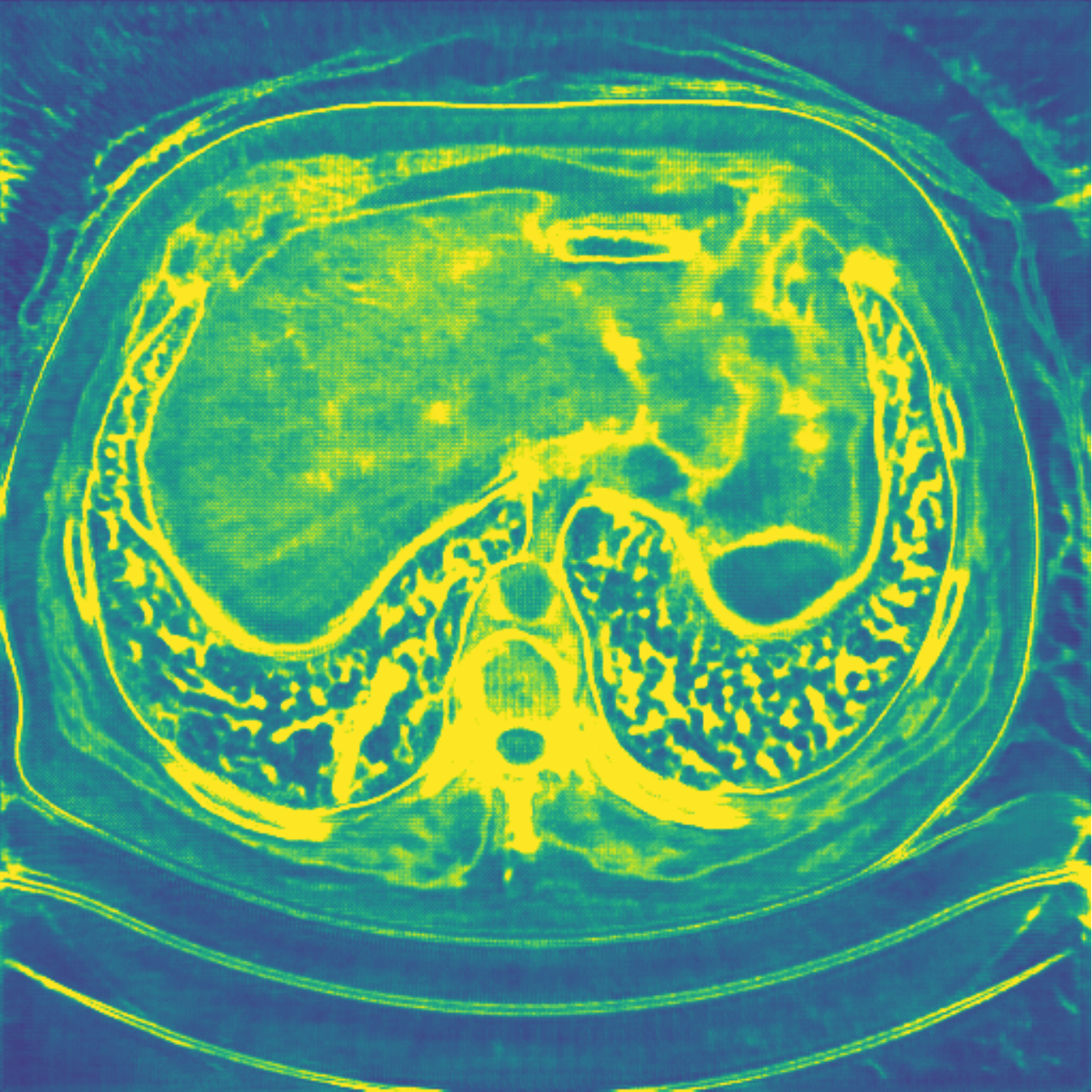}
		\hspace{-4mm} &
		\includegraphics[height=0.187\textheight]{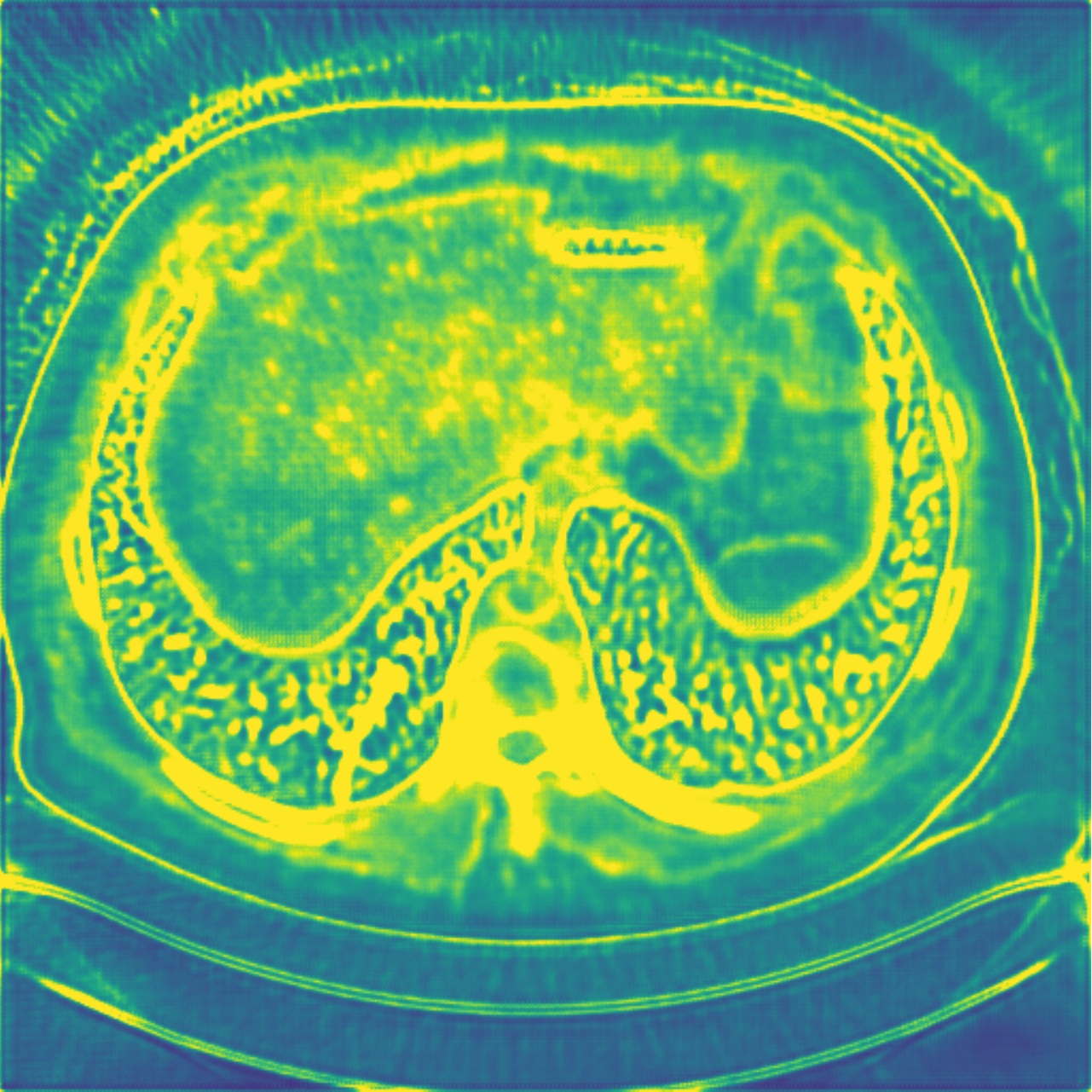}
		\hspace{-4mm} &
		\includegraphics[width=0.02\textwidth, height=0.187\textheight]{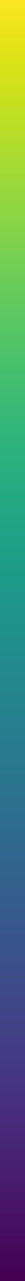} &
		\hspace{-4mm}
		$\overset{\raise0.157\textheight\hbox{\footnotesize\text{50 HU}}}{\footnotesize\text{0 HU}}$
	\end{tabular}
	\caption{Conditional mean and point-wise standard deviation (pStd) computed from test data (\cref{fig:input_data}) using posterior sampling (\cref{sec:WGAN}) and direct estimation (\cref{sec:DirectUC}).}
	\label{fig:bayesian_results}
\end{figure}

\paragraph{Uncertainty quantification}
We here show how to use Bayesian credible sets for clinical image guided decision making.
One computes a reconstruction from ultra low dose data (middle image in \cref{fig:input_data}), identifies one or more features, and then seeks to estimate the likelihood for the presence of these features.

Formalizing the above, let $\Delta$ denote the difference in mean intensity in the reconstructed image between a region encircling the feature and the surrounding organ, which in our example is the liver. 
The feature is said to ``exist'' whenever $\Delta$ is bigger than a certain threshold, say $\unit{10}{\hounsfield}$.

To use posterior sampling, start by computing the conditional mean image (top left in \cref{fig:bayesian_results}) by sampling from the posterior using the conditional \ac{WGAN} approach in \cref{sec:WGAN}. 
There is a dark ``spot'' in the liver (possible tumor) and a natural clinical question is to statistically test for the presence of this feature.
To do this, compute $\Delta$ for a number of samples generated by posterior sampling, which here is the same 1\,000 samples used for computing the conditional mean.
We estimate $p := \Prob(\Delta > \unit{10}{\hounsfield})$ from the resulting histogram in \cref{fig:hypothesis_results} and clearly $p > 0.95$, indicating that the ``dark spot'' feature exists with at least 95\% significance. 
This is confirmed by the ground truth image (left image in \cref{fig:input_data}).
The conditional mean image also under-estimates $\Delta$, whose true value is the vertical line in \cref{fig:hypothesis_results}. 
This is to be expected since the prior introduces a bias towards homogeneous regions, a bias that decreases as noise level decreases. 

To perform the above analysis using direct estimation, start with computing the conditional mean image from the same ultra-low dose data using direct estimation. 
As expected, the resulting image (top right in \cref{fig:bayesian_results}) shows a dark ``spot'' in the liver. 
Now, designing and training a neural network that directly estimates the distribution of $\Delta$ is unfeasible in a general setting.
However, as shown in \cref{sec:DirectUC}, this is possible if one assumes pixels are independent of each other. 
The estimated distribution of $\Delta$ is the curve in \cref{fig:hypothesis_results} and we get $p > 0.95$, which is consistent with the result obtained using posterior sampling. 
The direct estimation approach is based on assuming independent pixels, so it will significantly underestimate the variance. 
In contrast, the approach based on posterior sampling seems to give a more realistic estimate of the variance.
\begin{figure}[t]
	\centering
	\includegraphics[height=0.20\textheight]{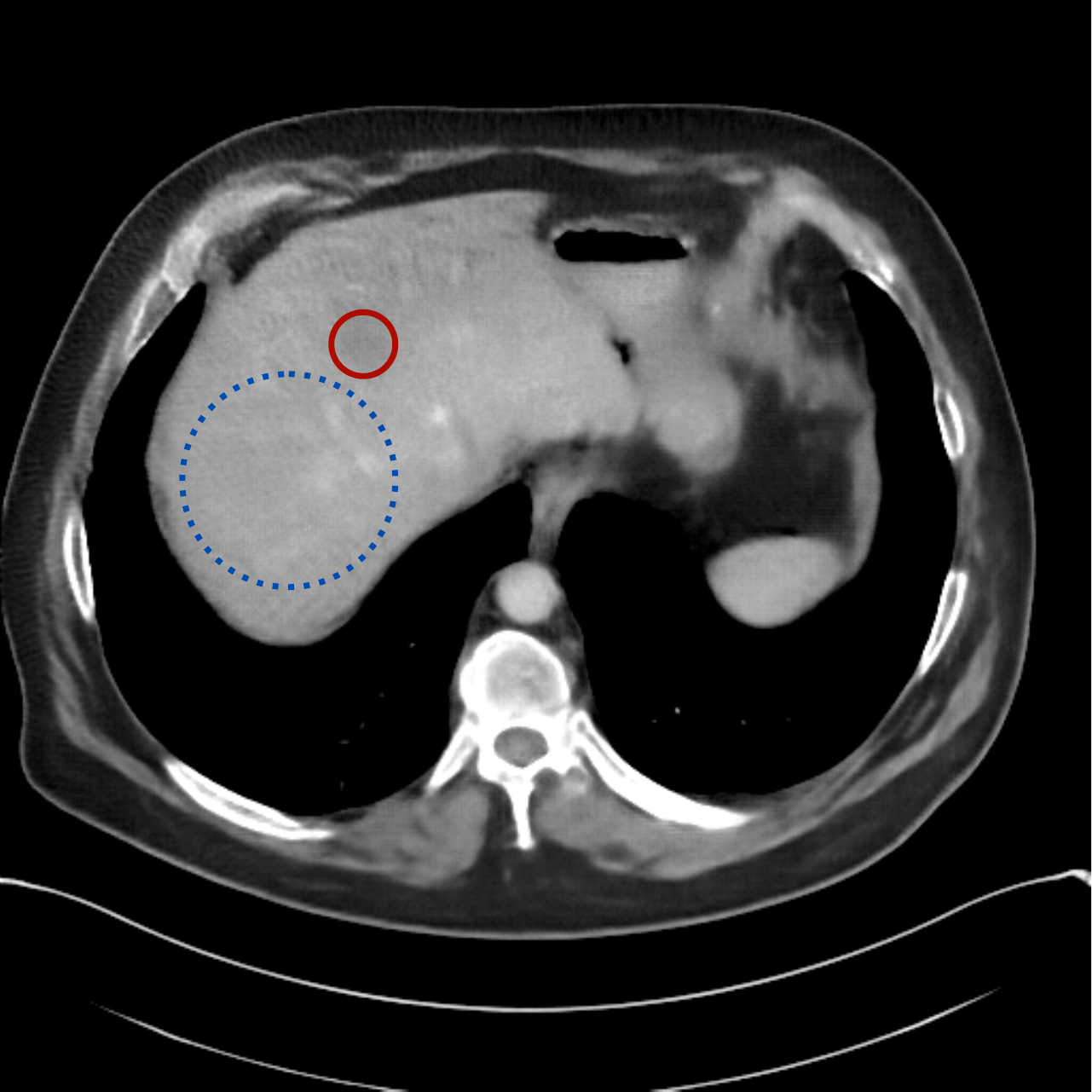}
	\raisebox{-0.7em}{\includegraphics[height=0.22\textheight]{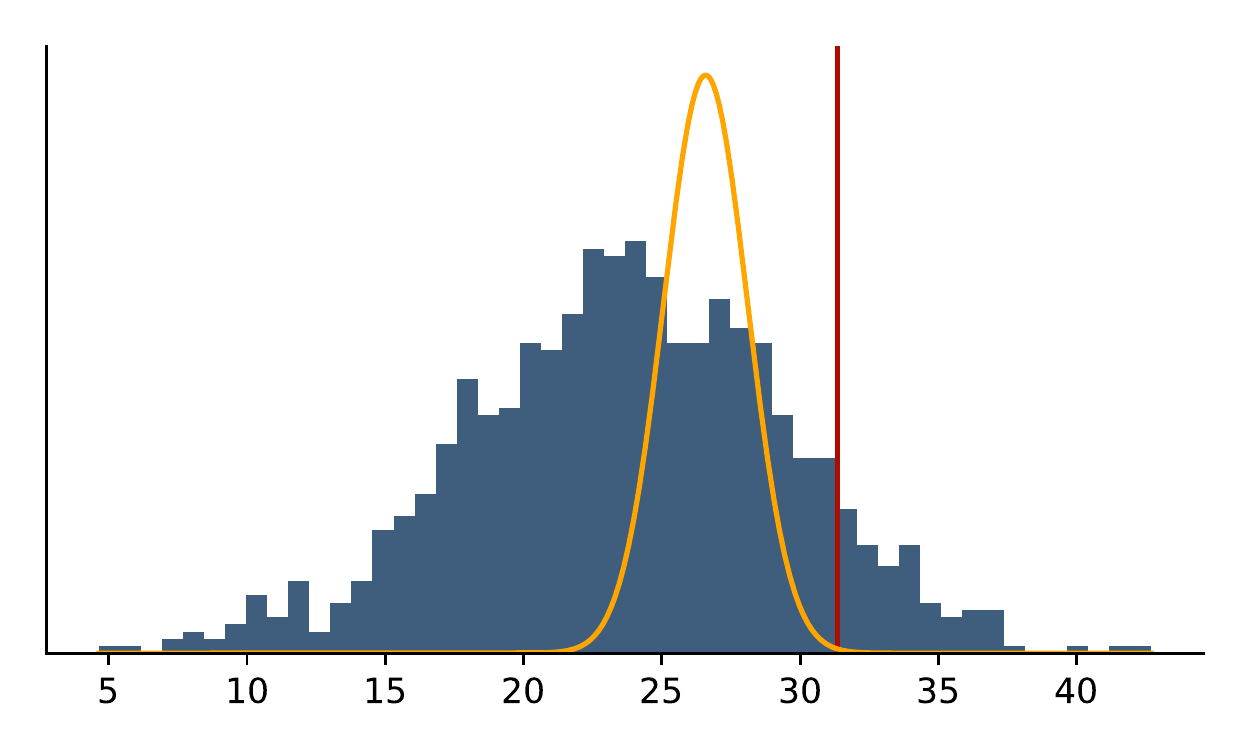}}
	\caption{The suspected tumor (red) and the reference region (blue) shown in the sample posterior mean image. Right plot shows average contrast differences between the tumor and reference region. The histogram is computed by posterior sampling applied to test data (\cref{fig:input_data}), the yellow curve is from direct estimation, and the true value is the red threshold.}
	\label{fig:hypothesis_results}
\end{figure}

\section{Related Work}\label{sec:RelWork}
Deep learning based methods are increasingly used for medical image reconstruction, either by using deep learning for post-processing \cite{DirectionalWavelet, UnserInverse} or by integrating deep learning into the image reconstruction \cite{DeepADMMNET,Adler:2017aa, Adler:2018aa,PockMRI,LEARN,DeepDBar,GradientBoostedPAT}. 
These papers start by specifying the loss and then use a deep neural network to minimize the expected loss.
This essentially amounts to directly computing a Bayes estimator with a risk is given by the loss. 
The loss is often the squared $L_2$-distance, which implicitly implies that one approximates the conditional mean. 
Hence, the above approaches could be seen as examples of deep direct estimation. 
There is however an important difference, in deep direct estimation one starts by explicitly specify the estimator, which then implies the appropriate loss function. 

There has also been intense research in selecting a loss function different from the $L_2$-loss \cite{Johnson:2016aa, Adler:2017ac} and specifically \ac{GAN}-like methods have been applied to image post-processing in \ac{CT} \cite{Wolterink:2017aa,cGANCT} and image reconstruction in \ac{MRI} (Fourier inversion) \cite{Yang:2018aa,Mardani:2017aa}. However, in these papers the authors discard providing any randomness to the generator, instead only giving it the prior. They have thus not fully realized the potential of using \acp{GAN} for sampling from the posterior in Bayesian inversion.

Regarding sampling from a posterior, conditional generative models \cite{ConditionalGAN, Nguyen:2016aa} have been widely used in the machine learning literature for this purpose. 
Typical use cases is to sample from a posterior where an image is conditioned on a text, like ``the bird is yellow'' \cite{Hu:2017aa,Deng:2017aa}, but also for simple applications in imaging, including image super-resolution and in-painting \cite{Ledig:2016aa,Parmar:2018aa,ImageTransformer}. 
These approaches do not consider sampling from the posterior for more elaborate inverse problems that involve a physics driven data likelihood.
An approach in this direction is presented in  \cite{Kohl:2018aa} where variational auto-encoders are used to sample from the posterior of possible segmentations (model parameter) given \ac{CT} images (data).

An entirely different class of methods for exploring the posterior are based on \ac{MCMC} techniques, which have revolutionized mathematical computation and enabled statistical inference within many previously intractable models.
Most of the techniques are rooted in solid mathematical theory, but they are limited to cases where the prior model is known in closed form, see surveys in \cite{Dashti:2016aa,Calvetti:2017aa,Barp:2018aa}.
Furthermore, these \ac{MCMC} techniques are still computationally unfeasible for large-scale inverse problems, like 3D clinical \ac{CT}.

A computationally feasible alternative to \ac{MCMC} for uncertainty quantification is to consider asymptotic characterizations of the posterior. 
For many inverse problems, it is possible to prove Bernstein--von Mises type of theorems that characterizes the posterior using analytic expressions assuming the prior is asymptotically uninformative \cite{Nickl:2017aa}.
Such characterizations do not hold for finite data, but assuming a Gaussian process model (data likelihood and prior are both Gaussian) allows for using numerical methods for linear inverse problems \cite{Purisha:2018aa}.
Gaussian process models are however still computationally demanding and it can be hard to design appropriate priors, so \cite{Garnelo:2018aa,Garnelo:2018ab} introduces (conditional) neural processes that incorporate deep neural networks into Gaussian process models for learning more general priors. 

Finally, another computationally feasible approach for uncertainty quantification is to approximate Bayesian credible sets for the \ac{MAP} estimator by solving a convex optimization problem \cite{Repetti:1803aa,Pereyra:2017aa}. 
The approach is however restricted to the \ac{MAP} estimator and furthermore, it requires access to a handcrafted prior.

\section{Conclusions}
Bayesian inversion is an elegant framework for recovering model parameters along with uncertainties that applies to a wide range of inverse problems.
The traditional approach requires specifying a prior and, depending on the choice of estimator, also the probability of data. 
Furthermore, exploring the posterior remains a computational challenge. 
Hence, despite significant progress in theory and algorithms, Bayesian inversion remains unfeasible for most large scale inverse problems, like those arising in imaging.

This paper addresses \emph{all} these issues, thereby opening up for the possibility to perform Bayesian inversion on large scale inverse problems. 
Capitalizing on recent advances in deep learning, we present two approaches for performing Bayesian inversion: \emph{Deep Posterior Sampling} (\cref{sec:WGAN}), which uses a \ac{GAN} to sample from the posterior, and \emph{Deep Direct Estimation} (\cref{sec:DirectUC}) that computes an estimator directly using a deep neural network.

The performance of both approaches is demonstrated in the context of ultra low dose ($2\%$ of normal dose) clinical 3D helical \ac{CT} imaging (\cref{sec:NumericalExp}).
We show how to compute basic Bayesian estimators, like the posterior mean and point-wise standard deviation. 
We also compute Bayesian credible sets and use this for testing whether a suspected ``dark spot'' in the liver, which is visible in the posterior mean image, is real. 
The quality of the posterior mean reconstruction is also quite promising, especially bearing in mind that it is computed from \ac{CT} data that corresponds to $2\%$ of normal dose. 

To the best of our knowledge, \emph{this is the first time one can perform such computations on large scale inverse problems in a timely manner, like clinical 3D helical \ac{CT} image reconstruction}.
On the other hand, using such a radically different way to perform image reconstruction in clinical practice quickly gets complicated since it must be preceded by clinical trials in the context of image guided decision making.
However, there are many advantages that comes with using our proposed approach, which for medical imaging means integrating imaging with clinical decision making while accounting for the uncertainty.

To conclude, the posterior sampling approach allows one to perform Bayesian inversion on large scale inverse problems that goes beyond computing specific estimators, such as the \ac{MAP} or conditional mean. 
The framework is not specific to tomography, it applies to essentially any inverse problem assuming access to sufficient amount of ``good'' supervised training data.
Furthermore, the possibility to efficiently sample from the posterior opens up for new ways to integrate decision making with reconstruction. 

\section{Discussion and Outlook}
There are several open research topics related to using \acp{GAN} as generative models for the posterior in inverse problems. 

One natural topic is to have a precise notion of ``good'' supervised training data. 
Specifically, it is desirable to estimate the amount of supervised training data necessary for ``resolving'' the posterior/estimator up to some accuracy.
Unfortunately, most of the current theory for Bayesian inference does not apply directly to this setting. 
Its emphasis is on characterizing the posterior in the asymptotic regime where information content in data increases indefinitely and the prior is asymptotically non-informative, like when a Gaussian prior is used. 

Another research topic is to study whether there are theoretical guarantees that ensure the conditional \ac{WGAN} generator given by \cref{eq:FormulationWithZ} converges towards the posterior. 
In \cite{Goodfellow:2014aa} one proves that given infinite capacity of the discriminator, the optimal generator minimizes the Jensen--Shannon divergence w.r.t. the target distribution.
For the case with \ac{WGAN}, \cite[Lemma~6]{Qi:2017aa} shows that one can learn the posterior in the sense of \cite[Definition 3.1]{Shalev-Shwartz:2014aa}, i.e. solving \cref{eq:GenericWGANFormulationNonParam} arbitrarily well, given enough training data.
But this does not settle the question of what happens with realistic sample and model capacities.
This is part of a more general research theme for investigating the theoretical basis for using \acp{GAN} trained on supervised data to sample from high dimensional probability distributions \cite{Arora:2018aa}.

Yet another topic relates to including explicit knowledge about the data likelihood, which in contrast to the prior, can be successfully handcrafted for many inverse problems. 
This is essential for large-scale inverse problems where the amount of supervised training data is little and there are few opportunities for re-training when data the acquisition protocol changes. 
In this work, this knowledge was implicitly accounted for by our choice to use a \ac{FBP} reconstruction as the data.
While it can be proven that this is in theory sufficient for generating samples from the posterior \cite[section~8]{Adler:2018ab}, \cite{Adler:2017aa,Adler:2018aa} clearly shows that working directly from measured data gives better results. 
We therefore expect further improvements to our results by using a conditional \acp{WGAN} based on \ac{CNN} architectures that integrate a handcrafted data likelihood, such as those provided by learned iterative reconstruction.

Finally, our deep direct estimators were very easy to train with no major complications, but training the generative models for posterior sampling is still complicated and involves quite a bit of fine tuning and ``tricks''. We hope that future research in generative models will improve upon this situation.

\section*{Acknowledgments*}
The work was supported by the Swedish Foundation of Strategic Research grant AM13-0049, Industrial PhD grant ID14-0055 and by Elekta.
The authors also thank Dr. Cynthia McCollough, the Mayo Clinic, and the American Association of Physicists in Medicine, and acknowledge funding from grants EB017095 and EB017185 from the National Institute of Biomedical Imaging and Bioengineering, for providing the data.

\FloatBarrier

\clearpage
\thispagestyle{empty}
\quad\par
\vskip0.1\textheight
\begin{center}
  \textbf{\Huge Appendices}
\end{center}
\clearpage

\appendix
\acresetall

\section{The Wasserstein 1-distance}
\label{sec:Wasserstein}
Let $\RecSpace$ be a measurable separable Banach Space and $\PClass_{\RecSpace}$ the space of probability measures on $\RecSpace$. 
The Wasserstein 1-distance $\Wasserstein \colon \PClass_{\RecSpace} \times \PClass_{\RecSpace} \to \Real$ is a metric on $\PClass_{\RecSpace}$ that can be defined as \cite[Definition~6.1]{Villani:2009aa} 
\begin{equation}\label{eq:WasserSteinMetricOrig}
   \Wasserstein(p,q) 
   := 
   \!\!
   \inf_{\mu \in \Pi(p,q)} \Expect_{(\stsignal,\stsignalother) \sim \mu}\bigl[ \Vert \stsignal-\stsignalother \Vert_{\RecSpace} \bigr]
   \quad\text{for $p,q \in \PClass_{\RecSpace}$.}
\end{equation}
In the above, $\Pi(p,q) \subset \PClass_{\RecSpace \times \RecSpace}$ denotes the family of joint probability measures on $\RecSpace \times \RecSpace$ that has $p$ and $q$ as marginals.
Note also that we assume $\PClass_{\RecSpace}$ only contains measures where the Wasserstein distance takes finite values (Wasserstein space), see \cite[Definition 6.4]{Villani:2009aa} for the formal definition.

The Wasserstein 1-distance in \cref{eq:WasserSteinMetricOrig} can be rewritten using the Kantorovich-Rubinstein dual characterization \cite[Remark~6.5 on p.~95]{Villani:2009aa}, resulting in 
\begin{equation}\label{eq:WassersteinKL}
	\Wasserstein(p,q) 
    = 
    \!\!
    \sup_{\subalign{& \Discriminator \colon \RecSpace \to \Real \,\,\, \\ & \LipClass{\Discriminator}{\RecSpace}}}
     \Bigl\{
       \Expect_{\stsignal \sim q}\bigl[ \Discriminator(\stsignal) \bigr] 
	-
       \Expect_{\stsignalother \sim p}\bigl[ \Discriminator(\stsignalother) \bigr] 
     \Bigr\}
     \quad\text{for $p,q \in \PClass_{\RecSpace}$.}
\end{equation}
Here, $\Lip(\RecSpace)$ denotes real-valued 1-Lipschitz maps on $\RecSpace$, i.e., 
\[
	\LipClass{\Discriminator}{\RecSpace}
	\quad
	\Longleftrightarrow
	\quad
	|
	 	\Discriminator(\signal_1) - \Discriminator(\signal_2)
 	|
 	\leq
 	\|\signal_1 - \signal_2\|_{\RecSpace}
	\quad\text{for all $\signal_1, \signal_2 \in \RecSpace$.}
\]
The above constraint can be hard to enforce in \cref{eq:WassersteinKL} as is, so following \cite{WGAN-GP, BWGAN} we prefer the \emph{gradient characterization}:
\[
	\LipClass{\Discriminator}{\RecSpace}
	\quad
	\Longleftrightarrow
	\quad
	\bigl\|\partial\! \Discriminator(\signal) \bigr\|_{\RecSpace^*}
	\leq
	1
	\quad\text{for all $\signal \in \RecSpace$,}	
\]
where $\partial$ indicates the Fr\'echet derivative and $\RecSpace^*$ is the dual space of $\RecSpace$. 
In our setting, $\RecSpace$ is an $L_2$ space, which is a Hilbert space so $\RecSpace^* = \RecSpace$ and the Fr\'echet derivative becomes the (Hilbert space) gradient of $\Discriminator$.

\section{Individual Posterior Samples}\label{sec:individual_samples}
It is instructive to visually inspect individual random samples of the posterior obtained from the conditional \ac{WGAN} generator.

Generating one such sample is fast, taking approximately $\unit{40}{\milli\second}$ on a desktop ``gaming'' PC.
Furthermore, as seen in \cref{fig:posterior_samples}, the generated samples look realistic, practically indistinguishable from the ground truth to the \emph{untrained} observer. 
With that said, some anatomical features are clearly misplaced, e.g., there are white ``blobs'' (blood vessels) in the liver. 
These are present because the supervised training set contained images from patients that were given contrast (see bottom row in \cref{fig:analytic_priors}), which has influenced the anatomical prior that is learned from the supervised data. 
\begin{figure}[t]
	\includegraphics[height=0.187\textheight]{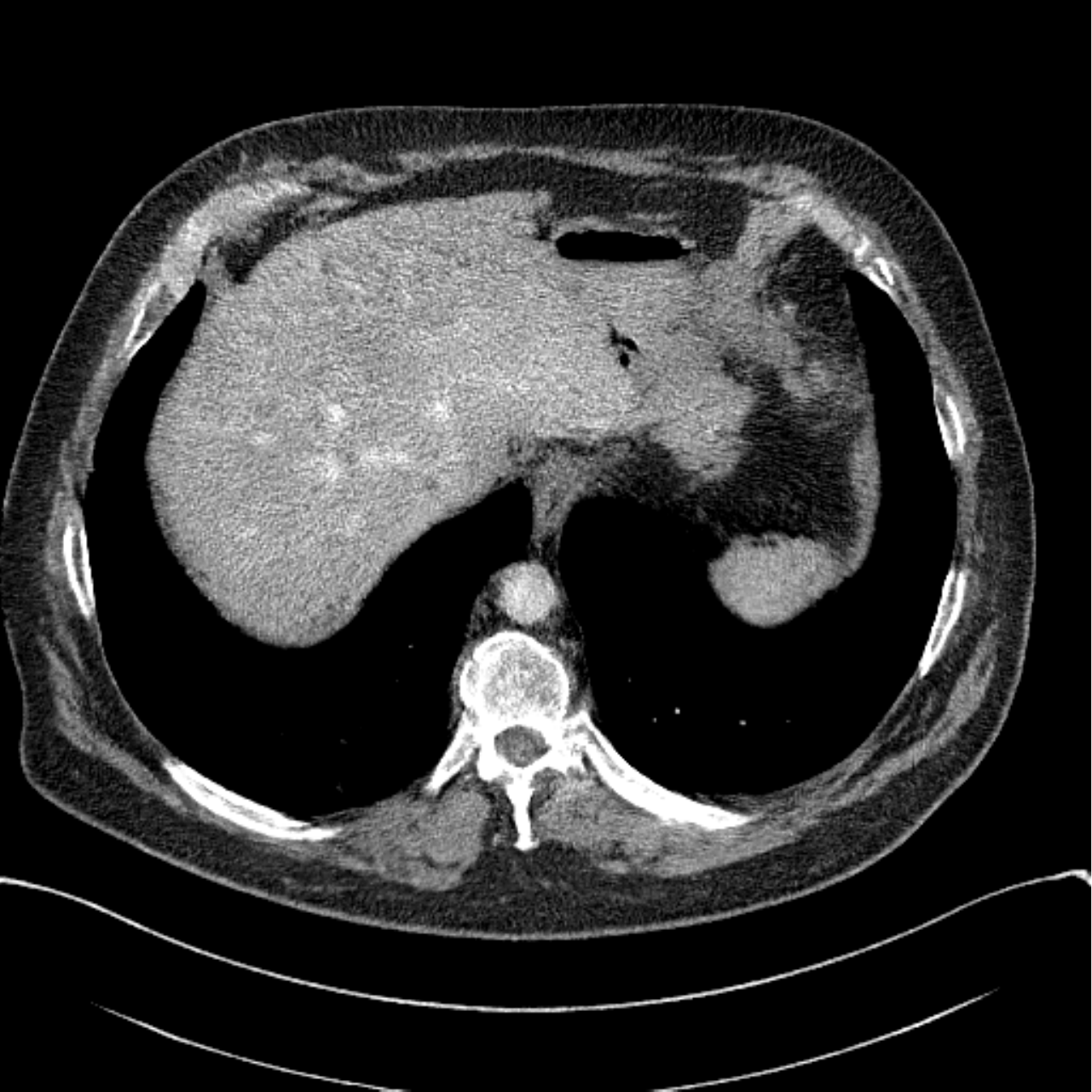}
	\includegraphics[height=0.187\textheight]{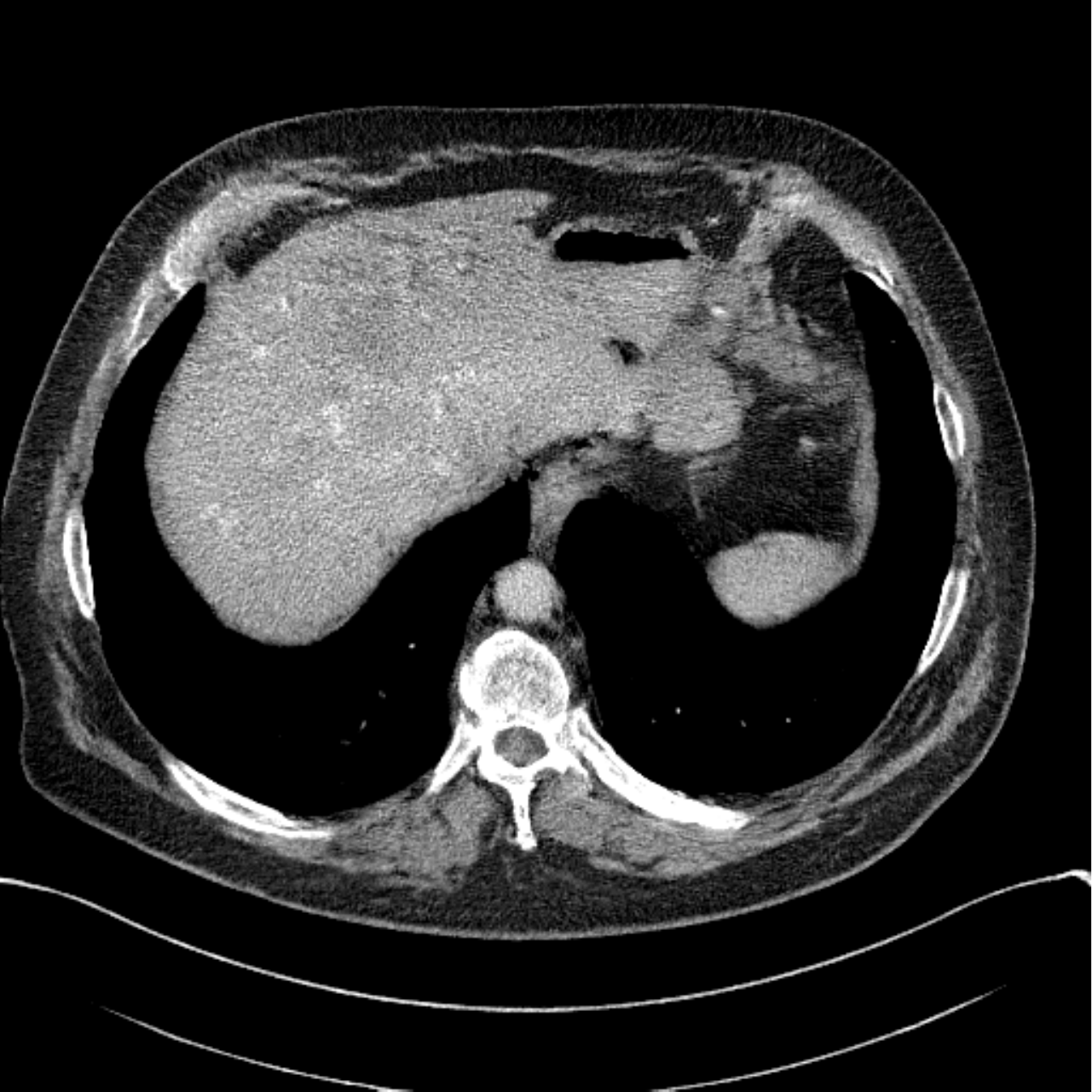}
	\includegraphics[height=0.187\textheight]{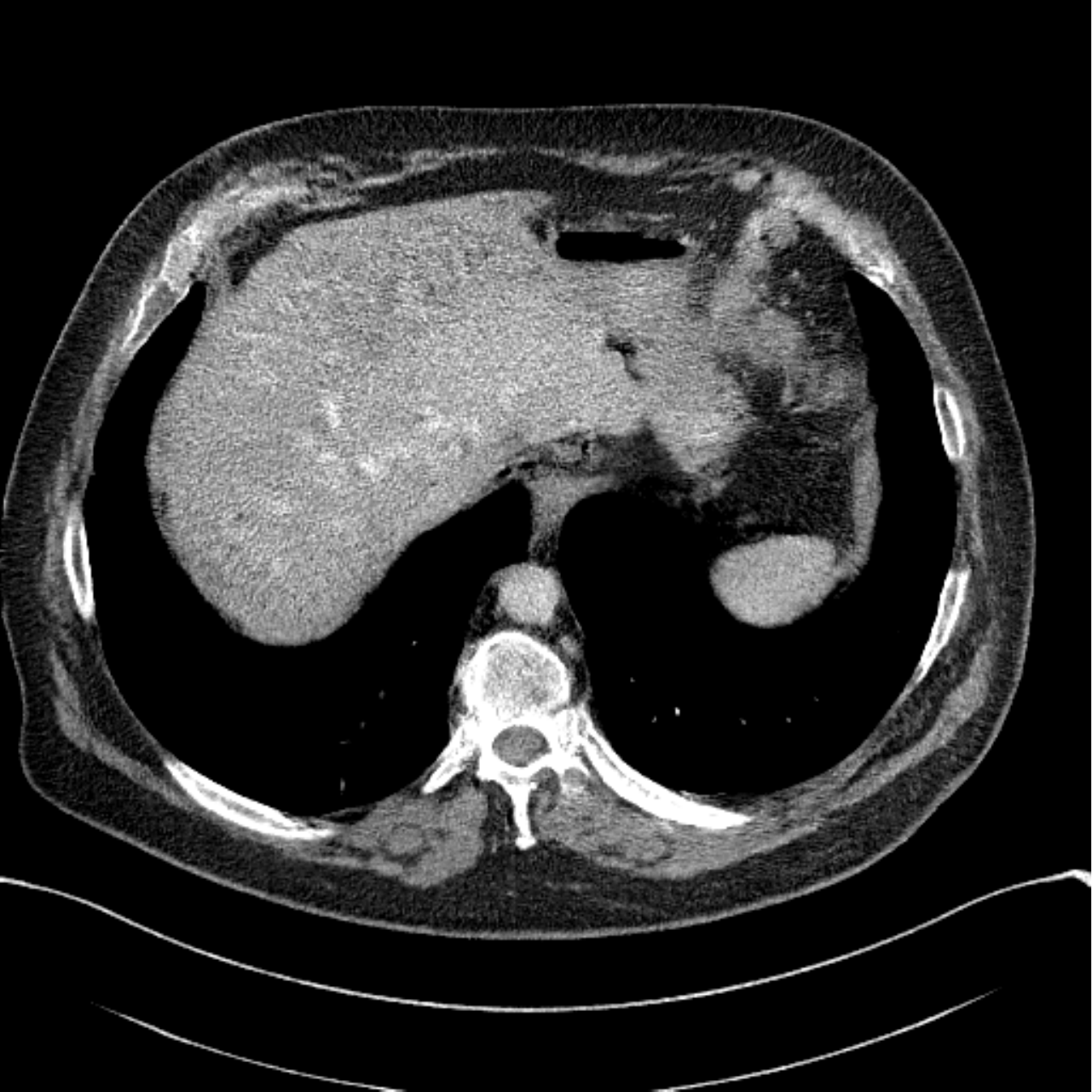}
	\\[0.25em]
	\includegraphics[height=0.187\textheight]{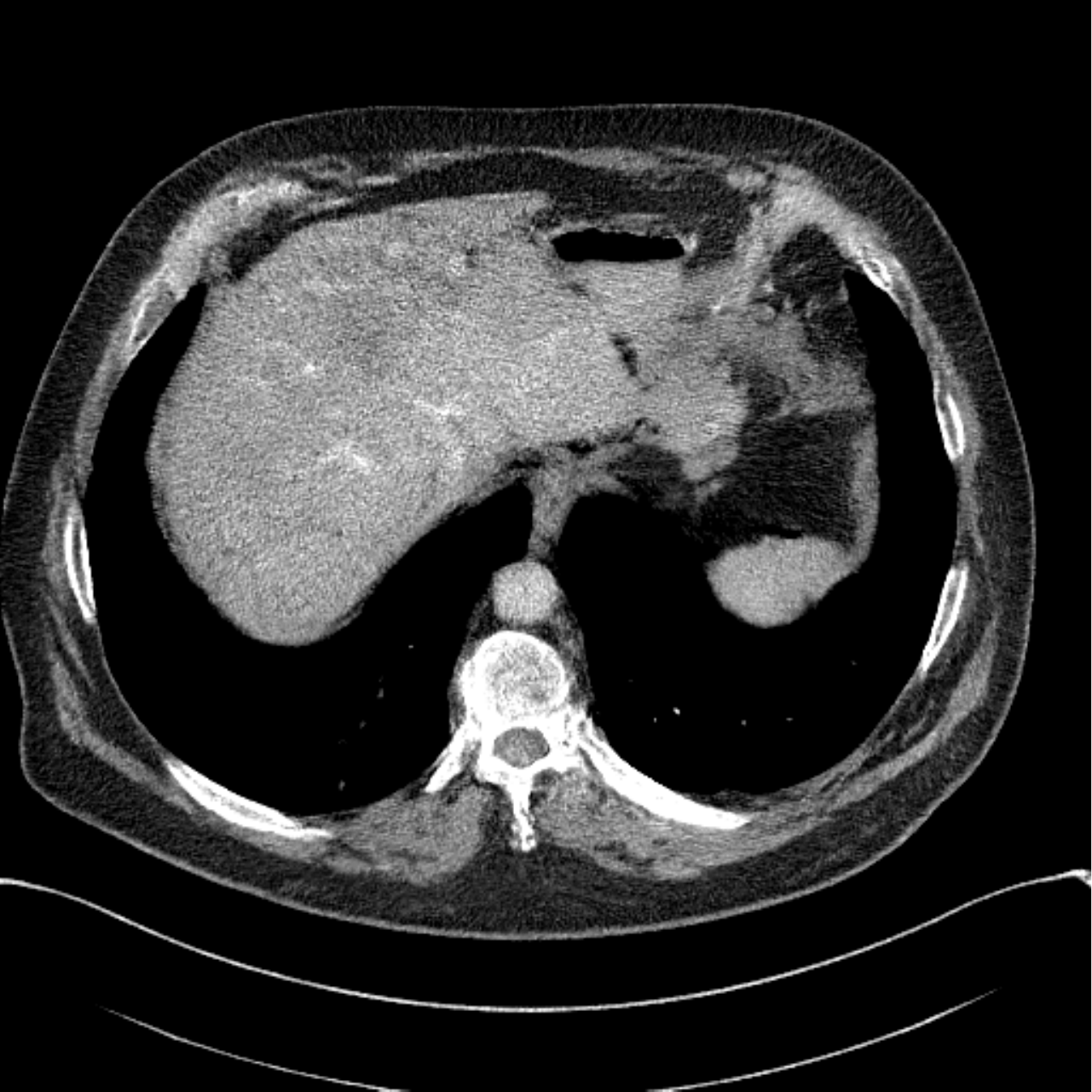}
	\includegraphics[height=0.187\textheight]{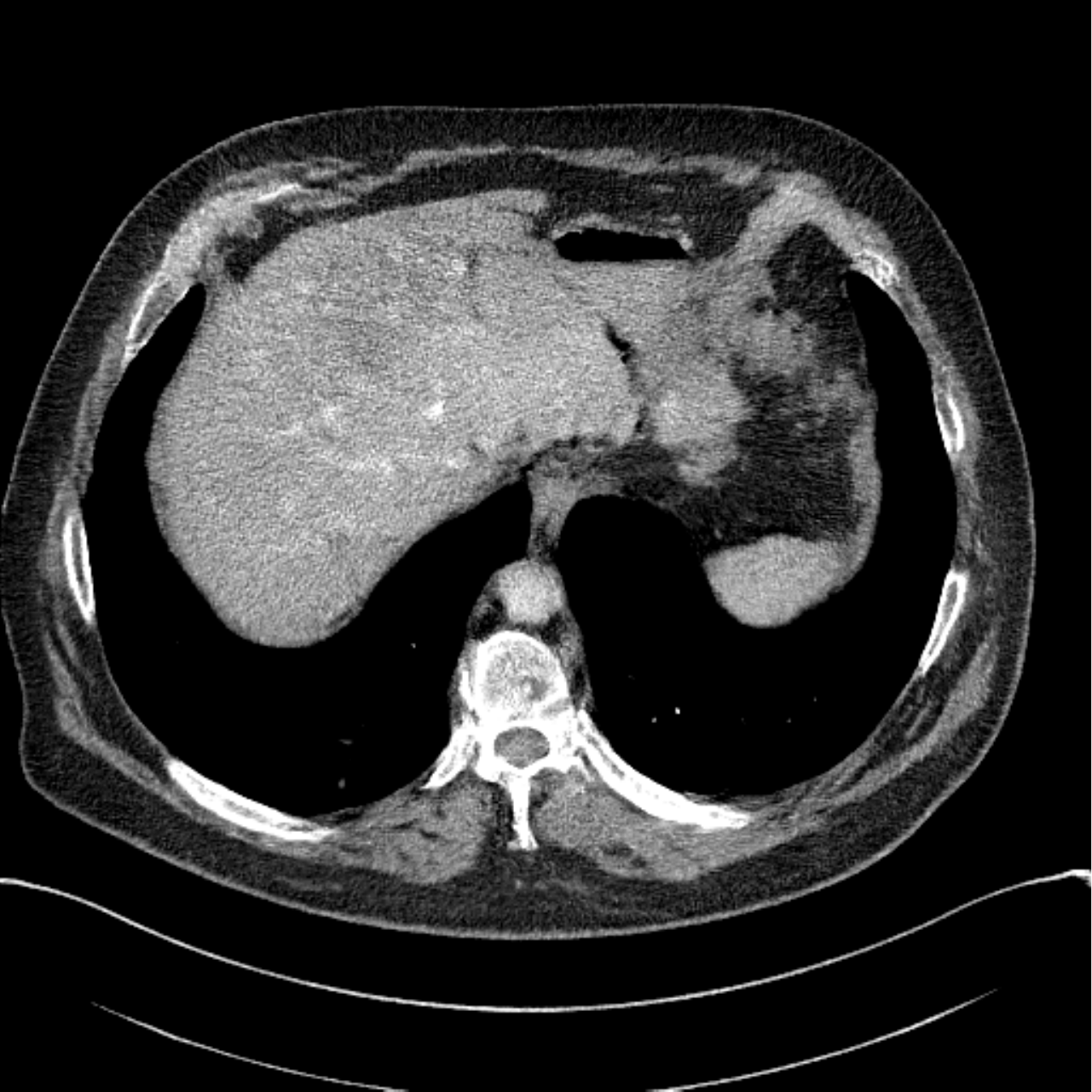}
	\includegraphics[height=0.187\textheight]{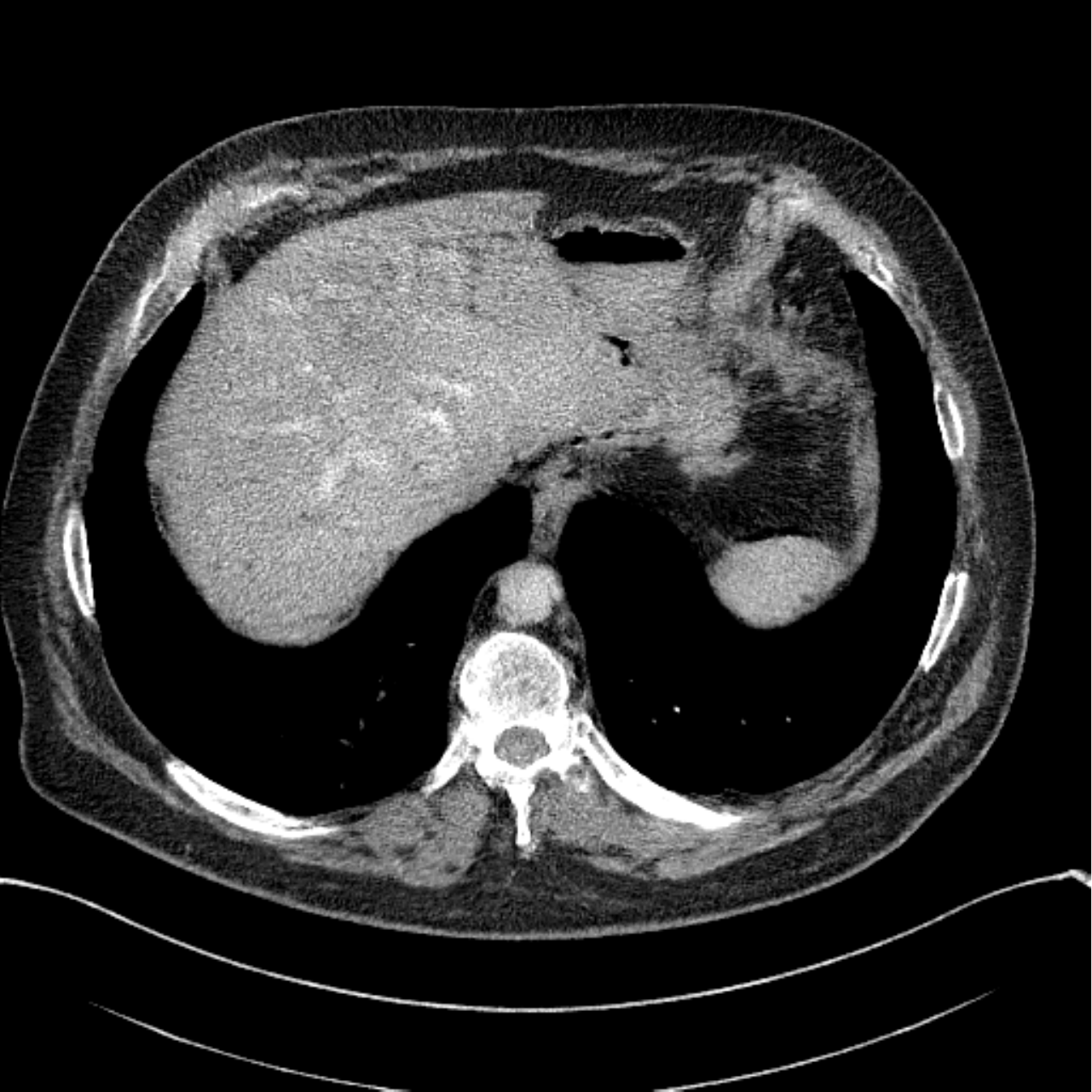}
	\caption{Deep posterior samples (\cref{sec:WGAN}) on test data (\cref{fig:input_data}) shown using a display window set to [-150, 200] \hounsfield.}
	\label{fig:posterior_samples}
\end{figure}

\section{Theory of Deep Bayesian Inversion}
This section contains the theoretical foundations needed for Deep Bayesian Inversion and derivations of the expressions used in the main article.

\subsection{Derivation of conditional \ac{WGAN}}\label{app:WGANMath}
This section provides the mathematical details for deriving \cref{eq:FormulationWithZ} from \cref{eq:WGANFormulation}.
Such a reformulation is well-known, but the derivation given here seems to be novel.

The overall aim with \ac{WGAN} is to approximate the posterior $\data \mapsto \posterior{\data}$ that is inaccessible. 
The approach is to construct a mapping that associates a probability measure on $\RecSpace$ to each data $\data \in \DataSpace$.
A family of such mappings can be explicitly constructed and trained against supervised training data in order to approximate the posterior.
To proceed we need to specify the statistical setting and our starting point is to introduce a ``distance'' between two probability measures on $\RecSpace$:
\begin{equation}\label{eq:ProbDist}
  \ProbDist \colon \PClass_{\RecSpace} \times \PClass_{\RecSpace} \to \Real_+. 
\end{equation}  
In the above, $\PClass_{\RecSpace}$ denotes the class of all probability measures on $\RecSpace$, so $\posterior{\data} \in \PClass_{\RecSpace}$ whenever the posterior exists, which holds under fairly general assumptions where both $\RecSpace$ and $\DataSpace$ can be infinite-dimensional separable Banach spaces \cite[Theorem~14]{Dashti:2016aa}. 
Next, let $\GenProbMapSpace$ denote a fixed family of \emph{generators}, which are mappings 
\begin{equation}\label{eq:GenProb}
  \GenProb \colon \DataSpace \to \PClass_{\RecSpace} 
\end{equation}  
that associate each $\data \in \DataSpace$ to a probability measure on $\RecSpace$.
Note here that $\data \mapsto \posterior{\data}$ is not necessarily contained in $\GenProbMapSpace$.
A generator in $\GenProbMapSpace$ is an ``optimal'' approximation of the posterior $\data \mapsto \posterior{\data}$ if it minimizes the expected $\ProbDist$-distance, i.e., it solves 
\begin{equation}\label{eq:GenericFormulationNonParam}
  \inf_{\GenProb \in \GenProbMapSpace}\,
    \Expect_{\stdata \sim \dataprob} \Bigl[
      \ProbDist \bigl( \GenProb(\stdata), \posterior{\stdata} \bigr)
    \Bigr].
\end{equation}
Here, $\stdata \sim \dataprob$ is the $\DataSpace$-valued random variable generating data. 

There are three issues that arise if a solution to \cref{eq:GenericFormulationNonParam} is to be used as a proxy for the posterior:
\begin{inparaenum}[(i)]
\item Evaluating the objective requires access to the very posterior, which we assumed was inaccessible, 
\item the distribution $\dataprob$ of data is almost always unknown, so the expectation cannot be computed, and finally 
\item the computational feasibility requires access to an explicit finite dimensional parametrization for constructing generators in $\GenProbMapSpace$ that one searches over in \cref{eq:GenericFormulationNonParam}.
\end{inparaenum}

As we shall see next, choosing \cref{eq:ProbDist} as the Wasserstein 1-distance allows us to addresses the first two issues. With this choice one can re-write the objective in \cref{eq:GenericFormulationNonParam} as an expectation over the joint law $(\stsignal,\stdata) \sim \jointlaw$, thereby avoiding expressions that explicitly depend on the unknown posterior $\data \mapsto \posterior{\data}$ and distribution of data $\dataprob$.
This joint law is also unknown, but it can often be replaced by its empirical counterpart derived from a suitable supervised training data set.

More precisely, choosing the Wasserstein 1-distance $\Wasserstein \colon \PClass_{\RecSpace} \times \PClass_{\RecSpace} \to \Real_+$ as $\ProbDist$ in \cref{eq:GenericFormulationNonParam} yields 
\begin{equation}\label{eq:GenericWGANFormulationNonParam}
  \inf_{\GenProb \in \GenProbMapSpace}\,
    \Expect_{\stdata \sim \dataprob} \Bigl[
      \Wasserstein \bigl( \GenProb(\stdata), \posterior{\stdata} \bigr)
    \Bigr].
\end{equation}
Note here that $\data \mapsto \Wasserstein \bigl( \GenProb(\data), \posterior{\data} \bigr)$ is assumed to be a measurable real-valued function on $\DataSpace$.
The Kantorovich-Rubinstein dual characterization in \cref{eq:WassersteinKL} yields
\begin{equation}\label{eq:WasserSteinMetric}
  \Wasserstein\bigl( \GenProb(\data), \posterior{\data} \bigr)
  = \!\!\!\sup_{\Discriminator_{\data} \in \Lip(\RecSpace)}     
     \biggl\{
       \Expect_{\subalign{ & \stsignal \sim \posterior{\data} \\ 
         & \stsignalother \sim \GenProb(\data)}}
       \bigl[ 
       \Discriminator_{\data}(\stsignal)  
       -
       \Discriminator_{\data}(\stsignalother) \bigr] 
    \biggr\}
    \quad\text{for $\data \in \DataSpace$.}
\end{equation}
Here, $\Lip(\RecSpace)$ denotes the set of real-valued mappings on $\RecSpace$ that are 1-Lipschitz.
Hence, \cref{eq:GenericWGANFormulationNonParam} can be written as  
\begin{equation}\label{eq:WassersteinFormulationNonParam}
\inf_{\GenProb \in \GenProbMapSpace}\,
  \Expect_{\stdata \sim \dataprob} \Biggl[ 
     \sup_{\Discriminator_{\stdata} \in \Lip(\RecSpace)}
     \biggl\{
       \Expect_{\subalign{ & \stsignal \sim \posterior{\stdata} \\ 
         & \stsignalother \sim \GenProb(\stdata)}}
       \bigl[ 
       \Discriminator_{\stdata}(\stsignal)  
       -
       \Discriminator_{\stdata}(\stsignalother) \bigr] 
    \biggr\}
  \Biggr].
\end{equation}
Next, in this case the supremum commutes with the $\dataprob$-expectation, i.e., 
\begin{multline}\label{eq12}
\Expect_{\stdata \sim \dataprob} \Biggl[ 
\sup_{\Discriminator_{\stdata} \in \Lip(\RecSpace)}
\biggl\{
\Expect_{\subalign{ & \stsignal \sim \posterior{\stdata} \\ 
		& \stsignalother \sim \GenProb(\stdata)}}
\bigl[ 
\Discriminator_{\stdata}(\stsignal)  
-
\Discriminator_{\stdata}(\stsignalother) \bigr] 
\biggr\}
\Biggr]
\\
=
\sup_{\Discriminator \in \DiscrSpace(\RecSpace \times \DataSpace)} \biggl\{
  \Expect_{\subalign{& (\stsignal,\stdata) \sim \jointlaw \\ & \stsignalother \sim \GenProb(\stdata)}} \Bigl[ 
    \Discriminator(\stsignal,\stdata) - \Discriminator(\stsignalother,\stdata)
 \Bigr] \biggr\},
\end{multline}
where $\DiscrSpace(\RecSpace \times \DataSpace)$ is the space of measurable real-valued mappings on $\RecSpace \times \DataSpace$ that are 1-Lipschitz in the $\RecSpace$-variable for every $\data \in \DataSpace$.
The proof of \cref{eq12} is given on p.~\pageref{proof12} and combining it with \cref{eq:WassersteinFormulationNonParam} gives 
\begin{equation}\label{eq:FinalNonParamFormulation}
   \inf_{\GenProb \in \GenProbMapSpace}\,
   \Biggl\{
     \sup_{\Discriminator \in \DiscrSpace(\RecSpace \times \DataSpace)}
          \Expect_{\subalign{& (\stsignal,\stdata) \sim \jointlaw \\ & \stsignalother \sim \GenProb(\stdata)}} \Bigl[ 
            \Discriminator(\stsignal,\stdata) - \Discriminator(\stsignalother,\stdata)
         \Bigr]
    \Biggr\}.
\end{equation}

Note that there are no approximations involved in going from \cref{eq:GenericWGANFormulationNonParam} to \cref{eq:FinalNonParamFormulation}, the derivation is solely based on properties of the Wasserstein 1-distance.
Furthermore, the advantage of \cref{eq:FinalNonParamFormulation} over \cref{eq:GenericWGANFormulationNonParam} is that the latter neither involves the posterior nor $\dataprob$ (the probability measure of data). 
It does involve the joint law $(\stsignal,\stdata) \sim \jointlaw$, which is of course unknown. On the other hand, if we have access to supervised training data:
\begin{equation}\label{eq:TData}
  (\signal_1,\data_1), \ldots, (\signal_m,\data_m) \in \RecSpace \times \DataSpace
  \quad\text{are i.i.d. samples of $(\stsignal,\stdata) \sim \jointlaw$,}
\end{equation}
then we can replace the joint law $\jointlaw$ in \cref{eq:FinalNonParamFormulation} with its empirical counterpart and the $\jointlaw$-expectation is replaced by an averaging over training data.

The final steps concern computational feasibility. We start by considering parameterizations of the generators in $\GenProbMapSpace$ that enables one to solve \cref{eq:FinalNonParamFormulation} in a computational feasible manner.
A key aspect is to evaluate the $\GenProb(\data)$-expectation for any $\data \in \DataSpace$ without impairing upon the ability to approximate the posterior with elements from $\GenProbMapSpace$.
We will assume that each generator $\GenProb \in \GenProbMapSpace$ corresponds to a measurable map $\Generator \colon \GenSpace \times \DataSpace \to \RecSpace$ such that the following holds: 
\begin{equation}\label{eq:GenRel}
   \stsignalother \sim \GenProb(\data)
   \iff  
   \stsignalother = \Generator(\stgenvar,\data) \quad\text{for some $\GenSpace$-valued random variable $\stgenvar \sim \genvarprob$.}
\end{equation}
In the above, $\GenSpace$ is some fixed set and $\genvarprob$ is a ``simple'' probability measure on $\GenSpace$ meaning that there are computationally efficient means for generating samples of $\stgenvar \sim \genvarprob$.
It is then possible to express \cref{eq:FinalNonParamFormulation} as 
\begin{equation}\label{eq:FinalNonParamFormulation2}
\inf_{\Generator \in \GeneratorSpace}\,
  \biggl\{
    \sup_{\Discriminator \in \DiscrSpace(\RecSpace \times \DataSpace)}
    \Expect_{\subalign{& (\stsignal,\stdata) \sim \jointlaw \\ & \stgenvar \sim \genvarprob}}
    \Bigl[ 
      \Discriminator(\stsignal,\stdata)
      -
      \Discriminator\bigl(\Generator(\stgenvar,\stdata),\stdata\bigr)
    \Bigr]
  \biggr\}
\end{equation}
where $\GeneratorSpace$ is the class of $\RecSpace$-valued measurable maps on $\GenSpace \times \DataSpace$ that corresponds to $\GenProbMapSpace$ by \cref{eq:GenRel}.

The formulation in \cref{eq:FinalNonParamFormulation2} involves taking the infimum over $\GeneratorSpace$ and supremum over $\DiscrSpace$, which is clearly computationally unfeasible. 
Hence, one option is to consider a parametrization of these spaces using deep neural networks with appropriately chosen architectures:
\begin{alignat}{2}
  \GeneratorSpace := \{ \Generator_{\genparam} \}_{\genparam \in \GenParamSet}
  &\qquad \text{where}\quad &\Generator_{\genparam} \colon \GenSpace \times \DataSpace \to \RecSpace \phantom{.}
  \label{eq:GenArchitecture}
  \\[0.25em]
  \DiscrSpace := \{ \Discriminator_{\discrparam} \}_{\discrparam \in \DiscrParamSet}
  &\qquad \text{where}\quad &\Discriminator_{\discrparam} \colon \RecSpace \times \DataSpace \to \Real.
  \label{eq:DiscArchitecture}
\end{alignat}
Inserting the above parametrizations into \cref{eq:FinalNonParamFormulation2} results in 
\begin{equation}\label{eq:FinalFormulation}
\genparam^* \in \argmin_{\genparam \in \GenParamSet}\,
  \biggl\{
    \sup_{\discrparam \in \DiscrParamSet}\,\,
    \Expect_{\subalign{& (\stsignal,\stdata) \sim \jointlaw \\ & \stgenvar \sim \genvarprob}}
    \Bigl[
      \Discriminator_{\discrparam}(\stsignal,\stdata)
      -
     \Discriminator_{\discrparam}\bigl(\Generator_{\genparam}(\stgenvar,\stdata),\stdata\bigr) 
    \Bigr]
  \biggr\}.    
\end{equation}
Note again that the unknown joint law $\jointlaw$ in \cref{eq:FinalFormulation} is replaced by its empirical counterpart given from the training data in \cref{eq:TData}.

To summarize, solving the training problem in \cref{eq:FinalFormulation} given training data \cref{eq:TData} and the parametrizations in \cref{eq:GenArchitecture,eq:DiscArchitecture} yields a mapping $\Generator_{\genparam^*} \colon \GenSpace \times \DataSpace \to \RecSpace$ that approximates the posterior in the sense that the distribution of $\Generator_{\genparam^*}(\stgenvar, \data)$ with $\stgenvar \sim \genvarprob$ is closest to $\posterior{\data}$ in  expected Wasserstein 1-distance.
Hence, we can sample $\genvar \in \GenSpace$ from $\stgenvar \sim \genvarprob$ and $\Generator_{\genparam^*}(\genvar,\data) \in \RecSpace$ will approximate a sample of the conditional random variable $(\stsignal \mid \stdata = \data) \sim \posterior{\data}$.
The formulation in \cref{eq:FinalFormulation} is also suitable for \ac{SGD}, so computational techniques from deep neural networks can be used for solving the empirical expected minimization problem. We conclude with providing a proof of \cref{eq12}.

\paragraph{Proof of \cref{eq12}}\label{proof12}
To simplify the notational burden, define $f_{\Discriminator} \colon \DataSpace \to \Real$ as
\[
    f_{\Discriminator}(\data)
    :=
    \Expect_{\subalign{ & \stsignal \sim \posterior{\data} \\
     		& \stsignalother \sim \GenProb(\data)}}
     \bigl[ 
     \Discriminator(\stsignal)  
     -
     \Discriminator(\stsignalother) \bigr]
\quad\text{for $\Discriminator \in \Lip(\RecSpace)$.}
\]
Next, $(\stsignal,\stdata) \sim \jointlaw$ with $\jointlaw = \posterior{\stdata} \otimes \dataprob$, so by the law of total expectation we can re-write the objective in the right-hand side of \cref{eq12} as
\[
	\Expect_{\subalign{& (\stsignal,\stdata) \sim \jointlaw \\ & \stsignalother \sim \GenProb(\data)}} \Bigl[ 
	\Discriminator(\stsignal,\stdata) - \Discriminator(\stsignalother,\stdata)
	\Bigr]
	=
	\Expect_{\stdata \sim \dataprob}
	\bigl[ f_{\Discriminator(\cdot,\stdata)}(\stdata) \bigr].
\]
Hence, proving \cref{eq12} is equivalent to proving 
\begin{equation}\label{eq12simple}
\Expect_{\stdata \sim \dataprob}
\Bigl[
\sup_{\Discriminator_{\stdata} \in \Lip(\RecSpace)}
f_{\Discriminator_{\stdata}}(\stdata)
\Bigr]
=
\sup_{\Discriminator \in \DiscrSpace(\RecSpace \times \DataSpace)}\!\!\!\!
\Expect_{\stdata \sim \dataprob} \bigl[ f_{\Discriminator(\cdot,\stdata)}(\stdata) \bigr].
\end{equation}
To prove \cref{eq12simple}, note first that the claim clearly holds when equality is replaced with ``$\geq$'' since $\Discriminator(\cdot, \data) \in \Lip(\RecSpace)$ for any $\Discriminator \in \DiscrSpace(\RecSpace \times \DataSpace)$. It remains to prove that strict inequality in \cref{eq12simple} cannot hold. 
In the following, we use a proof by contradiction approach, so assume strict inequality holds: 
\begin{equation}\label{eq:ContradictionAssumption}
\Expect_{\stdata \sim \dataprob}
\Bigl[
\sup_{\Discriminator_{\stdata} \in \Lip(\RecSpace)}
f_{\Discriminator_{\stdata}}(\stdata)
\Bigr]
>
\sup_{\Discriminator \in \DiscrSpace(\RecSpace \times \DataSpace)}\!\!\!\!
\Expect_{\stdata \sim \dataprob} \bigl[ f_{\Discriminator(\cdot,\stdata)}(\stdata) \bigr].
\end{equation}
From \cref{eq:ContradictionAssumption}, there exists $\varepsilon > 0$ such that 
\begin{equation}
	\label{eq:archimedian}
	\Expect_{\stdata \sim \dataprob}
	\Bigl[
	\sup_{\Discriminator_{\stdata} \in \Lip(\RecSpace)}
	f_{\Discriminator_{\stdata}}(\stdata)
	\Bigr]
	-
	\varepsilon
	>
	\sup_{\Discriminator \in \DiscrSpace(\RecSpace \times \DataSpace)}\!\!\!\!
	\Expect_{\stdata \sim \dataprob}
	\bigl[
	f_{\Discriminator(\cdot,\stdata)}(\stdata)
	\bigr].
\end{equation}
Next, for any $\data \in \DataSpace$ and $\varepsilon > 0$, there exists $\widehat{\Discriminator}_\data \in \Lip(\RecSpace)$ such that
\begin{equation}\label{eq24}
	f_{\widehat{\Discriminator}_\data}(\data)
	>
	\sup_{\Discriminator_\data \in \Lip(\RecSpace)}
	f_{\Discriminator_\data}(\data)
	-
	\varepsilon
	\quad\text{holds for any $\data \in \DataSpace$.}
\end{equation}
\emph{Assume} next that it is possible to \emph{choose} $\widehat{\Discriminator}_\data$ so that $(\signal,\data) \mapsto \widehat{\Discriminator}_\data(\signal)$ is measurable on $\RecSpace \times \DataSpace$.
This implies that $(\signal,\data) \mapsto \widehat{\Discriminator}_\data(\signal) \in \DiscrSpace(\RecSpace \times \DataSpace)$ since $\widehat{\Discriminator}_\data \in \Lip(\RecSpace)$ for all $\data \in \DataSpace$.
Hence, the $\dataprob$-expectation of $f_{\widehat{\Discriminator}_\stdata}(\stdata)$ exists and \cref{eq24} combined with the monotonicity of the expectation gives 
\begin{align*}
\Expect_{\stdata \sim \dataprob}
\Bigl[
f_{\widehat{\Discriminator}_\stdata}(\stdata)
\Bigr]
&>
\Expect_{\stdata \sim \dataprob}
\Bigl[
\sup_{\Discriminator_\stdata \in \Lip(\RecSpace)}
f_{\Discriminator_\stdata}(\stdata)
-
\varepsilon.
\Bigr]
\\[0.25em]
&=
\Expect_{\stdata \sim \dataprob}
\Bigl[
\sup_{\Discriminator_\stdata \in \Lip(\RecSpace)}
f_{\Discriminator_\stdata}(\stdata)
\Bigr]
-
\varepsilon.
\end{align*}
Insert the above into \cref{eq:archimedian} gives 
\begin{equation}\label{eq:Contradiction}
	\Expect_{\stdata \sim \dataprob}
	\Bigl[
	f_{\widehat{\Discriminator}_\stdata}(\stdata)
	\Bigr]
	>
	\sup_{\Discriminator \in \DiscrSpace(\RecSpace \times \DataSpace)}\!\!\!
	\Expect_{\stdata \sim \dataprob}
	\bigl[
	f_{\Discriminator(\cdot,\stdata)}(\stdata)
	\bigr].
\end{equation}
Since $(\signal,\data) \mapsto \widehat{\Discriminator}_\data(\signal) \in \DiscrSpace(\RecSpace \times \DataSpace)$, the statement in \cref{eq:Contradiction} contradicts the definition of the supremum, i.e., \cref{eq:ContradictionAssumption} leads to a contradiction implying that \cref{eq12simple} is true. This concludes the proof.

\subsection{A novel discriminator for conditional \ac{WGAN}}\label{sec:minibatchDiscriminator}
A generator trained using the formulation in \cref{eq:FinalFormulation} \emph{as is} will typically learn to ignore the randomness from $\stgenvar \sim \genvarprob$. 
This can be seen in \cref{fig:posterior_samples_no_minibatch,fig:bayesian_results_no_minibatch} that replicate the tests performed in \cref{fig:posterior_samples,fig:bayesian_results} but with a generator trained using \cref{eq:FinalFormulation} \emph{as is}. 
Observe that the inter-sample variance is very low, e.g., the conditional mean image in \cref{fig:posterior_samples_no_minibatch} is still very noisy as compared to corresponding images in \cref{fig:bayesian_results}.

An explanation to this phenomena can be found in statistical learning theory.
Note that, regardless of the number of supervised training data points $(\signal_i, \data_i)$, the training data only provides a single $\RecSpace$-sample $\signal_i$ of the probability measure $\posterior{\data_i}$, which is the posterior at $\data_i$.
Since training data only provides a \emph{single sample} from $\posterior{\data_i}$, training by \cref{eq:FinalFormulation} will result in a generator that only learns how to generate the corresponding single sample thereby generating the same sample repeatedly (mode collapse)  \cite{Zaheer:2017aa}. 

The importance of addressing mode collapse is clearly illustrated in \cref{fig:posterior_samples_no_minibatch,fig:bayesian_results_no_minibatch}.
One approach to avoid mode collapse is to let the discriminator in \cref{eq:WassersteinFormulationNonParam} see multiple samples from $\posterior{\data_i}$, which leads to the idea of mini-batch discriminators \cite{NIPS2016_6125,karras2018progressive}.
Such an approach is not possible in Bayes inversion since training data only provides access to a \emph{single} model parameter $\signal$ for each data $\data$.
In the following we describe a new \emph{conditional mini-batch discriminator} that is better at avoiding mode collapse in the Bayesian inversion setting. 

\paragraph{Conditional \ac{WGAN} discriminator}
The idea is to let the discriminator distinguish between unordered pairs in $\RecSpace$ containing either the model parameter or random samples generated by the generative model. 
To formalize this, the generative model is trained using the following generalization of \cref{eq:WGANFormulation}:
\begin{equation}\label{eq:WGANFormulationMiniBatch}
  \inf_{\GenProb \in \GenProbMapSpace}\,
    \Expect_{\stdata \sim \dataprob} \biggl[
        \Wasserstein
	\Bigl(
	\GenProb(\stdata) \otimes \GenProb(\stdata)
	,
	\frac{1}{2}\bigl( \posterior{\data} \otimes \GenProb(\stdata) \bigr) \oplus \frac{1}{2}\bigl( \GenProb(\stdata) \otimes \posterior{\data} \bigr)
	\Bigr)
\biggr]	
\end{equation}
where $\oplus$ denotes usual summation of measures. 
Next, we show that one may train a generative model based on \cref{eq:WGANFormulationMiniBatch} instead of \cref{eq:WGANFormulation}.
The former lets the discriminator see more than a single sample from the posterior, so the resulting learned generator is much less likely to suffer from mode collapse, see \cref{fig:bayesian_results_no_minibatch} for an empirically confirmation of this. 
\begin{claim}\label{prop:minibatch_discriminator}
A generative model $\GenProb \colon \DataSpace \to \PClass_{\RecSpace}$ solves \cref{eq:WGANFormulationMiniBatch} iff it solves \cref{eq:WGANFormulation}.
\end{claim}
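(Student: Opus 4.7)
The plan is to argue that both optimization problems have identical sets of minimizers, characterized by the condition $\GenProb(\data) = \posterior{\data}$ for $\dataprob$-almost every $\data \in \DataSpace$. Since the Wasserstein 1-distance is a metric on the relevant spaces of probability measures, both objectives are expectations of non-negative integrands, and each objective equals zero precisely when its integrand vanishes $\dataprob$-a.s., i.e., when the corresponding pair of measures agree $\dataprob$-a.s.

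For \cref{eq:WGANFormulation} the objective vanishes iff $\GenProb(\data) = \posterior{\data}$ $\dataprob$-a.s., directly from the metric property of $\Wasserstein$. For \cref{eq:WGANFormulationMiniBatch}, writing $\mu := \GenProb(\data)$ and $\pi := \posterior{\data}$ for brevity, I would show that the measure equality
\[
    \mu \otimes \mu
    =
    \tfrac{1}{2}\bigl(\pi \otimes \mu\bigr) \oplus \tfrac{1}{2}\bigl(\mu \otimes \pi\bigr)
\]
is equivalent to $\mu = \pi$. One direction is immediate: substituting $\mu = \pi$ makes both sides equal $\pi \otimes \pi$. The reverse is the key computation: projecting both sides onto the first coordinate of $\RecSpace \times \RecSpace$ yields $\mu = \tfrac{1}{2}\pi + \tfrac{1}{2}\mu$, which simplifies to $\mu = \pi$. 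The symmetric projection onto the second coordinate gives the same conclusion.

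Combining these two characterizations, both objectives achieve the value $0$ on the exact same set of generators, namely those satisfying $\GenProb(\data) = \posterior{\data}$ $\dataprob$-a.s. Assuming $\GenProbMapSpace$ is rich enough to contain the posterior map $\data \mapsto \posterior{\data}$, this set is non-empty and forms the common argmin of both problems, establishing the claim.

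The main potential obstacle is this implicit assumption that $\GenProbMapSpace$ contains the posterior mapping. Without it, both infima could be strictly positive and the argmin sets could in principle differ, so the equivalence would have to be re-interpreted as an equality of the zero-sets of the two objectives. A secondary technical point is the measurability needed to justify projecting onto coordinate factors for possibly infinite-dimensional separable Banach spaces $\RecSpace$; this reduces to standard push-forward properties of product and mixture measures and is routine.
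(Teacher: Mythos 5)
Your route is genuinely different from the paper's, and it contains a gap that you yourself flag but do not close. You characterize the \emph{zero sets} of the two objectives and show they coincide: the marginal-projection argument proving that $\mu \otimes \mu = \tfrac12(\pi\otimes\mu)\oplus\tfrac12(\mu\otimes\pi)$ forces $\mu=\pi$ is correct, as is the observation that each objective, being the $\dataprob$-expectation of a nonnegative integrand built from a metric, vanishes exactly when the relevant measures agree $\dataprob$-a.s. But this yields the claim only when the common infimum is actually zero, i.e., when $\data\mapsto\posterior{\data}$ is realizable within $\GenProbMapSpace$. The paper explicitly declines to assume this (``Note here that $\data \mapsto \posterior{\data}$ is not necessarily contained in $\GenProbMapSpace$''), and the entire purpose of the claim is to justify swapping training objectives for a parametric family of neural-network generators, for which exact realizability of the posterior certainly fails. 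In that regime your argument establishes nothing: two nonnegative functionals with identical zero sets can have disjoint argmin sets once their infima are positive, so the equivalence of minimizers does not follow.

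The paper's proof avoids this by arguing a stronger, pointwise statement: for every fixed $\data$ and every $\GenProb$, the Wasserstein distance appearing in \cref{eq:WGANFormulationMiniBatch} is proportional to $\Wasserstein\bigl(\GenProb(\data),\posterior{\data}\bigr)$, obtained by subtracting $\tfrac12\,\GenProb(\data)\otimes\GenProb(\data)$ from both arguments and invoking the translation invariance of the Kantorovich--Rubinstein form on positive measures of equal mass. Granting that step, proportionality of the integrands gives equality of the argmin sets over an \emph{arbitrary} generator class, with no realizability assumption. To rescue your metric-space viewpoint you would need to upgrade the zero-set equivalence to a quantitative two-sided comparison between the two distances valid for all pairs $\mu$, $\pi$ --- which is essentially the proportionality the paper asserts --- rather than re-interpreting the claim as an equality of zero sets, since that changes the statement being proved.
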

\begin{proof}
Let $\data \in \DataSpace$ be fixed and consider the objective in \cref{eq:WGANFormulationMiniBatch}:
\begin{multline*}
\Wasserstein\Bigl(
  \GenProb(\data) \otimes \GenProb(\data),
  \frac{1}{2}\bigl( \posterior{\data} \otimes \GenProb(\data) \bigr) 
  \oplus 
  \frac{1}{2}\bigl( \GenProb(\data) \otimes \posterior{\data} \bigr)
\Bigr)  
\\
=
\Wasserstein\Bigl(
  \frac{1}{2} \bigl(
	  \GenProb(\data) \otimes \GenProb(\data),
	  \posterior{\data} \oplus \GenProb(\data) \bigr) \otimes \frac{1}{2} \bigl( \posterior{\data} \oplus \GenProb(\data) \bigr)
\Bigr)  
\\
\propto
\Wasserstein\Bigl(
  \frac{1}{2} \bigl( \GenProb(\data) \otimes \GenProb(\data) \bigr),
  \frac{1}{2} \bigl( \posterior{\data} \otimes \posterior{\data} \bigr)
\Bigr).  
\end{multline*}
The last equality above follows from subtracting the measure $\frac{1}{2} \bigl( \GenProb(\data) \otimes \GenProb(\data) \bigr)$ from both arguments in the Wasserstein metric and utilizing its translation invariance (which is easiest to see in the Kantorovich-Rubenstein characterization).
Next, 
\[ \Wasserstein\Bigl(
  \frac{1}{2} \bigl( \GenProb(\data) \otimes \GenProb(\data) \bigr),
  \frac{1}{2} \bigl( \posterior{\data} \otimes \posterior{\data} \bigr)
\Bigr) 
\propto 
\Wasserstein\bigl(\GenProb(\data), \posterior{\data} \bigr),
\]
so a generative model solves \cref{eq:WGANFormulationMiniBatch} if and only if it solves \cref{eq:WGANFormulation}.
\end{proof}
Note that in the proof of \cref{prop:minibatch_discriminator}, we implicitly assume the Wasserstein distance can be defined on any pair of positive Radon measures with equal mass.
This is a trivial extension of the original definition of the Wasserstein distance, which assumes the domain is a pair of probability measures. It is worth noting that one can define ``optimal transportation''-like distances between arbitrary positive Radon measures \cite{Chizat:2017aa,Chizat:2015aa}.

To proceed, we need to rewrite the training in \cref{eq:WGANFormulationMiniBatch} so that it becomes more tractable, e.g., by removing the explicit appearance of the unknown posterior and probability measure for data.
To do that, we yet again resort to the Kantorovich-Rubenstein duality \cref{eq:WassersteinKL}.
When applied to \cref{eq:WGANFormulationMiniBatch}, it yields 
\begin{equation}\label{eq:WassersteinFormulationNonParamMiniBatch}
\inf_{\GenProb \in \GenProbMapSpace}\,
  \Expect_{\stdata \sim \dataprob} \Biggl[  
     \sup_{\Discriminator \in \DiscrSpace}\,\,
       \Expect_{
       	\subalign{& (\stsignal_1,\stsignal_2) \sim \rho(\stdata) \\
       	  & (\stsignalother_1,\stsignalother_2) \sim 
       	\GenProb(\stdata) \otimes \GenProb(\stdata)}
       	}\!
       	\Bigl[ 
        \Discriminator\bigl((\stsignal_1,\stsignal_2),\stdata \bigr)  
        -
        \Discriminator\bigl((\stsignalother_1,\stsignalother_2),\stdata\bigr) \Bigr] 
  \Biggr].
\end{equation}
Here, $\rho(\data) := \frac{1}{2}\bigl( \posterior{\data} \otimes \GenProb(\stdata) \bigr) \oplus \frac{1}{2}\bigl( \GenProb(\stdata) \otimes \posterior{\data} \bigr)$ is a probability measure on $\RecSpace \times \RecSpace$ and $\DiscrSpace$ are measurable maps $\Discriminator \colon (\RecSpace \times \RecSpace) \times \DataSpace \to \Real$ that are 1-Lipschitz w.r.t. its $(\RecSpace \times \RecSpace)$-variable. 
Next, the same arguments used to rewrite \cref{eq:WassersteinFormulationNonParam} as \cref{eq:FinalNonParamFormulation} can also be used to rewrite \cref{eq:WassersteinFormulationNonParamMiniBatch} as 
\begin{equation}\label{eq:FinalNonParamFormulationMiniBatch}
\inf_{\GenProb \in \GenProbMapSpace}\,
\Biggl\{
    \sup_{\Discriminator \in \DiscrSpace} 
    \Expect_{\subalign{ &(\stsignal,\stdata) \sim \jointlaw \\ & \stsignalother_1, \stsignalother_2 \sim \GenProb(\stdata)}} 
    \biggl[
      \frac{1}{2} \Bigl( 
      \Discriminator\bigl((\stsignal, \stsignalother_2),\stdata \bigr) + \Discriminator\bigl((\stsignalother_1,\stsignal),\stdata \bigr) \Bigr)
      - 
      \Discriminator\bigl( (\stsignalother_1,\stsignalother_2), \stdata \bigr) 
    \biggr]    
\Biggr\}.
\end{equation}
In contrast to \cref{eq:WassersteinFormulationNonParamMiniBatch}, the formulation in \cref{eq:FinalNonParamFormulationMiniBatch} makes no reference to the posterior $\posterior{\data}$ nor the probability measure $\dataprob$ for data.
Instead, it involves an expectation w.r.t. the joint law $\jointlaw$, which in a practical setting can be replaced by its empirical counterpart given from supervised training data in \cref{eq:TData}.

The final step is to introduce parameterizations for the generator and discriminator. 
The generator is parametrized as in \cref{eq:GenArchitecture}, whereas the parametrized family $\DiscrSpace := \{ \Discriminator_{\discrparam} \}_{\discrparam \in \DiscrParamSet}$ of discriminators are measurable mappings of the type $\Discriminator_{\discrparam} \colon (\RecSpace \times \RecSpace) \times \DataSpace \to \Real$ that are 1-Lipschitz in the $(\RecSpace \times \RecSpace)$-variable.
Inserting these parametrizations into \cref{eq:FinalNonParamFormulationMiniBatch} results in 
\begin{multline}\label{eq:FinalFormulationMiniBatch}
(\genparam^*,\discrparam^*) \in \argmin_{\genparam \in \GenParamSet}\,
  \Biggl\{
    \sup_{\discrparam \in \DiscrParamSet}\,\,
    \Expect_{\subalign{& (\stsignal,\stdata) \sim \jointlaw \\ & \stgenvar_1, \stgenvar_2 \sim \genvarprob}} 
    \biggl[
      \frac{1}{2} \Bigl( \Discriminator_{\discrparam}\bigl( \bigl(\stsignal, \Generator_{\genparam}(\stgenvar_2,\stdata) \bigr),\stdata \bigr)  
      +
      \Discriminator_{\discrparam}\bigl( \bigl( \Generator_{\genparam}(\stgenvar_1,\stdata), \stsignal \bigr),\stdata \bigr)
       \Bigr)
       \\
      -
      \Discriminator_{\discrparam}\bigl( \bigl(\Generator_{\genparam}(\stgenvar_1,\stdata),\Generator_{\genparam}(\stgenvar_2,\stdata) \bigr), \stdata\bigr) 
    \biggr]
  \Biggr\}.    
\end{multline}
Note again that the unknown joint law $\jointlaw$ in \cref{eq:FinalFormulationMiniBatch} is replaced by its empirical counterpart given from the training data in \cref{eq:TData}.

\begin{figure}[t]
	\centering
	\includegraphics[height=0.187\textheight]{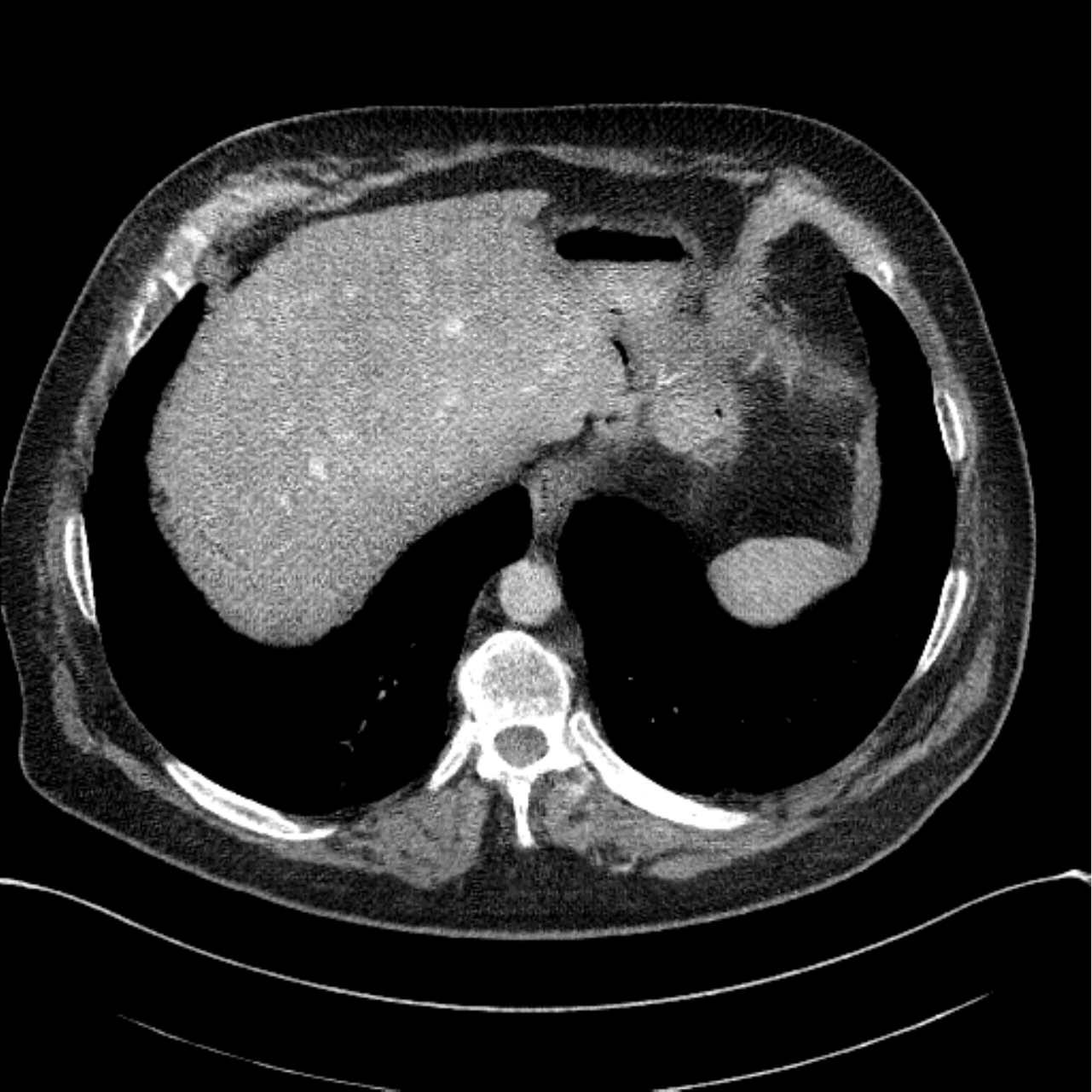}
	\includegraphics[height=0.187\textheight]{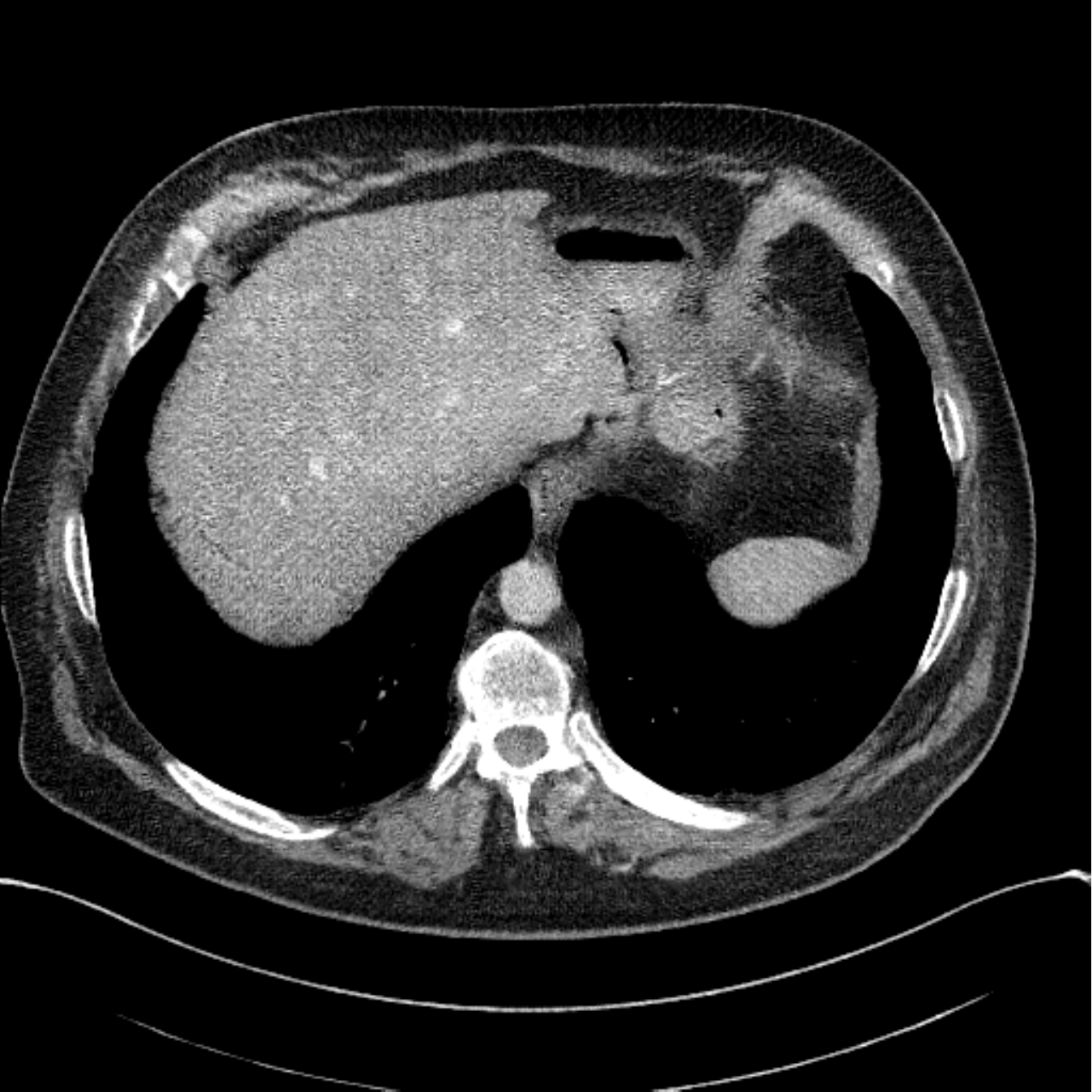}
	\includegraphics[height=0.187\textheight]{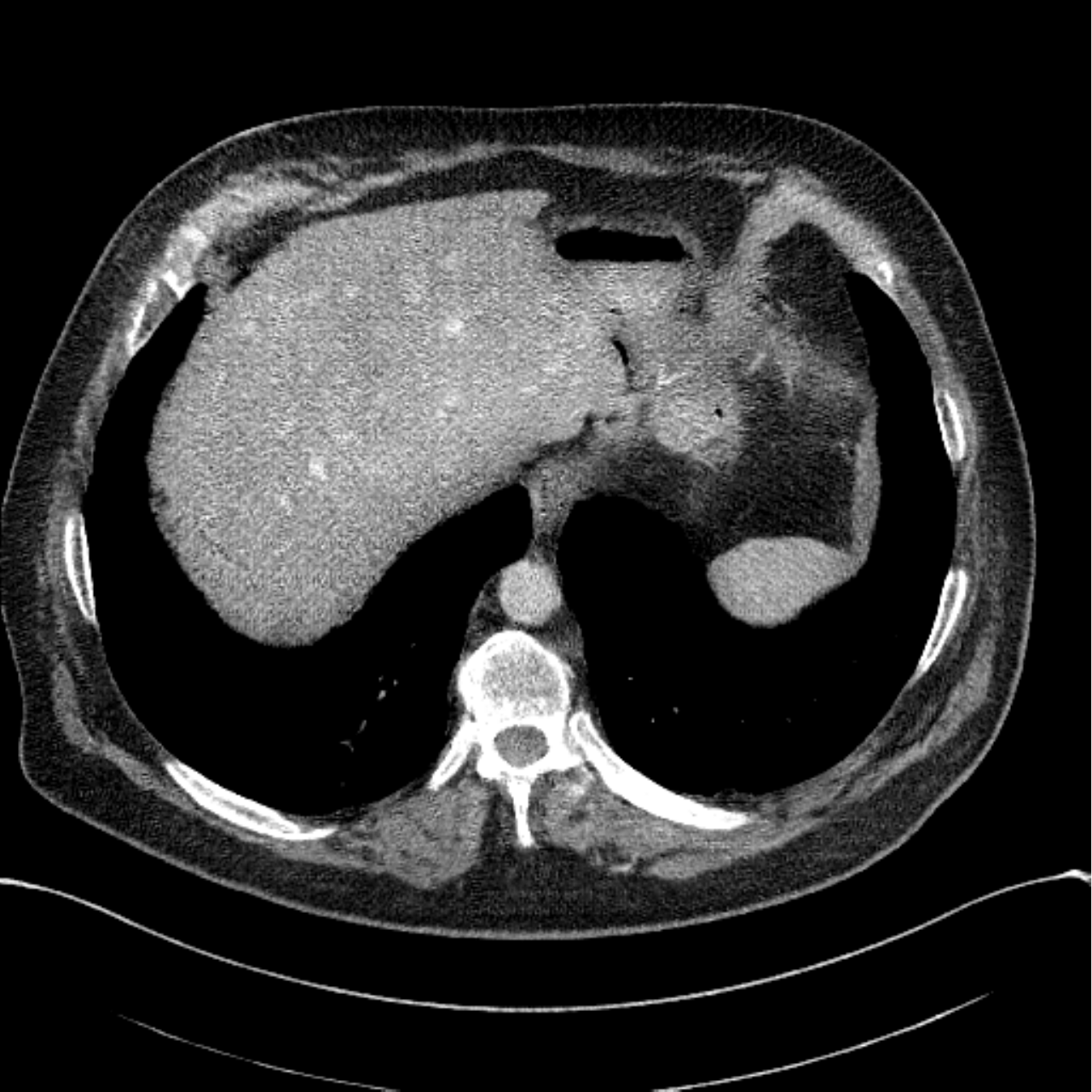}
	\\[0.25em]
	\includegraphics[height=0.187\textheight]{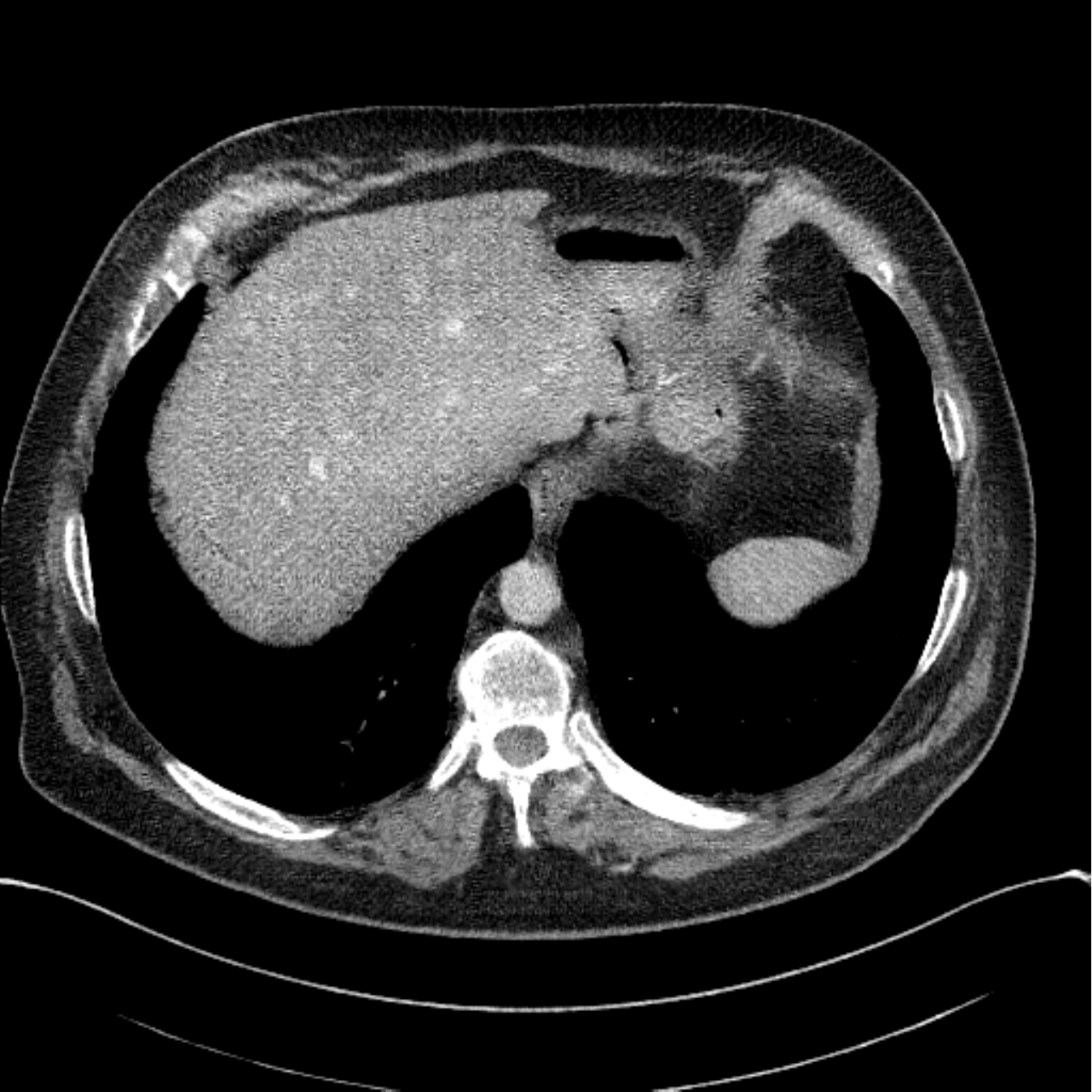}
	\includegraphics[height=0.187\textheight]{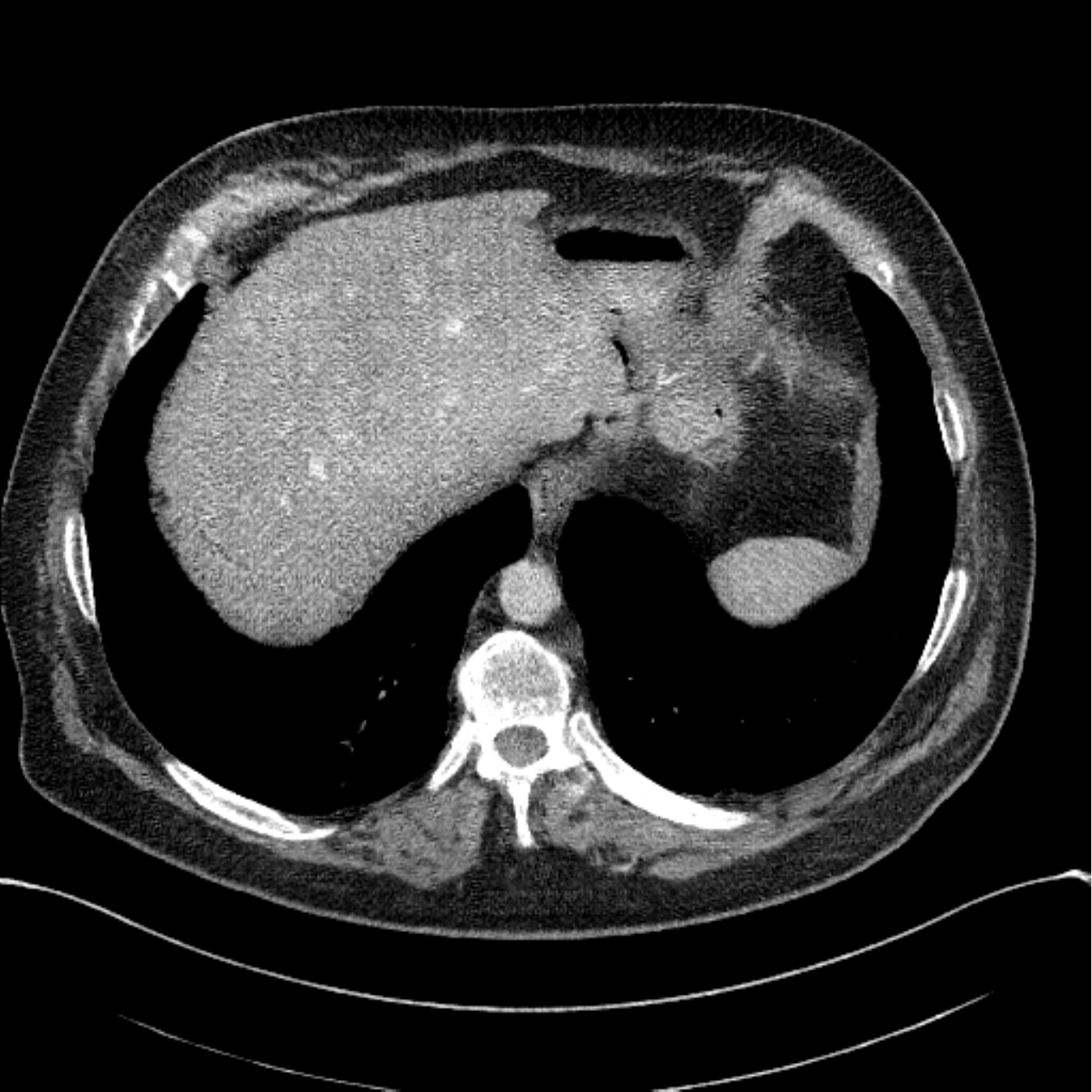}
	\includegraphics[height=0.187\textheight]{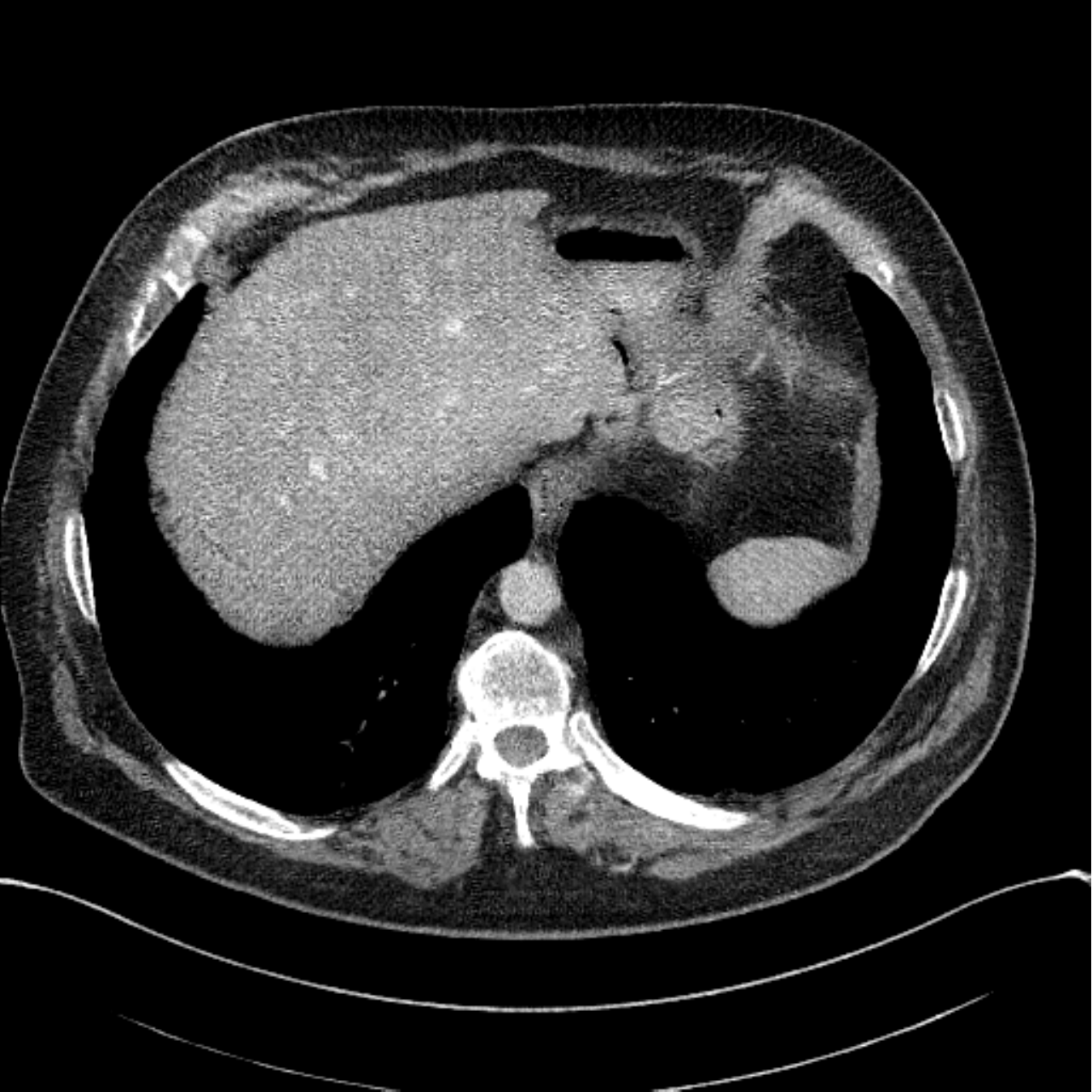}
	\caption{Replication of \cref{fig:posterior_samples} without conditional \ac{WGAN} discriminator shown using the same intensity window. Observe that there is practically no inter-sample variability due to mode collapse, confirming that the conditional \ac{WGAN} discriminator is essential for posterior sampling.}
	\label{fig:posterior_samples_no_minibatch}
\end{figure}

\begin{figure}[t]
	\centering	
	\begin{tabular}{c c c c c c}
		& Posterior sampling & Direct estimation & \hskip0.02\linewidth No cond.\@ GAN discr. &&
		\\
		\raisebox{5em}[0pt][0pt]{\rotatebox[origin=c]{90}{Mean}} 
		\hspace{-4mm} &
		\includegraphics[height=0.15\textheight]{figures/estimators/sample_mean}
		\hspace{-4mm} &
		\includegraphics[height=0.15\textheight]{figures/estimators/denoise_result}
		\hspace{-4mm} &
		\includegraphics[height=0.15\textheight]{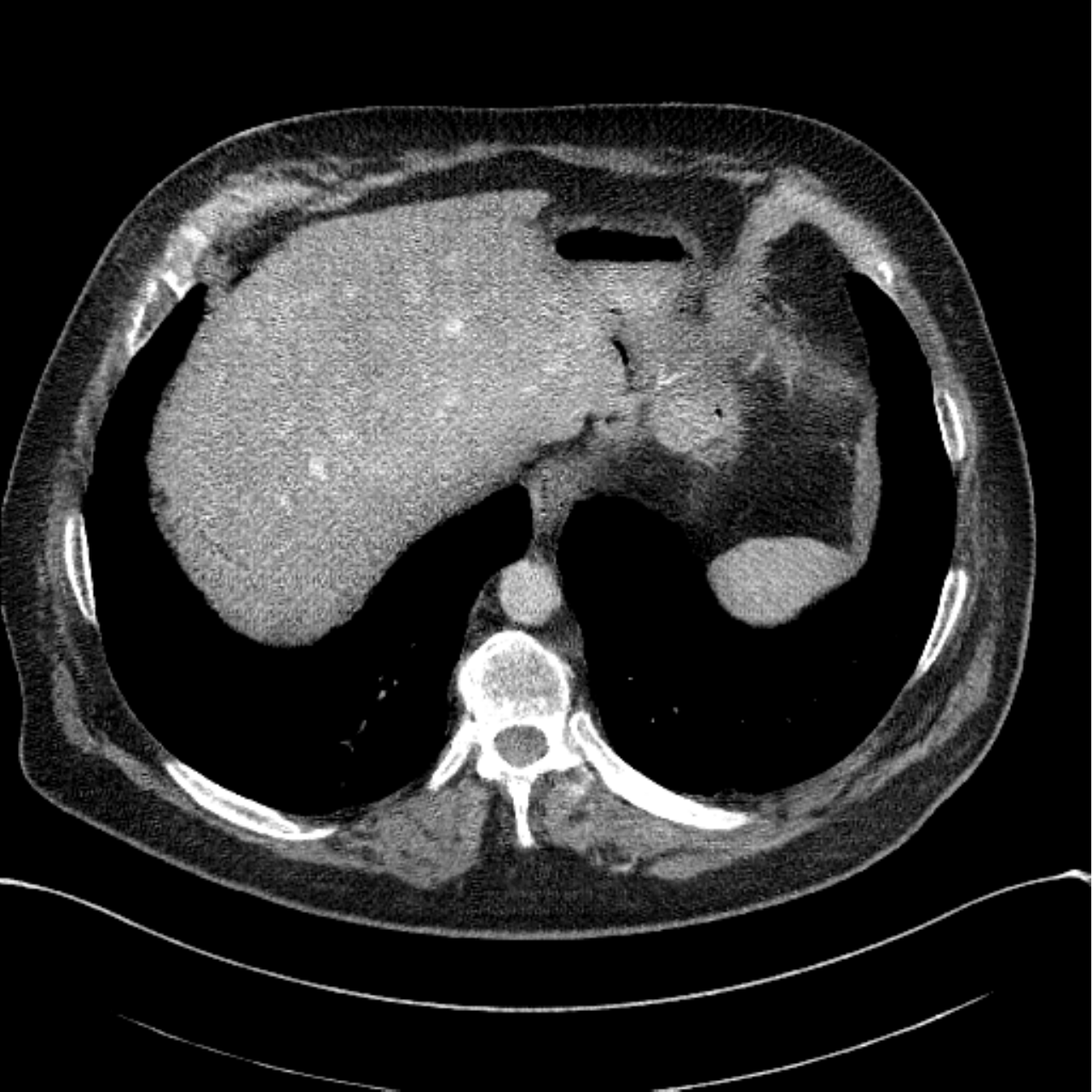}
		\hspace{-4mm} &
		\includegraphics[width=0.02\textwidth, height=0.15\textheight]{figures/estimators/bone}
		\hspace{-4mm} &
		\hspace{-2mm}
		$\overset{\raise0.123\textheight\hbox{\footnotesize\text{200 HU}}}{\footnotesize\text{-150 HU}}$
		\\
		\raisebox{5em}[0pt][0pt]{\rotatebox[origin=c]{90}{pStd}} 
		\hspace{-4mm} &
		\includegraphics[height=0.15\textheight]{figures/estimators/sample_std}
		\hspace{-4mm} &
		\includegraphics[height=0.15\textheight]{figures/estimators/direct_std}
		\hspace{-4mm} &
		\includegraphics[height=0.15\textheight]{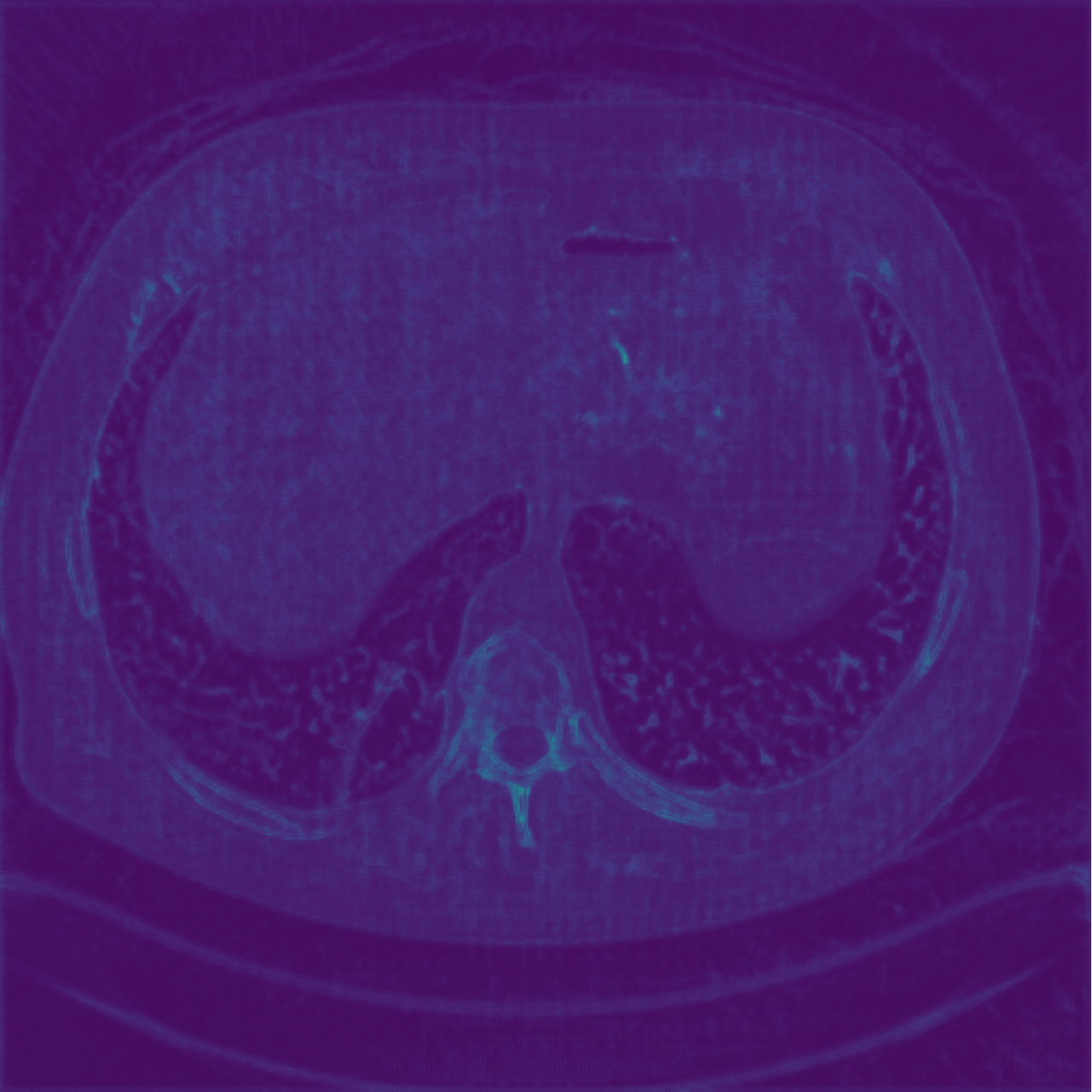} 
		\hspace{-4mm} &		
		\includegraphics[width=0.02\textwidth, height=0.15\textheight]{figures/estimators/viridis}
		\hspace{-4mm} &
		\hspace{-3mm}
		$\overset{\raise0.123\textheight\hbox{\footnotesize\text{50 HU}}}{\footnotesize\text{0 HU}}$
	\end{tabular}
	\caption{Replication of \cref{fig:bayesian_results} also showing (right most column) the sample mean and sample point-wise standard deviation (pStd) when the conditional \ac{WGAN} discriminator is not used. The standard deviation grossly underestimated due to mode collapse.}
	\label{fig:bayesian_results_no_minibatch}
\end{figure}

\subsection{Deep Direct Estimation}\label{app:sampling_free}
The aim here is to show how an appropriately trained deep neural network can be used for approximating a wide range of non-randomized decision rules (estimators) associated, e.g., with uncertainty quantification.
This differs from the posterior sampling approach (\cref{sec:WGAN,app:WGANMath}) where such estimators are computed empirically by sampling from a trained \ac{WGAN} generator.

The idea is to extend the approach in \cite{Adler:2017aa,Adler:2018aa} for learning estimators that minimizes Bayes risk so that it applies to a wider class of estimators.
Our starting point is a well known proposition from probability theory that characterizes the minimizer of the mean squared error loss.
\begin{proposition}\label{prop:condexp}
	Assume that $\DataSpace$ be a measurable space, $\OtherSpace$ is a measurable Hilbert space, and $\stdata$ and $\stotherdata$ are $\DataSpace$- and $\OtherSpace$-valued random variables, respectively.
	Then, the conditional expectation $h^*(\data) := \Expect \bigl[ \stotherdata \mid \stdata = \data \bigr]$ solves 
	\[
		\min_{h \colon \DataSpace \to \OtherSpace}  
		\Expect
		\Bigl[ 
			\bigl\|h(\stdata)  - \stotherdata \bigr\|_{\OtherSpace}^2 
		\Bigr].
	\]
	The minimization above is taken over all $\OtherSpace$-valued measurable functions on $\DataSpace$.	
\end{proposition}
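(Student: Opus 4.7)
The plan is to establish the result via the classical orthogonality decomposition of the squared error, extended to the Hilbert-space-valued setting. First I would note that the claim is only meaningful when $\Expect[\|\stotherdata\|_{\OtherSpace}^2] < \infty$, since otherwise both sides are infinite; under this assumption, the conditional expectation $h^*(\data) := \Expect[\stotherdata \mid \stdata = \data]$ exists as a Bochner-integrable $\OtherSpace$-valued random variable via a Radon--Nikodym-type construction (this is the one point requiring that $\OtherSpace$ be a separable Hilbert space so that vector-valued conditional expectation is well-defined).

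The key step is the algebraic decomposition, valid for any measurable $h \colon \DataSpace \to \OtherSpace$ with $\Expect[\|h(\stdata)\|_{\OtherSpace}^2] < \infty$:
\[
  \bigl\|h(\stdata) - \stotherdata\bigr\|_{\OtherSpace}^2
  =
  \bigl\|h^*(\stdata) - \stotherdata\bigr\|_{\OtherSpace}^2
  + 2\bigl\langle h(\stdata) - h^*(\stdata),\, h^*(\stdata) - \stotherdata \bigr\rangle_{\OtherSpace}
  + \bigl\|h(\stdata) - h^*(\stdata)\bigr\|_{\OtherSpace}^2.
\]
Taking $\Expect$-expectations, the goal is to show that the cross term vanishes. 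For this I would invoke the tower property of conditional expectation, conditioning on $\stdata$: since $h(\stdata) - h^*(\stdata)$ is $\sigma(\stdata)$-measurable, it factors out of the inner conditional expectation, leaving the factor $\Expect[\stotherdata - h^*(\stdata) \mid \stdata] = h^*(\stdata) - h^*(\stdata) = 0$ by the definition of $h^*$. Thus the cross term is $0$, and
\[
  \Expect\bigl[\|h(\stdata) - \stotherdata\|_{\OtherSpace}^2\bigr]
  =
  \Expect\bigl[\|h^*(\stdata) - \stotherdata\|_{\OtherSpace}^2\bigr]
  + \Expect\bigl[\|h(\stdata) - h^*(\stdata)\|_{\OtherSpace}^2\bigr]
  \;\geq\;
  \Expect\bigl[\|h^*(\stdata) - \stotherdata\|_{\OtherSpace}^2\bigr],
\]
with equality iff $h(\stdata) = h^*(\stdata)$ almost surely, proving that $h^*$ is the (essentially unique) minimizer.

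The main obstacle, or at least the only delicate point, is justifying the tower-property manipulation of the inner-product cross term in the Hilbert-space-valued setting. In $\Real^n$ this is elementary coordinatewise, but for general separable $\OtherSpace$ one needs that (i) $\Expect[\cdot \mid \stdata]$ commutes with bounded linear operations on Bochner integrable random variables, and (ii) the inner product $\langle \cdot, v\rangle_{\OtherSpace}$ for fixed $v$ is one such linear operation, so conditional expectation passes through the inner product. Both facts are standard in the theory of Banach-space-valued conditional expectations, but it is worth flagging them explicitly; everything else is then routine algebra and the positivity of squared norms.
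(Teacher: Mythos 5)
Your proof is correct and follows essentially the same route as the paper's: expand $\|h(\stdata)-\stotherdata\|_{\OtherSpace}^2$ around the conditional expectation, kill the cross term by conditioning on $\stdata$ and pulling out the $\sigma(\stdata)$-measurable factor, and observe that the remaining $h$-independent term bounds the objective from below. Your added care about Bochner integrability, separability, and the commutation of conditional expectation with the inner product only tightens a point the paper itself concedes is not fully rigorous in the function-space setting.
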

\begin{proof}
	Let $h \colon \DataSpace \to \OtherSpace$ be any measurable function so
	\[
		\Expect
		\Bigl[ 
		\bigl\|h(\stdata)  - \stotherdata \bigr\|_{\OtherSpace}^2 
		\Bigr]
		=
		\Expect
		\Bigl[
		\Expect
		\bigl[ 
			\bigl\|h(\stdata)  - \stotherdata \bigr\|_{\OtherSpace}^2
			\mid
			\stdata
		\bigr]
		\Bigr].
	\]
	Next, $\OtherSpace$ is a Hilbert space so we can expand the squared norm:
	\begin{multline*}
		\bigl\|
		h(\stdata)  - 
		\stotherdata \bigr
		\|_{\OtherSpace}^2
		=\
		\bigl\|
		h(\stdata) -
		\Expect[\stotherdata \mid \stdata] +
		\Expect[\stotherdata \mid \stdata] -
		\stotherdata \bigr\|_{\OtherSpace}^2
		\\
		\shoveleft{\qquad
		=
		\bigl\|
		h(\stdata)  - \Expect[\stotherdata \mid \stdata] \bigr\|_{\OtherSpace}^2
		}\\
		+
		2
		\Bigl\langle
			h(\stdata) - \Expect[\stotherdata \mid \stdata],
			\Expect[\stotherdata \mid \stdata] - \stotherdata
		\Bigr\rangle_\OtherSpace
		+
		\bigl\| \stotherdata  - \Expect[\stotherdata \mid \stdata] \bigr\|_{\OtherSpace}^2.		
	\end{multline*}
	By the law of total expectation and the linearity of the inner product, we get
	\begin{multline*}
		\Expect
		\Bigl[ 
			2
			\Bigl\langle
			h(\stdata) - \Expect[\stotherdata \mid \stdata],
			\Expect[\stotherdata \mid \stdata] - \stotherdata
			\Bigr\rangle_\OtherSpace
		\mid
		\stdata
		\Bigr]
		\\
		= 
		2
		\Bigl\langle
		h(\stdata) - \Expect[\stotherdata \mid \stdata],
		\Expect[\stotherdata \mid \stdata] - 
		\Expect[\stotherdata \mid \stdata]
		\Bigr\rangle_\OtherSpace
		\\
		=
		2
		\bigl\langle
		h(\stdata) - \Expect[\stotherdata \mid \stdata],
		0
		\bigr\rangle_\OtherSpace
		=
		0
	\end{multline*}
	and $\bigl\| \stotherdata  - \Expect[\stotherdata \mid \stdata] \bigr\|_{\OtherSpace}^2$ is independent of $h(\stdata)$. 
	Combining all of this gives 
	\[
		\argmin_{h \colon \DataSpace \to \OtherSpace}  
		\Expect
		\Bigl[ 
			\bigl\|h(\stdata)  - \stotherdata \bigr\|_{\OtherSpace}^2 
		\Bigr]
		=
		\argmin_{h \colon \DataSpace \to \OtherSpace}  
		\Expect
		\Bigl[ 
		\bigl\|
		h(\stdata)  - \Expect[\stotherdata \mid \stdata]
		\bigr\|_{\OtherSpace}^2
		\Bigr]
	\]
	where $h^*(y) = \Expect[\stotherdata \mid \stdata]$ is the solution to the right hand side.
\end{proof}
\Cref{prop:condexp} implies in particular that minimizing Bayes risk with a loss given by the mean squared error amounts to computing the conditional mean.
This result \emph{does not} hold when the loss is the 1-norm, which would give the conditional \emph{median} instead of the conditional mean.
In a finite dimensional setting, \cref{prop:condexp} holds also when the loss is any functional that is the Bregman distance of a convex functional \cite{Banerjee:2005aa}.

In the context of Bayesian inversion, $\stsignal$ and $\stdata$ are the $\RecSpace$– and $\DataSpace$-valued random variables generating the model parameter and data, respectively. 
\Cref{prop:condexp} is then the starting point for studying the relation between the \ac{MAP} and conditional mean estimates \cite{Burger:2014aa}.
In our setting, if $h^* \colon \DataSpace \to \RecSpace$ is the estimator that minimizes Bayes risk using squared loss, then \cref{prop:condexp} (with $\stotherdata:=\stsignal$) implies that 
\begin{equation}
h^* \in \argmin_{h \colon \DataSpace \to \OtherSpace}  
		\Expect
		\Bigl[ 
			\bigl\|h(\stdata)  - \stotherdata \bigr\|_{\OtherSpace}^2 
		\Bigr]
\implies 	
	h^*(\data) 
	= 
	\Expect
	\bigl[ 
		\stsignal \mid \stdata = \data 
	\bigr]
	\quad\text{for } \data \in \DataSpace.
	\label{eq:condexp_recon}
\end{equation}
Since neural networks are universal function approximators, training a neural network using the mean squared error as loss yields an approximation of the conditional mean. 

By selecting some other regression target $\stotherdata$ we can approximate estimators other than the conditional mean.
As an example, let us consider the point-wise conditional variance which is defined as
\begin{equation}
\PointwiseVariance\bigl[ \stsignal \mid \stdata=\data \bigr]
:=
\Expect
\Bigl[
	\bigl(
		\stsignal - \Expect[ \stsignal \mid \stdata=\data ] 
	\bigr)^2 
	\mid 
	\stdata=\data 
\Bigr]
\label{eq:pointwise_variance}
\end{equation}
In the following, we show how the (point-wise) conditional variance can be estimated directly using a neural network trained against supervised data, similar to how we estimate the conditional mean. 
The key step is to re-write the conditional variance as a minimizer of the expectation of some scalar objective w.r.t. the joint law of $(\stsignal,\stdata)$.
\begin{proposition}\label{prop:condvar}
	Assume that $\DataSpace, \RecSpace$ are measurable spaces and that $\RecSpace$ is a Hilbert space. 
	The point-wise variance is then characterized by
	\begin{equation}\label{eq:CondVarExp}
	\PointwiseVariance[\stsignal \mid \stdata = \data ] 
	\in 
	\argmin_{h \colon \DataSpace \to \RecSpace}
	\Expect
	\biggl[ 
	\Bigl\|
	h(\stdata) - 
	\bigl(
	\stsignal - \Expect\bigl[ \stsignal \mid \stdata=\data\bigr]
	\bigr)^2
	\Bigr\|_\RecSpace^2 
	\biggr]
	\end{equation}
	where the minimization is taken over all $\RecSpace$-valued measurable functions on $\DataSpace$.	
\end{proposition}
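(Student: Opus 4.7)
The plan is to recognize \cref{prop:condvar} as a direct instance of \cref{prop:condexp} applied to a cleverly chosen auxiliary random variable rather than to $\stsignal$ itself. Concretely, I would set
\[
  \stotherdata := \bigl(\stsignal - \Expect[\stsignal \mid \stdata]\bigr)^2,
\]
where the squaring is understood point-wise (component-wise) in a manner compatible with the Hilbert-space structure on $\RecSpace$; in the imaging context of the paper, $\RecSpace$ is an $L^2$-type space of images and this is just the pointwise product. With this definition $\stotherdata$ is an $\RecSpace$-valued random variable and a measurable function of $(\stsignal,\stdata)$.

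The objective in \cref{eq:CondVarExp} then coincides term-by-term with the objective in \cref{prop:condexp} with the roles $\OtherSpace := \RecSpace$ and this new $\stotherdata$. Invoking \cref{prop:condexp} directly, the minimizer among all measurable $h \colon \DataSpace \to \RecSpace$ is
\[
  h^*(\data) \;=\; \Expect[\stotherdata \mid \stdata = \data]
  \;=\; \Expect\Bigl[\bigl(\stsignal - \Expect[\stsignal \mid \stdata]\bigr)^2 \,\big|\, \stdata = \data\Bigr].
\]
Comparing this expression with the definition of the point-wise conditional variance in \cref{eq:pointwise_variance} gives $h^*(\data) = \PointwiseVariance[\stsignal \mid \stdata = \data]$, which is the claimed identity.

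The only genuinely delicate point I anticipate is the well-posedness of $\stotherdata$ as an $\RecSpace$-valued random variable: the pointwise square of an element of an arbitrary Hilbert function space need not lie in the same space (for instance, squaring an $L^2$ function yields an $L^1$ function in general). For the imaging setting of the paper this is harmless because the underlying signals are essentially bounded, but a fully rigorous statement should either add an integrability/boundedness hypothesis on $\stsignal$ ensuring $\stotherdata \in \RecSpace$ almost surely, or work in an ambient space such as $L^\infty \cap L^2$. Once this bookkeeping is handled, the proof reduces to an immediate invocation of \cref{prop:condexp} combined with \cref{eq:pointwise_variance}, with no further computation required.
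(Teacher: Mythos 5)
Your proof is correct and is essentially identical to the paper's: both apply \cref{prop:condexp} with $\stotherdata := \bigl(\stsignal - \Expect[\stsignal \mid \stdata]\bigr)^2$ and identify the resulting conditional expectation with the definition in \cref{eq:pointwise_variance}. Your remark about the pointwise square possibly leaving the Hilbert space is a valid caveat that the paper itself acknowledges at the end of the same appendix, noting the argument is only fully rigorous in finite dimensions.
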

The proof follows by applying \cref{prop:condexp} with $\stotherdata := \bigl(\stsignal - \Expect\bigl[\stsignal \mid \stdata=\data\bigr]\bigr)^2$, which yields
	\[
		\argmin_{h \colon \DataSpace \to \RecSpace}
		\Expect
		\biggl[ 
		\Bigl\|
		h(\stdata) - 
		\bigl(
		\stsignal - \Expect\bigl[ \stsignal \mid \stdata=\data\bigr]
		\bigr)^2
		\Bigr\|_\RecSpace^2 
		\biggr]
		=
		\Expect
		\Bigl[ 
			\bigl(
				\stsignal - \Expect\bigl[ \stsignal \mid \stdata=\data\bigr]
			\bigr)^2
			\mid
			\stdata = \cdot
		\Bigr].
	\]

In practice we don't have direct access to samples from $\bigl(\stsignal - \Expect\bigl[\stsignal \mid \stdata=\data\bigr]\bigr)^2$, so this cannot be applied as is since we cannot compute the expectation in \cref{eq:CondVarExp}.
However, if there is access to supervised training data as in \cref{eq:TData}, then the conditional expectation in \cref{eq:CondVarExp} can be approximated by a deep neural network trained according to \cref{eq:condexp_recon}, $\Expect\bigl[ \stsignal \mid \stdata=\data\bigr] \approx \RecOp_{\recparam}(\data)$. From this training data one can then generate ``new'' training data of the form 
\[
	\Bigl(
		\bigl(\signal_i - \RecOp_{\recparam}(\data_i)\bigr)^2,
		\data_i
	\Bigr) \in \RecSpace \times \DataSpace
	\quad\text{where $(\signal_i,\data_i) \in \RecSpace \times \DataSpace$ is from \cref{eq:TData}.}
\]
This training data is random samples from a $(\RecSpace \times \DataSpace)$-valued random variable that approximately has required distribution.

Finally, the minimization in \cref{eq:CondVarExp} can be restricted to $\RecSpace$-valued measurable functions on $\DataSpace$ that are parametrized by another deep neural network architecture $\DeepVariation_{\variationparam} \colon \DataSpace \to \RecSpace$. 
Hence, the conditional point-wise variance can be estimated as $\PointwiseVariance[\stsignal \mid \stdata = \data ]  \approx \DeepVariation_{\variationparam^*}(\data)$ where $\variationparam^*$ is obtained from solving the following  training problems:
\begin{align*}
  \recparam^* \in
  & \argmin_{\recparam}
	\biggl\{
	\Expect_{(\stsignal,\stdata)} 
	\Bigl[ \bigl\|
	\stsignal - \RecOp_{\recparam}(\stdata)
	\bigr\|_\RecSpace^2 \Bigr] 
	\biggr\}
\\
\variationparam^* \in
  & \argmin_{\variationparam}
	\biggl\{ 
	\Expect_{(\stsignal,\stdata)} 	
	\Bigl[ \Bigl\|
	\DeepVariation_{\variationparam}(\stdata) - 
	\bigl(
	\stsignal - \RecOp_{\recparam^*}(\stdata)
	\bigr)^2
	\Bigr\|_\RecSpace^2 \Bigr] 
	\biggr\}.
\end{align*}

Direct estimation is a sample free method that has several advantages against posterior sampling. 
First, they are much easier to train, \acp{GAN} that are used for posterior sampling are known for being notoriously hard to train whereas learned iterative methods that underly direct estimation can be trained using standard approaches. 
Next, they are much faster. Evaluating a trained deep neural network for direct estimation requires roughly as much computational power as generating a single sample of the posterior in posterior sampling. 
Since posterior sampling requires several samples to get sufficient statistics, they will require an order of magnitude more time.

A downside with direct estimation is that a separate neural network has to be constructed and trained for each estimator. 
This is especially problematic in cases where we need to answer patient-specific questions that are perhaps unknown during training. 
Another is that direct estimation as introduced here can only be used for estimators that can be re-written as a minimizer of the expectation of some scalar objective w.r.t. the joint law of $(\stsignal,\stdata)$. 
It is well known that conditional distributions can be approximated by Edgeworth expansions that in turn contain such terms  \cite{Pedersen:1979aa}, so in principle any posterior can be approximated in this manner by a series of direct estimations.
However, the computations quickly get complicated and the computational and training related advantages of direct estimation quickly diminishes. 

Finally, results and corresponding proofs as stated in this section are not fully rigorous in the function space setting. As an example, \cref{prop:condvar} would in such a setting involve the theory of higher moments of Banach space valued random variables \cite{Janson:2015aa}, which quickly involves elaborate measure theory. On the other hand, the proofs are straightforward in finite dimensional spaces.

\section{Implementation Details}
\label{sec:implementation_details}

\subsection{Training data}
\label{sec:training_data_details}
Training data is clinical 3D helical \ac{CT} scans from the Mayo Clinic Low Dose CT challenge \cite{AAPMLowDose}.
The data was obtained using a Siemens SOMATOM Definition AS+ scanner and consists of ten abdomen \ac{CT} scans of patients with predominantly liver and lung cancer obtained at normal dose. The scanner is a 64-slice cone beam helical \ac{CT} that further enhances longitudinal resolution by a periodic motion of the focal spot in the z-direction (z-flying focal spot acquisition) \cite{Flohr:2005aa}. The x-ray tube peak voltage (kVp) was 100--\unit{120}{\kilo\volt}, depending on patient size, the exposure time was \unit{500}{\milli\second} and the tube current was 230--\unit{430}{\milli\ampere}, again depending on patient size. 

The normal dose reconstructions are obtained by applying a \ac{FBP}-type of reconstruction scheme, provided by the manufacturer of the scanner, on the full data.
To obtain the low dose images, we first subsampled data and then added noise. The original data is acquired using a 3-PI acquisition geometry \cite{Bontus:2009aa}, meaning that the helical pitch is chosen to oversample each integration line by a factor of three. We sub-sampled the data by excluding the ``upper'' and ``lower'' pitch, which corresponds to data from 1-PI acquisition geometry. This results in a sub-sampling of $33\%$. Furthermore, we split each dataset into three independent datasets by using every third angle. 
This gives a further sub-sampling by $33\%$, for a total subsampling of $\approx 10$. In addition, we added Poisson noise to the data according to \cite{AAPMLowDose} until they corresponded to $2\%$ normal dose scans, i.e. roughly 1\,000~photons per pixel. While electron noise is significant at these dose levels, we chose not to model it.

Standard \ac{FBP} was applied to the above ultra low dose data with a Hann filter with cutoff $0.4$ and the filter frequency was chosen to maximize the \ac{PSNR} of the ultra low dose reconstructions.
The 2D slice size was set to $\unit{512 \times 512}{\pixels}$ with a reconstruction diameter of 370--\unit{440}{\milli\meter} (depending on patient size) and a slice-thickness of \unit{3}{\milli\meter}.
Note that the \ac{FBP} reconstruction operator is formally not information conserving when using a cutoff (information is irreversibly lost), which technically invalidates the claim in \cref{subsec:setup} that \ac{FBP} may be used as a pre-processing step without any information loss.
However, we did not observed any adverse effects in letting $\data$ represent \ac{FBP} reconstructions rather than \ac{CT} data. 

Finally, in order to (approximately) center the images, they were linearly scaled so that zero corresponds to $\unit{0}{\hounsfield}$ and $-1$ to $\unit{-1\,000}{\hounsfield}$.
In total,  supervised training data consisted of 6\,498 pairs of semi-independent 2D images at normal and ultra low dose. 
To further augment the training data during training, we applied random flips (left-right), rotations ($\pm \unit{10}{\degree}$), adding pixel-wise dequantization noise distributed according to $\mathcal{U}(0, 1)$ \hounsfield, and a random mean-value offset distributed according to $\mathcal{N}(0, 10)$ \hounsfield.

\subsection{Neural networks}
\label{sec:architecture}

For simplicity, all networks are based on a similar \ac{CNN} architecture that consists of the following three building blocks:
\begin{itemize}
	\item Averagepooling. Mapping an $2n \times 2n$ image to a $n \times n$ image by taking the average over $\unit{2 \times 2}{\pixel}$ blocks.
	\item Pixelshuffle (also ``space to depth'') \cite{PixelShuffle}. Mapping a $n \times n$ image with $4c$ channels to a $2n \times 2n$ image with $c$ channels by spatially spreading the channels into a $2 \times 2$ block.
	\item Residual blocks \cite{ResNet}. A single residual block consists of applying batch normalization to the input, followed by a nonlinearity, convolution, batch normalization, nonlinearity and finally a convolution. This is added to a $1 \times 1$ convolution of the result of the first batch normalization. Such a block is shown in \cref{fig:residual_unit}.
\end{itemize}
Furthermore, unless otherwise stated, the \ac{CNN} uses $3 \times 3$ convolutions and leaky ReLU ($\alpha = 0.2$) non-linearities \cite{LeakyRelu}. 

For the generator $\Generator_{\genparam} \colon  \GenSpace \times \DataSpace \to \RecSpace$, direct mean estimator $\RecOp_{\recparam} \colon \DataSpace \to \RecSpace$,  and direct variance estimator $\DeepVariation_{\variationparam} \colon \DataSpace \to \RecSpace$, we used an architecture similar to U-Net \cite{UNet} combining down-sampling followed by a residual block until the image is $\unit{8 \times 8}{\pixels}$. At this point we performed up-samplings combined with concatenating skip-connections until we reach the original $\unit{512 \times 512}{\pixel}$ resolution. The network architecture is illustrated in \cref{fig:unet}. For $\RecOp_{\recparam}$ and $\DeepVariation_{\variationparam}$ the input was simply the data $\data$. 
Regarding the generator, we let the random noise $\genvar$ be white noise on $\GenSpace := \RecSpace$, so $\Generator_{\genparam} \colon \RecSpace \times \RecSpace \to \RecSpace$.
For the generator and direct mean estimator we also added an additive skip-connection from $\data$ to result \cite{UnserInverse}.

Finally, the discriminator $\Discriminator_{\discrparam}$ is parametrized using a similar network architecture but stopped at the lowest resolution ($\unit{8\times 8}{\pixels}$) and finished with two fully connected layers (\cref{fig:discriminator}).

\begin{figure}[t]
	\centering
	\includegraphics[scale=0.6]{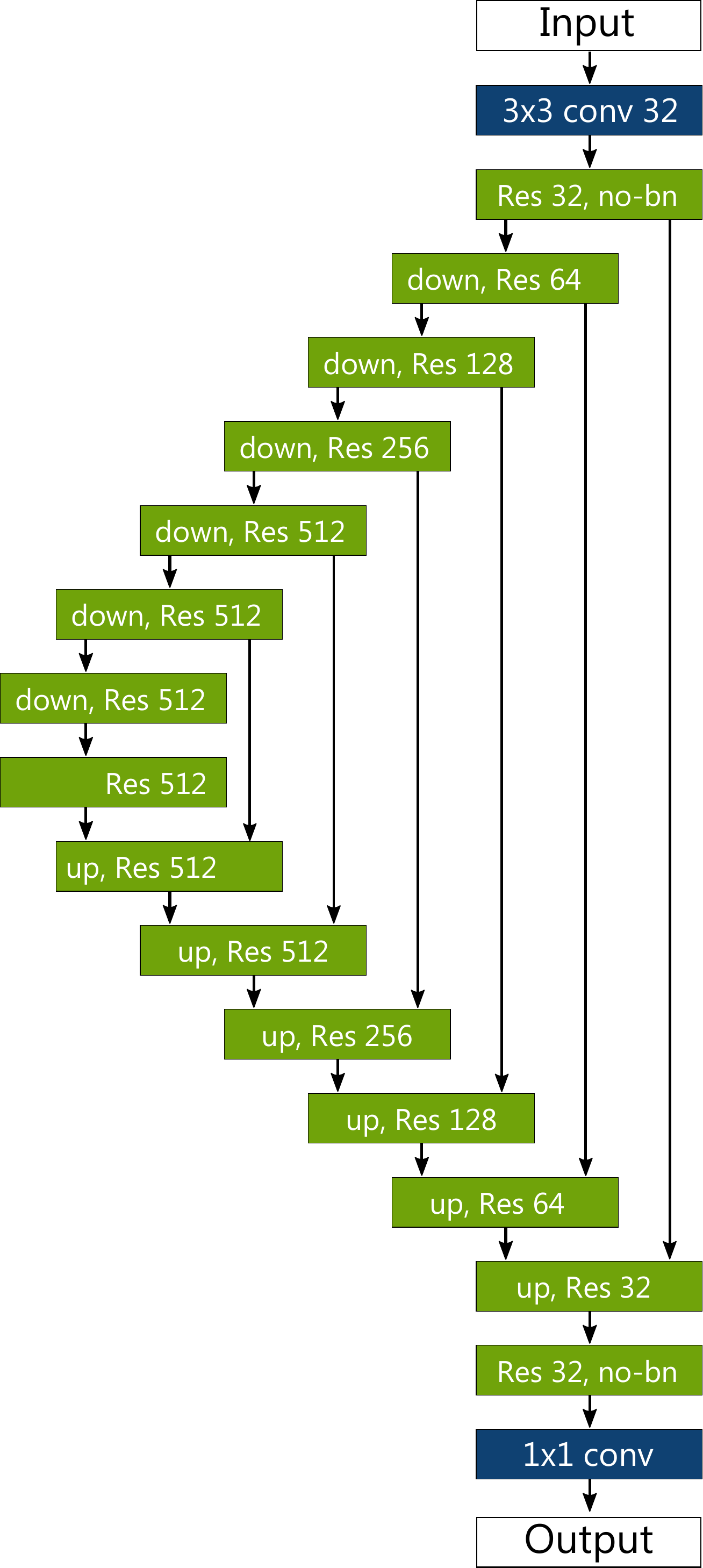}
	\caption{Residual U-Net network architecture. ``down'' indicates that a downsampling is done before the resblock. ``up'' indicates that an upsampling and a concatenation is done before the residual block. ``no-bn'' indicates that batch normalization was not used in that residual block. The input has resolution 512$^2$ while the smallest images have size 8$^2$.}
	\label{fig:unet}
\end{figure}
\begin{figure}[t]
	\centering
	\begin{subfigure}[t]{0.49\linewidth}
		\centering		
		\includegraphics[scale=0.6]{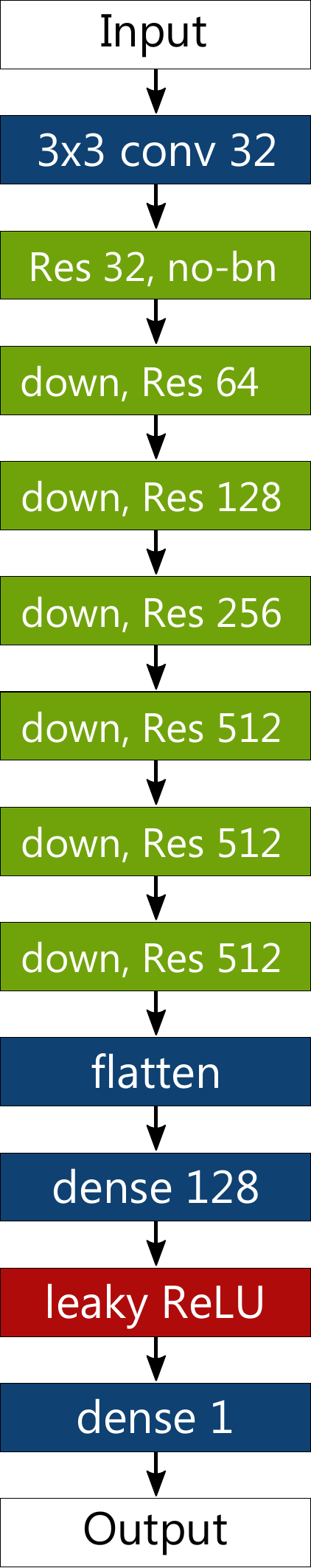}
		\caption{Discriminator network.}
		\label{fig:discriminator}
	\end{subfigure}
	\begin{subfigure}[t]{0.49\linewidth}
		\centering
		\includegraphics[scale=0.6]{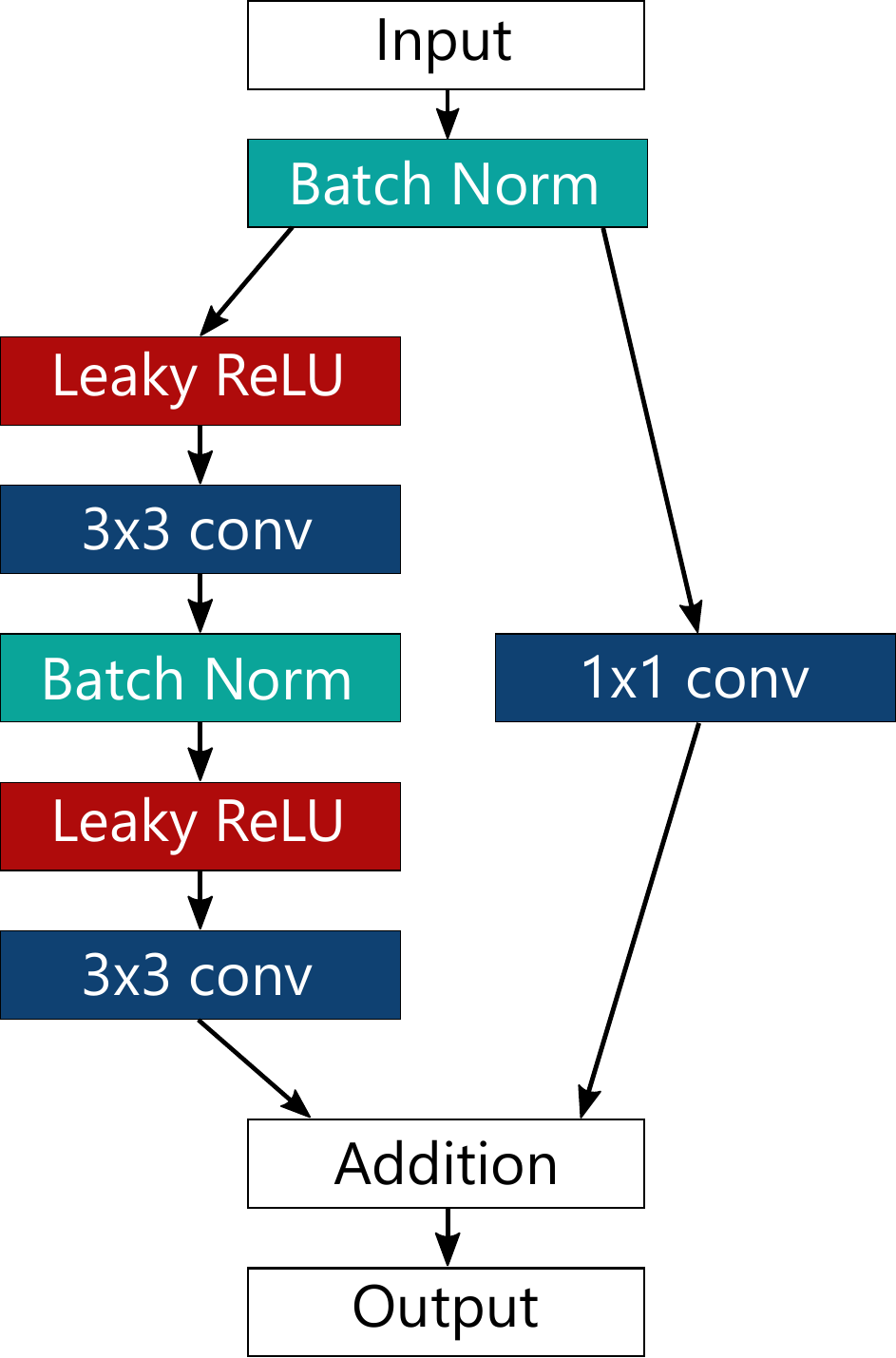}
		\caption{Residual unit used in the network. All convolutions have the same number of output channels.}
		\label{fig:residual_unit}
	\end{subfigure}
\end{figure}

\subsection{Training}
First, all training procedures involved applying a small $L_2$ regularization (weight decay) with constant $10^{-4}$ to complement the expected loss. 
Furthermore, $\widehat{\jointlaw}$ will denote the empirical probability measure derived from the supervised training data \cref{eq:TData} that has undergone data augmentation (\cref{sec:training_data_details}). 

\paragraph{Direct estimation} Training the networks in the direct estimation approach (\cref{app:sampling_free}) amounts to solving 
\begin{align*}
\recparam^* \in&\
\argmin_{\recparam}
\biggl\{
\Expect_{(\stsignal, \stdata)\sim \widehat{\jointlaw}} 
\Bigl[\bigl\|
\stsignal - \RecOp_{\recparam}(\stdata)
\bigr\|_\RecSpace^2
\Bigr]
+
10^{-4}
\|\recparam\|^2
\biggr\}
\\
\variationparam^* \in&\
\argmin_{\variationparam}
\biggl\{
\Expect_{(\stsignal, \stdata)\sim \widehat{\jointlaw}} 
\Bigl[\Bigl\|
\DeepVariation_{\variationparam}(\stdata) - 
\bigl(
\stsignal - \RecOp_{\recparam^*}(\stdata)
\bigr)^2
\Bigr\|_\RecSpace^2 \Bigr]
+
10^{-4}
\|\variationparam\|^2
\biggr\}.
\end{align*}

\paragraph{Posterior sampling}
The \ac{WGAN} loss with the conditional \ac{WGAN} discriminator (\cref{sec:minibatchDiscriminator}) is the objective in \cref{eq:FinalFormulationMiniBatch}, i.e., it is given by
\begin{multline*}
L_{\text{W}}(\genparam, \discrparam)
:= \Expect_{\subalign{& (\stsignal,\stdata) \sim \widehat{\jointlaw} \\ & \stgenvar_1, \stgenvar_2 \sim \genvarprob}} 
    \biggl[
      \frac{1}{2} \Bigl( \Discriminator_{\discrparam}\bigl( \bigl(\stsignal, \Generator_{\genparam}(\stgenvar_1,\stdata) \bigr),\stdata \bigr)  
      +
      \Discriminator_{\discrparam}\bigl( \bigl( \Generator_{\genparam}(\stgenvar_1,\stdata), \stsignal \bigr),\stdata \bigr)
       \Bigr)
\\
      -
      \Discriminator_{\discrparam}\Bigl( \bigl(\Generator_{\genparam}(\stgenvar_1,\stdata),\Generator_{\genparam}(\stgenvar_2,\stdata) \bigr), \stdata\Bigr)  
    \biggr]
\end{multline*}
The set-up in \cref{eq:FinalFormulationMiniBatch} indicates that the discriminator should always be fully trained. Following best practice, instead of minimizing $(\genparam, \discrparam) \mapsto L_{\text{W}}(\genparam, \discrparam)$ jointly, we set-up an intertwined scheme where we take one step to minimize a generator loss  $\genparam \mapsto L_{\Generator}(\genparam)$ keeping $\discrparam$ fixed, then we take five steps to minimize a discriminator loss $\discrparam \mapsto L_{\Discriminator}(\discrparam)$ keeping $\genparam$ fixed. 
In the following, we explain how to construct these generator and discriminator losses.

For training the discriminator, note that $\discrparam \mapsto L_{\text{W}}(\genparam, \discrparam)$ is invariant w.r.t. adding an arbitrary constant to the discriminator. This causes the training to become unstable since the discriminator can drift \cite{karras2018progressive}. We levitate this by adding a small penalization
\[
	L_{\text{drift}}(\discrparam)
	:=
	\Expect_{(\stsignal, \stdata)\sim \widehat{\jointlaw}} \bigl[\Discriminator_\discrparam(\stsignal, \stdata)^2\bigr].
\]
Next, as in \cite{WGAN-GP}, we enforce the 1-Lipschitz condition for the discriminator (see \cref{sec:Wasserstein}) by adding the following gradient penalty term:
\begin{equation*}
L_{\text{grad}}(\genparam, \discrparam)
:=
\Expect_{\subalign{& (\stsignal, \stdata)\sim \widehat{\jointlaw} \\ & \epsilon \sim \mathcal{U}(0, 1) \\
   & \stgenvar_1, \stgenvar_2 \sim \genvarprob}}
\biggl[
\Bigr(
\bigr\|
\Gamma_{\genparam,\discrparam}(\stsignal,\stdata,\stgenvar_1,\stgenvar_2,\stepsilon)
\bigr\|_{\RecSpace^*}
- 1
\Bigr)^2
\biggr]
\end{equation*}
where $\Gamma_{\genparam,\discrparam} \colon \RecSpace \times \DataSpace \times \GenSpace \times \GenSpace \times [0,1] \to \RecSpace^*$ is given as 
\begin{multline*}
\Gamma_{\genparam,\discrparam}(\signal,\data,\genvar_1,\genvar_2,\varepsilon) 
\\
:=
\frac{1}{2} \Biggl\{
  \partial_1 \Discriminator_\discrparam
  \biggl(
      \varepsilon \bigl(\signal, \Generator_{\genparam}(\genvar_1, \data) \bigr)
      + 
      (1 - \varepsilon) \bigl( \Generator_{\genparam}(\genvar_1, \data), \Generator_{\genparam}(\genvar_2, \data) \bigr), 
      \,\data 
   \biggr)
\\
+
  \partial_1 \Discriminator_\discrparam
  \biggl(
      \varepsilon \bigl( \Generator_{\genparam}(\genvar_1, \data), \signal \bigr)
      + 
      (1 - \varepsilon) \bigl( \Generator_{\genparam}(\genvar_1, \data), \Generator_{\genparam}(\genvar_2, \data) \bigr), 
      \,\data 
   \biggr)
\Biggr\}
\end{multline*}
with $\partial_1 \Discriminator_\discrparam$ denoting the first order partial (Banach space) derivative w.r.t. the $(\RecSpace \times \RecSpace)$-variable of $\Discriminator_\discrparam \colon (\RecSpace \times \RecSpace) \times \DataSpace \to \Real$.
Then, the loss $\discrparam \mapsto L_{\Discriminator}(\discrparam)$ for training the discriminator (for fixed generator $\genparam$) becomes 
\begin{equation}\label{eq:impl_final_discr_loss}
L_{\Discriminator}(\discrparam)
	:=
	- L_{\text{W}}(\genparam, \discrparam)
	+
	10
	L_{\text{grad}}(\genparam, \discrparam)
	+
	10^{-3}
	L_{\text{drift}}(\genparam, \discrparam)
	+
	10^{-4}
	\|\discrparam\|^2,
\end{equation}
where the scalings $10$ and $10^{-3}$ were chosen according to best practice \cite{WGAN-GP, karras2018progressive} and not hand-tuned by us.

The loss $\genparam \mapsto L_{\Generator}(\genparam)$ for training the generator (for fixed discriminator $\discrparam$) is
\begin{equation}\label{eq:impl_final_gen_loss}
L_{\Generator}(\genparam)
	:=
	L_{\text{W}}(\genparam, \discrparam)
	+
	10^{-4}
	\|\genparam\|^2.
\end{equation}

\paragraph{Optimization for training}
We used the same optimization method to train all networks (both for direct and posterior sampling), which was the ADAM optimizer \cite{Adam} with $\beta_1 = 0.5$, $\beta_2 = 0.9$ and 50\,000 training steps ($\approx 8$ epochs). 
For the batch normalization \cite{BatchNormalization}, we used decay $0.9$ and $\varepsilon = 10^{-5}$.
Moreover, we reduced the learning rate following Noisy Linear Cosine Decay \cite{NoisyLinearCosineDecay} with default parameters, starting with a learning rate of $2 \cdot 10^{-4}$

Despite our data-augmentation and regularization, we observed some over-fitting during training, and expect that better results than ours could be obtained with more data.


\section{Handcrafted Priors}\label{sec:appendix_analytical_priors}
The samples were generated from Gibbs priors of the form $e^{-S(\signal)}$ where the regularization functional $S \colon \RecSpace \to \Real$ is chosen as indicated by the caption text for the images in the top row of \cref{fig:analytic_priors}. 

An interesting feature is that many of the samples from the priors shown in \cref{fig:analytic_priors} appear to be generated by a Gaussian random field prior. 
This may contradict the conventional wisdom that the choice of prior (regularizer) has a significant impact on the end result.
However, a closer consideration shows that this behavior is to be expected from theory.
It turns out that several priors, including the \ac{TV}-prior $S(\signal) := \|\nabla \signal \|_1$, converge weakly to a standard Gaussian free field as the discretization becomes finer as shown in \cite[Theorem~5.3]{Lassas:2004aa} for the \ac{TV}-prior and in \cite{Lassas:2009aa} for Besov space priors.
The differences in the regularized solution provided by the \ac{MAP} estimator are largely due to a small set of relatively unlikely images.
In conclusion, using such priors in Bayesian inversion of large scale inverse problems has very little effect over, e.g., using a Gaussian random field prior.


\bibliographystyle{plain}
\bibliography{references}

\begin{thebibliography}{10}

\bibitem{Abadi:2016aa}
M.~Abadi, A.~Agarwal, P.~Barham, E.~Brevdo, Z.~Chen, C.~Citro, G.~S. Corrado,
  A.~Davis, J.~Dean, M.~Devin, S.~Ghemawat, I.~Goodfellow, A.~Harp, G.~Irving,
  M.~Isard, Y.~Jia, R.~Jozefowicz, L.~Kaiser, M.~Kudlur, J.~Levenberg, D.~Mane,
  R.~Monga, S.~Moore, D.~Murray, C.~Olah, M.~Schuster, J.~Shlens, B.~Steiner,
  I.~Sutskever, K.~Talwar, P.~Tucker, V.~Vanhoucke, V.~Vasudevan, F.~Viegas,
  O.~Vinyals, P.~Warden, M.~Wattenberg, M.~Wicke, Y.~Yu, and X.~Zheng.
\newblock {TensorFlow}: {Large-scale} machine learning on heterogeneous
  distributed systems.
\newblock {\em ArXiv}, cs.DC(1603.04467), 2016.

\bibitem{Adler:2017ab}
J.~Adler, H.~Kohr, and O.~{\"O}ktem.
\newblock Operator discretization library ({ODL}), January 2017.
\newblock Software available from github.com/odlgroup/odl.

\bibitem{BWGAN}
J.~Adler and S.~Lunz.
\newblock {Banach} wasserstein {GAN}.
\newblock In {\em Advances in Neural Information Processing Systems 32 ({NIPS
  2018})}, 2018.
\newblock ArXiv version at
  \href{http://arxiv.org/abs/1806.06621}{http://arxiv.org/abs/1806.06621}.

\bibitem{Adler:2018ab}
J.~Adler, S.~Lunz, O.~Verdier, C.-B. Sch{\"o}nlieb, and O.~{\"O}ktem.
\newblock Task adapted reconstruction for inverse problems.
\newblock {\em ArXiv}, cs.CV(1809.00948), 2018.

\bibitem{Adler:2017aa}
J.~Adler and O.~{\"O}ktem.
\newblock Solving ill-posed inverse problems using iterative deep neural
  networks.
\newblock {\em Inverse Problems}, 2017.

\bibitem{Adler:2018aa}
J.~Adler and O.~{\"O}ktem.
\newblock Learned primal-dual reconstruction.
\newblock {\em IEEE Transactions on Medical Imaging}, 2018.

\bibitem{Adler:2017ac}
J.~Adler, A.~Ringh, O.~{\"O}ktem, and J.~Karlsson.
\newblock Learning to solve inverse problems using {Wasserstein} loss.
\newblock {\em NIPS Workshop on Optimal Transport}, 2017.

\bibitem{Arjovsky:2017aa}
M.~Arjovsky, S.~Chintala, and L.~Bottou.
\newblock Wasserstein {GAN}.
\newblock {\em ArXiv}, stat.ML(1701.07875), 2017.

\bibitem{Arora:2018aa}
S.~Arora, A.~Risteski, and Y.~Zhang.
\newblock Do {GANs} learn the distribution? some theory and empirics.
\newblock In {\em The Seventh International Conference on Machine Learning
  ({ILCR 2018})}, 2018.
\newblock Also available at ArXiv:
  \href{http://arxiv.org/abs/}{http://arxiv.org/abs/1706.08224}.

\bibitem{Banerjee:2005aa}
A.~Banerjee, X.~Guo, and H.~Wang.
\newblock On the optimality of conditional expectation as a {Bregman}
  predictor.
\newblock {\em IEEE Transactions on Information Theory}, 51(7):2664--2669,
  2005.

\bibitem{Barp:2018aa}
A.~Barp, F.-X. Briol, A.~D. Kennedy, and M.~Girolami.
\newblock Geometry and dynamics for {Markov Chain Monte Carlo}.
\newblock {\em Annual Review of Statistics and Its Application}, 5:451--471,
  2018.
\newblock ArXiv version at
  \href{http://arxiv.org/abs/1705.02891}{http://arxiv.org/abs/1705.02891}.

\bibitem{NoisyLinearCosineDecay}
I.~Bello, B.~Zoph, V.~Vasudevan, and Q.~V. Le.
\newblock Neural optimizer search with reinforcement learning.
\newblock In D.~Precup and Y.~W. Teh, editors, {\em Proceedings of the 34th
  International Conference on Machine Learning}, volume~70 of {\em Proceedings
  of Machine Learning Research}, pages 459--468, International Convention
  Centre, Sydney, Australia, 06--11 Aug 2017. PMLR.

\bibitem{Bontus:2009aa}
C.~Bontus and T.~K{\"o}hler.
\newblock Reconstruction algorithms for computed tomography.
\newblock {\em Advances in Imaging and Electron Physics}, 151:1--63, 2009.

\bibitem{Burger:2014aa}
M.~Burger and F.~Lucka.
\newblock Maximum a posteriori estimates in linear inverse problems with
  log-concave priors are proper {Bayes} estimators.
\newblock {\em Inverse Problems}, 30:114004 (21pp), 2014.

\bibitem{Calvetti:2017aa}
D.~Calvetti and E.~Somersalo.
\newblock Inverse problems: From regularization to {Bayesian} inference.
\newblock {\em {WIREs} Computational Statistics}, 2017.

\bibitem{LEARN}
H.~{Chen}, Y.~{Zhang}, Y.~{Chen}, J.~{Zhang}, W.~{Zhang}, H.~{Sun}, Y.~{Lv},
  P.~{Liao}, J.~{Zhou}, and G.~{Wang}.
\newblock {LEARN: Learned Experts' Assessment-based Reconstruction Network for
  Sparse-data CT}.
\newblock {\em ArXiv}, physics.med-ph(1707.09636), 2017.

\bibitem{Chizat:2017aa}
L.~Chizat.
\newblock {\em Unbalanced Optimal Transport: Models, Numerical Methods,
  Applications}.
\newblock Phd thesis, Universit{\'e} Paris-Dauphine, 2017.

\bibitem{Chizat:2015aa}
L.~Chizat, G.~Peyr{\'e}, B.~Schmitzer, and F.-X. Vialard.
\newblock Unbalanced optimal transport: Geometry and {Kantorovich} formulation.
\newblock {\em ArXiv}, math.OC(1508.05216), 2015.
\newblock Accepted for publication in Journal of Functional Analysis.

\bibitem{Dashti:2016aa}
M.~Dashti and A.M. Stuart.
\newblock The {Bayesian} approach to inverse problems.
\newblock In R.~Ghanem, D.~Higdon, and H.~Owhadi, editors, {\em Handbook of
  Uncertainty Quantification}, chapter~10. Springer-Verlag, New York, 2016.

\bibitem{Deng:2017aa}
Z.~Deng, H.~Zhang, X.~Liang, L.~Yang, S.~Xu, J.~Zhu, and E.~P. Xing.
\newblock Structured generative adversarial networks.
\newblock {\em ArXiv}, cs.LG(1711.00889), 2017.

\bibitem{Evans:2002aa}
S.~N. Evans and P.~B. Stark.
\newblock Inverse problems as statistics.
\newblock {\em Inverse Problems}, 2002.

\bibitem{Flohr:2005aa}
T.G. Flohr, K.~Stierstorfer, S.~Ulzheimer, H~Bruder, A.N. Primak, and C.~H.
  McCollough.
\newblock Image reconstruction and image quality evaluation for a 64-slice {CT}
  scanner with z-flying focal spot.
\newblock {\em Medical Physics}, 32(8):2536--2347, 2005.

\bibitem{Garnelo:2018aa}
M.~Garnelo, D.~Rosenbaum, C.~J. Maddison, T.~Ramalho, D.~Saxton, M.~Shanahan,
  Y.~W. Teh, D.~J. Rezende, and A.~S.~M. Eslami.
\newblock Conditional neural processes.
\newblock {\em ArXiv}, cs.LG(1807.01613), 2018.

\bibitem{Garnelo:2018ab}
M.~Garnelo, J.~Schwarz, D.~Rosenbaum, F.~Viola, D.~J. Rezende, A.~S.~M. Eslami,
  and Y.~W. Teh.
\newblock Neural processes.
\newblock {\em ArXiv}, cs.LG(1807.01622), 2018.

\bibitem{Ghosal:2017ab}
S.~Ghosal and A.~W. van~der Vaart.
\newblock {\em Fundamentals of Nonparametric {Bayesian} Inference}.
\newblock Cambridge Series in Statistical and Probabilistic Mathematics.
  Cambridge University Press, 2017.

\bibitem{Goodfellow:2014aa}
I.~Goodfellow, J.~Pouget-Abadie, M.~Mirza, B.~Xu, D.~Warde-Farley, S.~Ozair,
  A.~Courville, and Y.~Bengio.
\newblock Generative adversarial nets.
\newblock In {\em Advances in Neural Information Processing Systems}, 2014.

\bibitem{WGAN-GP}
I.~Gulrajani, F.~Ahmed, M.~Arjovsky, V.~Dumoulin, and A.~C. Courville.
\newblock Improved training of wasserstein {GANs}.
\newblock In I.~Guyon, U.~V. Luxburg, S.~Bengio, H.~Wallach, R.~Fergus,
  S.~Vishwanathan, and R.~Garnett, editors, {\em Advances in Neural Information
  Processing Systems 30}, pages 5767--5777. Curran Associates, Inc., 2017.

\bibitem{DeepDBar}
S.~J. Hamilton and A.~Hauptmann.
\newblock Deep {D-bar}: Real time electrical impedance tomography imaging with
  deep neural networks.
\newblock {\em IEEE Transactions on Medical Imaging}, pages 1--1, 2018.

\bibitem{PockMRI}
K.~Hammernik, T.~Klatzer, E.~Kobler, M.~P. Recht, D.~K. Sodickson, T.~Pock, and
  F.~Knoll.
\newblock Learning a variational network for reconstruction of accelerated
  {MRI} data.
\newblock {\em Magnetic Resonance in Medicine}, 79(6):3055--3071, 2017.

\bibitem{GradientBoostedPAT}
A.~Hauptmann, F.~Lucka, M.~Betcke, N.~Huynh, J.~Adler, B.~Cox, P.~Beard,
  S.~Ourselin, and S.~Arridge.
\newblock Model-based learning for accelerated, limited-view {3-D}
  photoacoustic tomography.
\newblock {\em IEEE Transactions on Medical Imaging}, 37(6):1382--1393, June
  2018.

\bibitem{ResNet}
K.~He, X.~Zhang, S.~Ren, and J.~Sun.
\newblock Deep residual learning for image recognition.
\newblock In {\em 2016 IEEE Conference on Computer Vision and Pattern
  Recognition (CVPR)}, pages 770--778, June 2016.

\bibitem{Hu:2017aa}
Z.~Hu, Z.~Yang, X.~Liang, R.~Salakhutdinov, and E.~P. Xing.
\newblock Toward controlled generation of text.
\newblock {\em ArXiv}, cs.LG(1703.00955), 2017.

\bibitem{BatchNormalization}
S.~Ioffe and C.~Szegedy.
\newblock Batch normalization: Accelerating deep network training by reducing
  internal covariate shift.
\newblock {\em ArXiv}, cs.LG(1502.03167), 2015.

\bibitem{Janson:2015aa}
S.~Janson and S.~Kaijser.
\newblock Higher moments of {Banach} space valued random variables.
\newblock {\em Memoirs of the American Mathematical Society}, 238(1127):1--110,
  2015.

\bibitem{UnserInverse}
K.~H. Jin, M.~T. McCann, E.~Froustey, and M.~Unser.
\newblock Deep convolutional neural network for inverse problems in imaging.
\newblock {\em IEEE Transactions on Image Processing}, 26(9):4509--4522, Sept
  2017.

\bibitem{Johnson:2016aa}
J.~Johnson, A.~Alahi, and L.~Fei-Fei.
\newblock Perceptual losses for real-time style transfer and super-resolution.
\newblock In B.~Leibe, J.~Matas, N.~Sebe, and Welling. M., editors, {\em
  European Conference on Computer Vision (ECCV 2016): 14th European Conference,
  Amsterdam, The Netherlands, October 11-14, 2016, Proceedings, Part II},
  volume 9906 of {\em Lecture Notes in Computer Science}, pages 694--711.
  Springer-Verlag, 2016.

\bibitem{Kaipio:2005aa}
J.~P. Kaipio and E.~Somersalo.
\newblock {\em Statistical and Computational Inverse Problems}, volume 160 of
  {\em Applied Mathematical Sciences}.
\newblock Springer Verlag, 2005.

\bibitem{DirectionalWavelet}
E.~Kang, Min; J., and J.~C. Ye.
\newblock {WaveNet}: a deep convolutional neural network using directional
  wavelets for low-dose x-ray {CT} reconstruction.
\newblock {\em ArXiv}, cs.CV(1610.09736), 2016.

\bibitem{karras2018progressive}
T.~Karras, T.~Aila, S.~Laine, and J.~Lehtinen.
\newblock Progressive growing of {GANs} for improved quality, stability, and
  variation.
\newblock In {\em International Conference on Learning Representations}, 2018.

\bibitem{Adam}
D.~P. Kingma and J.~Ba.
\newblock Adam: A method for stochastic optimization.
\newblock {\em ArXiv}, cs.LG(1412.6980), 2014.

\bibitem{Kohl:2018aa}
S.~A.~A. Kohl, B.~Romera-Paredes, C.~Meyer, J.~De~Fauw, J.~R. Ledsam, K.~H.
  Maier-Hein, S.~M.~A. Eslami, D.~J. Rezende, and O.~Ronneberger.
\newblock A probabilistic {U-Net} for segmentation of ambiguous images.
\newblock In {\em Advances in Neural Information Processing Systems ({NIPS
  2018})}, 2018.
\newblock ArXiv version at
  \href{http://arxiv.org/abs/1806.05034}{http://arxiv.org/abs/1806.05034}.

\bibitem{Lassas:2009aa}
M.~Lassas, E.~Saksman, and S.~Siltanen.
\newblock Discretization-invariant {Bayesian} inversion and {Besov} space
  priors.
\newblock {\em Inverse Problems \& Imaging}, 3(1):87--122, 2009.

\bibitem{Lassas:2004aa}
M.~Lassas and S.~Siltanen.
\newblock Can one use total variation prior for edge-preserving {Bayesian}
  inversion?
\newblock {\em Inverse Problems}, 20(5):1537--1563, 2004.

\bibitem{Ledig:2016aa}
C.~Ledig, L.~Theis, F.~Huszar, J.~Caballero, A.~Cunningham, A.~Acosta,
  A.~Aitken, A.~Tejani, J.~Totz, Z.~Wang, and W.~Shi.
\newblock Photo-realistic single image super-resolution using a generative
  adversarial network.
\newblock {\em ArXiv}, cs.CV(1609.04802), 2016.

\bibitem{LeakyRelu}
A.~L. Maas, A.~Y. Hannun, and A.~Y. Ng.
\newblock Rectifier nonlinearities improve neural network acoustic models.
\newblock {\em Proceedings of the International Conference on Machine Learning
  ({ICML})}, 2013.

\bibitem{Mardani:2017aa}
M.~Mardani, E.~Gong, J.~Y. Cheng, S.~Vasanawala, G.~Zaharchuk, M.~Alley,
  N.~Thakur, S.~Han, W.~Dally, J.~M. Pauly, and L.~Xing.
\newblock Deep generative adversarial networks for compressed sensing automates
  {MRI}.
\newblock {\em ArXiv}, cs.CV(1706.00051), 2017.

\bibitem{AAPMLowDose}
C.~McCollough.
\newblock {TFG-207A-04}: Overview of the low dose {CT} grand challenge.
\newblock {\em Medical Physics}, 43(6):3759--3760, 2016.

\bibitem{ConditionalGAN}
M.~Mirza and S.~Osindero.
\newblock Conditional generative adversarial nets.
\newblock {\em ArXiv}, cs.LG(1411.1784), 2014.

\bibitem{Nguyen:2016aa}
A.~Nguyen, J.~Clune, Y.~Bengio, Y.~Dosovitskiy, and J.~Yosinski.
\newblock Plug \& play generative networks: {Conditional} iterative generation
  of images in latent space.
\newblock {\em ArXiv}, cs.CV(1612.00005), 2016.

\bibitem{Nickl:2017aa}
R.~Nickl.
\newblock On {Bayesian} inference for some statistical inverse problems with
  partial differential equations.
\newblock {\em Bernoulli News}, 24(2):5--9, 2017.

\bibitem{ImageTransformer}
N.~Parmar, A.~Vaswani, J.~Uszkoreit, L.~Kaiser, N.~Shazeer, and A.~Ku.
\newblock Image transformer.
\newblock {\em ArXiv}, cs.CV(1802.05751), 2018.

\bibitem{Parmar:2018aa}
N.~Parmar, A.~Vaswani, J.~Uszkoreit, L.~Kaiser, N.~Shazeer, A.~Ku, and D.~Tran.
\newblock Image transformer.
\newblock {\em ArXiv}, cs.CV(1802.05751), 2018.

\bibitem{Pedersen:1979aa}
B.~V. Pedersen.
\newblock Approximating conditional distributions by the mixed
  {Edgeworth}-saddlepoint expansion.
\newblock {\em Biometrika}, 66(3):597--604, 1979.

\bibitem{Pereyra:2017aa}
M.~Pereyra.
\newblock Maximum-a-posteriori estimation with {Bayesian} confidence regions.
\newblock {\em SIAM Journal of Imaging Sciences}, 10(1):285--302, 2017.

\bibitem{Purisha:2018aa}
Z.~Purisha, C.~Jidling, N.~Wahlstr{\"o}m, S.~S{\"a}rkk{\"a}, and T.~B.
  Sch{\"o}n.
\newblock Probabilistic approach to limited-data computed tomography
  reconstruction.
\newblock {\em ArXiv}, cs.CV(1809.03779), 2018.

\bibitem{Qi:2017aa}
G.-L. Qi.
\newblock Loss-sensitive generative adversarial networks on {Lipschitz}
  densities.
\newblock {\em ArXiv}, cs.CV(1701.06264), 2017.

\bibitem{Repetti:1803aa}
A.~Repetti, M.~Pereyra, and Y.~Wiaux.
\newblock Scalable {Bayesian} uncertainty quantification in imaging inverse
  problems via convex optimization.
\newblock {\em ArXiv}, stat.ME(1803.00889), 2018.

\bibitem{UNet}
O.~Ronneberger, P.~Fischer, and T.~Brox.
\newblock U-net: Convolutional networks for biomedical image segmentation.
\newblock In {\em Medical Image Computing and Computer-Assisted Intervention
  (MICCAI)}, volume 9351 of {\em Lecture Notes in Computer Science}, pages
  234--241. Springer, 2015.
\newblock ArXiv version at
  \href{http://arxiv.org/abs/1505.04597}{http://arxiv.org/abs/1505.04597}.

\bibitem{NIPS2016_6125}
T.~Salimans, I.~Goodfellow, W.~Zaremba, V.~Cheung, A.~Radford, and X.~Chen.
\newblock Improved techniques for training {GANs}.
\newblock In D.~D. Lee, M.~Sugiyama, U.~V. Luxburg, I.~Guyon, and R.~Garnett,
  editors, {\em Advances in Neural Information Processing Systems 29}, pages
  2234--2242. Curran Associates, Inc., 2016.

\bibitem{Shalev-Shwartz:2014aa}
S.~Shalev-Shwartz and S.~Ben-David.
\newblock {\em Understanding Machine Learning: From Theory to Algorithms}.
\newblock Cambridge University Press, 2014.

\bibitem{PixelShuffle}
W.~{Shi}, J.~{Caballero}, F.~{Husz{\'a}r}, J.~{Totz}, A.~P. {Aitken},
  R.~{Bishop}, D.~{Rueckert}, and Z.~{Wang}.
\newblock {Real-Time Single Image and Video Super-Resolution Using an Efficient
  Sub-Pixel Convolutional Neural Network}.
\newblock {\em ArXiv}, cs.CV(1609.05158), 2016.

\bibitem{Aarle:2016aa}
W.~van Aarle, W.~J. Palenstijn, J.~Cant, E.~Janssens, F.~Bleichrodt,
  A.~Dabravolski, J.~De~Beenhouwer, J.~K. Batenburg, and J.~Sijbers.
\newblock Fast and flexible x-ray tomography using the astra toolbox.
\newblock {\em Opt. Express}, 2016.

\bibitem{Villani:2009aa}
C.~Villani.
\newblock {\em Optimal Transport: Old and New}.
\newblock Number 338 in Grundlehren der mathematischen Wissenschaften. Springer
  Verlag, 2009.

\bibitem{Wolterink:2017aa}
J.~M. Wolterink, T.~Leiner, M.~A. Viergever, and I.~I{\v s}gum.
\newblock Generative adversarial networks for noise reduction in low-dose {CT}.
\newblock {\em IEEE Transactions on Medical Imaging}, 2017.

\bibitem{Yang:2018aa}
G.~Yang, S.~Yu, H.~Dong, G.~Slabaugh, P.~L. Dragotti, X.~Ye, F.~Liu,
  S.~Arridge, J.~Keegan, Y.~Guo, and D.~Firmin.
\newblock {DAGAN}: {Deep De-Aliasing Generative Adversarial Networks} for fast
  compressed sensing {MRI} reconstruction.
\newblock {\em IEEE Transactions on Medical Imaging}, 2018.

\bibitem{DeepADMMNET}
Y.~Yang, J.~Sun, H.~Li, and Z.~Xu.
\newblock Deep {ADMM-Net} for compressive sensing {MRI}.
\newblock In D.~D. Lee, M.~Sugiyama, U.~V. Luxburg, I.~Guyon, and R.~Garnett,
  editors, {\em Advances in Neural Information Processing Systems 29 ({NIPS
  2016})}, pages 10--18. Curran Associates, Inc., 2016.

\bibitem{cGANCT}
X.~Yi and P.~Babyn.
\newblock Sharpness-aware low dose {CT} denoising using conditional generative
  adversarial network.
\newblock {\em ArXiv}, cs.CV(1708.06453), 2017.

\bibitem{Zaheer:2017aa}
M.~Zaheer, S.~Kottur, S.~Ravanbakhsh, B.~Poczos, R.~Salakhutdinov, and
  A.~Smola.
\newblock Deep sets.
\newblock {\em ArXiv}, cs.LG(1703.06114), 2017.

\end{thebibliography}

\end{document}